\documentclass{article}

\PassOptionsToPackage{numbers, compress}{natbib}

\usepackage[preprint]{neurips_2026}

\usepackage[utf8]{inputenc} 
\usepackage[T1]{fontenc}    
\usepackage{hyperref}       
\usepackage{url}            
\usepackage{booktabs}       
\usepackage{amsfonts}       
\usepackage{nicefrac}       
\usepackage{microtype}      
\usepackage{xcolor}         

\usepackage{amsmath}
\usepackage{amssymb}
\usepackage{mathtools}
\usepackage{amsthm}
\usepackage{acronym}
\usepackage{enumitem}
\usepackage{algorithm}
\usepackage{algpseudocode}
\usepackage{subcaption}
\usepackage{multirow}

\theoremstyle{plain}
\newtheorem{theorem}{Theorem}[section]
\newtheorem{proposition}[theorem]{Proposition}
\newtheorem{lemma}[theorem]{Lemma}

\theoremstyle{definition}

\theoremstyle{remark}

\usepackage[all=normal,paragraphs=tight,floats=normal,mathspacing=normal,wordspacing=tight,charwidths=tight,mathdisplays=normal,leading=normal]{savetrees}

\newcommand{\myVec}[1]{{\boldsymbol{#1}}}
\newcommand{\myMat}[1]{{\boldsymbol{#1}}}
\newcommand{\mySet}[1]{\mathcal{#1}}

\let\oldnl\nl
\newcommand{\nonl}{\renewcommand{\nl}{\let\nl\oldnl}}%

\acrodef{ai}[AI]{artificial intelligence}
\acrodef{tta}[TTA]{test-time adaptation}
\acrodef{dl}[DL]{deep learning}
\acrodef{psd}[PSD]{positive semi-definite}
\acrodef{dnn}[DNN]{deep neural network}
\acrodef{cnn}[CNN]{convolutional neural network}
\acrodef{dcnn}[DCNN]{deconvolutional neural network}
\acrodef{mlp}[MLP]{multi-layer perceptron}
\acrodef{snr}[SNR]{signal-to-noise ratio}
\acrodef{awgn}[AWGN]{additive white Gaussian noise} 
\acrodef{ml}[ML]{machine learning} 
\acrodef{sgd}[SGD]{stochastic gradient descent} 
\acrodef{mse}[MSE]{mean-squared error}
\acrodef{rmse}[RMSE]{root mean squared error}
\acrodef{rmspe}[RMSPE]{root mean squared periodic error}
\acrodef{mle}[MLE]{maximum likelihood estimation}
\acrodef{snr}[SNR]{signal-to-noise ratio}
\acrodef{aoa}[AoA]{Angle of Arrival}
\acrodefplural{aoa}[AoAs]{Angles of Arrival}
\acrodef{em}[EM]{electromagnetic}
\acrodef{ula}[ULA]{uniform linear array}
\acrodef{em}[EM]{Electromagnetic}
\acrodef{mimo}[MIMO]{multiple-input multiple-output}
\acrodef{evd}[EVD]{eigenvalues decomposition}
\acrodef{bong}[BONG]{Bayesian online natural gradient}
\acrodef{ssm}[SSM]{state-space model}
\acrodef{ekf}[EKF]{extended Kalman filter}
\acrodef{ber}[BER]{bit error rate}
\acrodef{bbb}[BBB]{Bayes-by-backprop}
\acrodef{blr}[BLR]{Bayesian learning rule}
\acrodef{bog}[BOG]{Bayesian online gradient}
\acrodef{bce}[BCE]{binary cross-entropy}
\acrodef{elbo}[ELBO]{evidence lower bound}
\acrodef{ou}[OU]{Ornstein--Uhlenbeck}
\acrodef{dlr}[DLR]{diagonal plus low-rank}
\acrodef{cfo}[CFO]{carrier frequency offset}

\acrodef{name}[AURA]{Adaptive Update through Representation Adaptation}

\title{Online Learning via Learned Latent Bayesian Tracking}

\author{%
  Guy Gerson\\
  School of ECE\\
  Ben-Gurion University\\
  Be`er-Sheva, Israel \\
  \texttt{guygers@post.bgu.ac.il} \\
  \And  
  Tomer Raviv\\
  School of ECE\\
  Ben-Gurion University\\
  Be`er-Sheva, Israel \\
  \texttt{tomerraviv95@gmail.com} \\
  \And
  Nir Shlezinger \\
  School of ECE\\
  Ben-Gurion University\\
  Be`er-Sheva, Israel \\
  \texttt{nirshl@bgu.ac.il}  \\ 
  \And
    Tirza Routtenberg \\
  School of ECE\\
  Ben-Gurion University\\
  Be`er-Sheva, Israel \\
  \texttt{tirzar@bgu.ac.il} \\
  \And
  Osvaldo Simeone \\
 Institute for Intelligent Networked Systems (INSI)\\ 
 Northeastern University London \\
  London,  U.K \\ 
  \texttt{o.simeone@northeastern.edu}\\
}

\begin{document}

\maketitle

\begin{abstract}
Online learning in non-stationary environments requires models to adapt rapidly from streaming data under strict computational constraints. A principled approach casts online learning as Bayesian state tracking, where model parameters are updated sequentially via Bayesian filtering. However, applying Bayesian filters directly to modern deep models is computationally prohibitive due to the high dimensionality of parameter space, forcing existing methods to rely on restrictive approximations or manually designed low-dimensional subspaces. In this work, we identify the absence of a suitable low-dimensional dynamical representation as the core bottleneck in Bayesian filtering-based online learning. 
Accordingly, we propose {\em \ac{name}}, a meta-learning framework that learns offline a low-dimensional latent state-space model governing the evolution of optimal model parameters under distribution shift.
Online adaptation is then performed via extended Kalman filtering in this learned latent space followed by the reconstruction of the full model parameters through a learned lifting map, enabling efficient single-step online adaptation while preserving model expressiveness.
Evaluated on online adaptation of neural wireless receivers under time-varying channels and on non-stationary image classification, \ac{name} shows substantial improvements in adaptation speed, accuracy, and computational efficiency over existing online learning and Bayesian filtering baselines, demonstrating that an adaptation-aware latent geometry is beneficial for effective Bayesian online learning in high-dimensional models.
\end{abstract}
\acresetall

\section{Introduction}
Modern machine learning systems are increasingly deployed in environments where the data-generating distribution evolves over time~\cite{cossu2026practical}. Such non-stationarity arises in a wide range of settings, including adaptive signal processing, wireless communications, and vision systems under distribution shift~\cite{galashov2024non}. In these scenarios, a learning model must continuously adapt to remain aligned with the current distribution, often under strict computational and latency constraints. These considerations pose the challenge of how to enable {\em rapid and reliable adaptation} from limited, streaming data~\cite{giannini2024streaming}.

A common approach to adaptation in non-stationary environments relies on online optimization, where model parameters are updated sequentially using stochastic gradient-based methods as new data arrives~\cite{duchi2011adaptive,hoi2021online}. While effective in many settings, such approaches lack a principled mechanism for uncertainty-aware updates and typically require multiple gradient steps and careful tuning to ensure stability.
This renders them poorly suited to regimes where adaptation must occur within a small number of samples under stringent computational and latency constraints.
An alternative paradigm, dating back over two decades~\cite{wan2000unscented}, casts online learning as a form of Bayesian state estimation (tracking), where the model parameters are treated as a state vector evolving over time and updated recursively using Bayesian filters. This perspective offers an appealing route toward rapid online adaptation, as Bayesian filters, and particularly Kalman-type filters, naturally yield principled single-step prediction--correction updates that propagate uncertainty~\cite{sarkka2023bayesian}. However, applying such methods directly to modern deep models is computationally prohibitive, since the complexity of Bayesian filtering scales at least quadratically (and often cubically) with the dimension of the state, i.e., the number of model parameters in our context. Consequently, existing Kalman-based online learning methods have resorted to two classes of approximations: $1)$ structural constraints on the covariance, such as diagonal~\cite{Chang2022on} or low-rank parameterizations~\cite{Chang2023low,jones2024bayesian,gusakov2025rapid}; and $2)$ representational restrictions, such as confining adaptation to predefined parameter subsets or relying on manually specified dynamical models~\cite{cartea2026detecting,titsias2024kalman,duran-martin2025BONE}. While these approximations reduce complexity, they substantially limit adaptation performance and often remain too costly for large-scale real-time learning.

\paragraph{Contributions.}
This work identifies the absence of a suitable \emph{latent dynamical state-space representation} as the core bottleneck in Kalman-based online learning. We argue that the main challenge is not the Bayesian filter itself, but rather the lack of a low-dimensional dynamical representation in which filtering can be performed efficiently without sacrificing model expressiveness.
Motivated by this observation, our contributions are as follows:
\begin{itemize}
\item \textbf{Latent Bayesian online adaptation.} 
We propose {\em \ac{name}}, a framework for rapid online learning,
which is performed in a learned low-dimensional latent space via single-step \ac{ekf}~\cite{schmidt1981kalman}, with the full model parameters recovered through a lifting map. 
 This decouples the computational cost of filtering, which scales with the latent dimension, from the expressiveness of the underlying model. The \ac{ekf} is selected for its simplicity and end-to-end differentiability~\cite{greenberg2023optimization,xu2024ekfnet}.


\begin{figure}
    \centering
    \includegraphics[width=\linewidth]{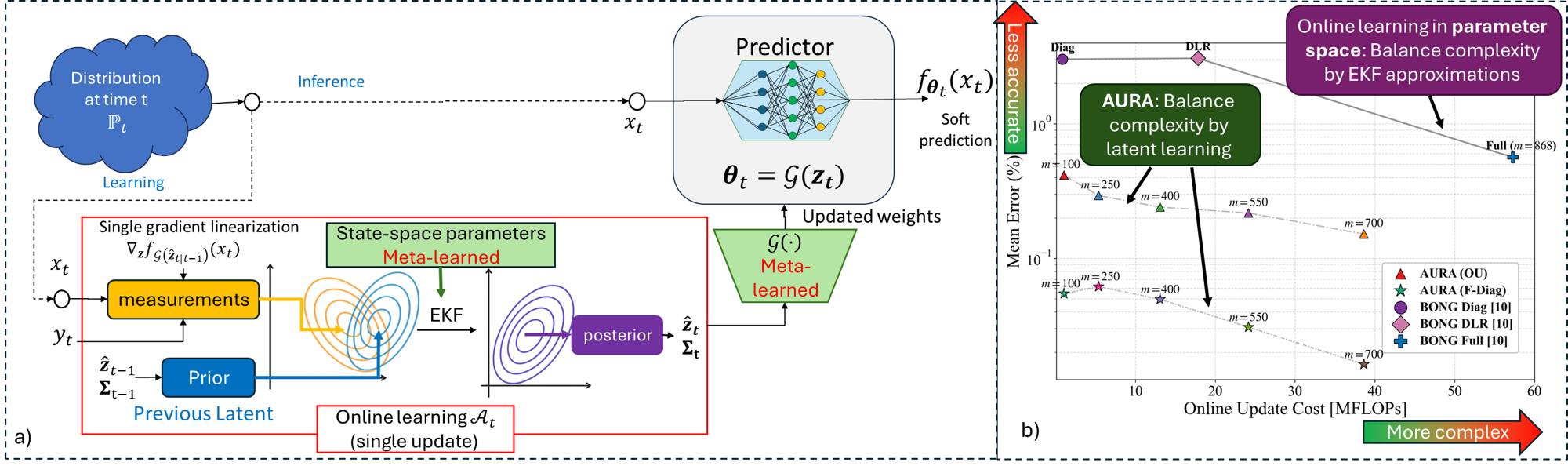}
    \caption{Proposed methodology: $a)$ \ac{name} pipeline via \ac{ekf} in meta-learned latent state-space representation (marked in \textcolor{red}{red} fonts); $b)$ Error--complexity trade-off for deep receiver adaptation, showing \ac{name}'s enhanced tradeoff compared to altering the learning algorithm in parameter space.
    }
    \label{fig:OverallView1}
\end{figure}


\item \textbf{End-to-end meta-learning of the latent adaptation space.} Rather than manually setting the state-space representation governing the \ac{ekf} operation~\cite{durbin2012time}, \ac{name} learns the latent dynamics, and the lifting map from offline non-stationary trajectories, optimizing them end-to-end for online adaptation performance. Accordingly, \ac{name} effectively meta-learns a geometry of parameter space tailored for rapid online adaptation. This yields the online learning paradigm illustrated in Fig.~\ref{fig:OverallView1}, where fast and reliable adaptation emerges from the interplay between structured Bayesian updates and a learned low-dimensional representation. 

\item \textbf{Empirical validation across two domains.} 
We demonstrate the effectiveness of \ac{name} in settings involving non-stationary data. In particular, we consider online adaptation of neural wireless receivers under time-varying channels,
where rapid adaptation is essential~\cite{raviv2023adaptive}, as well as image classification with non-stationary distortion. Our results show that \ac{name} consistently 
enables accurate adaptation using a single low-complexity update step per sample.
\end{itemize}

\section{Related Work}
\label{sec:related}

\paragraph{Streaming and Online Learning.}
Learning from non-stationary data streams is an active area of research~\cite{cossu2026practical}, where the primary objective is to rapidly adapt to evolving data distributions~\cite{hoi2021online}. In such settings, data arrives sequentially and may exhibit concept drift~\cite{lu2018learningFull}, rendering previously learned models obsolete. A wide range of approaches have been proposed, including online gradient-based methods~\cite{biehl1995learning}, adaptive optimizers~\cite{bartlett2007adaptive}, and proximal methods~\cite{dixit2019online}.  
These approaches focus on asymptotic performance, aiming to track the evolving optimum up to bounded error rather than ensuring reliable performance at each time instance, which can lead to suboptimal behavior under rapidly changing distributions~\cite{simonetto2020time}.
A related body of work considers adaptation settings in which the learner is provided with a small batch or task-specific dataset at each adaptation stage, rather than individual streaming samples. This includes incremental and continual learning methods, which update models sequentially across tasks or domains while mitigating catastrophic forgetting~\cite{parisi2019continual,de2021continual}, as well as \ac{tta} and continual \ac{tta} approaches that adapt deployed models using unlabeled test batches under distribution shift~\cite{wang2021tent,wang2022continual,niu2023towards,schirmer2025temporal}. Related paradigms further include gradual and temporal domain adaptation methods, which exploit temporally ordered shifts across adaptation domains~\cite{kumar2020understanding,bai2023temporal}. 
These approaches typically rely on batch-level statistics or iterative optimization over multiple samples, making them less suitable for streaming online with strict latency constraints.

\paragraph{Bayesian Filtering for Online Learning.}
A complementary line of work treats model parameters as a time-evolving state vector and performs adaptation via Bayesian filtering. Classical approaches based on Kalman filtering for training neural networks date back to early works on nonlinear Kalman extensions~\cite{wan2000unscented}. More recent works have extended this perspective to modern deep models, for instance, by reinterpreting stochastic gradient updates through Kalman-style uncertainty propagation~\cite{xia2025koala}, or by combining deep representations with state-space models for online classification~\cite{titsias2024kalman}. 
In parallel, several works develop filtering-based approaches for \emph{Bayesian neural networks}, including Kalman-based posterior tracking~\cite{gusakov2025rapid}, subspace and last-layer Bayesian updates~\cite{cartea2026detecting}, and natural-gradient-based variants~\cite{jones2024bayesian}. While these methods provide principled uncertainty-aware updates, their applicability to deep models is fundamentally limited by the computational complexity of Bayesian filtering, which scales at least quadratically with the number of parameters. As a result, proposed implementations rely on restrictive structural covariance approximations, such as diagonal~\cite{Chang2022on} or \ac{dlr} covariances~\cite{Chang2023low}, or confine adaptation to manually predefined subspaces~\cite{duran-martin2025BONE}.
In contrast, our approach explicitly learns a low-dimensional latent state-space and its transformation to parameter space tailored for online adaptation, enabling efficient Kalman-based updates while preserving the expressiveness of the underlying model.

\section{System Model and Problem Formulation}
\label{sec:System Model and Preliminaries}

\paragraph{Time-Varying Data Distribution.}
We consider a multi-class classification problem with an input space $\mathcal{X} \subseteq \mathbb{R}^D$ and a finite label space $\mathcal{Y} = \{1,\ldots,C\}$, in a sequential setting with temporal distribution shifts. Let $t$ denote a discrete time index, and assume that at each time step, an input-label pair $(x_t,y_t) \in \mathcal{X} \times \mathcal{Y}$ is drawn from a time-dependent distribution $\mathbb{P}_t(x,y)$. We consider a general form of non-stationarity in which the data-generating distribution may evolve continuously over time. This formulation captures a broad range of temporal variations, including gradual drifts as well as more abrupt changes in the statistics of the data-generating process. 

\paragraph{Predictive Model.}
Let $f_{\myVec{\theta}}:\mathcal{X}\to\Delta^{C-1}$ denote a predictive model parameterized by $\myVec{\theta}\in\mathbb{R}^d$, where $\Delta^{C-1}$ is the  probability simplex over the
label space $\mathcal{Y}$. Given an input $x\in\mathcal{X}$, the model outputs predicted class probabilities represented by the probability mass function $f_{\myVec{\theta}}(x) \in \Delta^{C-1}$ over $\mathcal{Y}$. 
As the underlying distribution evolves over time, a fixed parameterization $\myVec{\theta}$ may become suboptimal, necessitating adaptation to maintain accurate predictions.

\paragraph{Problem Formulation.} 
We consider a {\em streaming online learning} setting~\cite{cossu2026practical} where the predictive model must continuously adapt to a time-varying data-generating distribution. At each time step $t$, the learner observes a labeled sample $(x_t,y_t) \sim \mathbb{P}_t$ and updates the model parameters accordingly. Denoting by $\myVec{\theta}_t$ the model parameters after the update at time $t$, the adaptation is performed via an update rule 
\begin{equation}
\label{eqn:adaptation}
\myVec{\theta}_t = \mathcal{A}_t\big( x_t, y_t\big),
\end{equation}
where $\mathcal{A}_t$ is an online update operator which can have memory, e.g., depend on the previous parameterization $\myVec{\theta}_{t-1}$.  
The updated predictor $f_{\myVec{\theta}_t}$ is then used for inference under the current distribution $\mathbb{P}_t$. This formulation gives rise to two main challenges:
\begin{enumerate}[label={\em C\arabic*}]
    \item \label{itm:data} Adaptation must be carried out sequentially with limited data.
    \item \label{itm:rapid} The update rule must be computationally efficient and capable of rapid adaptation.
\end{enumerate}
To support the design of $\mathcal{A}_t$ under \ref{itm:data}-\ref{itm:rapid}, we assume access to an auxiliary labeled dataset comprising multiple non-stationary $T$-length trajectories.
Specifically, this dataset, which reflects the non-stationary nature of the environment, is given by
 \begin{equation} \label{eq:dataset} \mathcal{D} = \left\{ \mathcal{D}^{(i)} \right\}_{i=1}^{|\mathcal{D}|}, \qquad \mathcal{D}^{(i)} = \left\{ \left(x_{t,b}^{(i)},y_{t,b}^{(i)}\right) \right\}_{t=1,b=1}^{T,B}, \end{equation} 
 where each trajectory \(\mathcal{D}^{(i)}\) consists of \(T\) sequential time steps, and each time step \(t\) contains \(B\) labeled samples drawn from the corresponding time-varying distribution. These offline trajectories capture diverse distribution shifts and are used solely to design or learn the online adaptation mechanism prior to deployment. During deployment, the learner receives streaming samples sequentially and adapts online.





\section{Proposed Method: \ac{name}}
\label{sec:method} 
We now present \ac{name}, our proposed framework for rapid online learning in non-stationary environments. Our design is motivated by the observation that Bayesian filtering provides a principled mechanism for sequential uncertainty-aware adaptation from limited data (thus well
suited to \ref{itm:data}).  However, the direct application of Bayesian filtering to modern predictive models is computationally prohibitive due to the high dimensionality of parameter space (thus failing to address \ref{itm:rapid}). 
Accordingly, the key idea underlying \ac{name} is to perform online adaptation via Bayesian filtering in a \emph{learned low-dimensional latent space} rather than in the original parameter space. Specifically, we parameterize the evolving model parameters through a latent state-space representation, where online adaptation is conducted by tracking this latent state using a single-step \ac{ekf}-based update, and the adapted full model parameters are recovered through a learned lifting map (Subsection~\ref{ssec:Online}).
Since such a latent state-space representation is generally not available {\em{a priori}}, \ac{name} jointly learns the latent parameterization and its governing dynamics from offline non-stationary training trajectories (Subsection~\ref{ssec:Offline}). This results in a meta-learning procedure~\cite{hospedales2021meta} in which the representation is optimized specifically to facilitate rapid and reliable online adaptation under the update rule induced by Bayesian filtering.

\subsection{Latent Bayesian Online Adaptation}
\label{ssec:Online}
 At the core of \ac{name} lies the idea of performing online adaptation via Bayesian filtering in a learned low-dimensional latent space. Specifically, instead of trying to directly learn the desired high-dimensional model parameters for the current distribution $\mathbb{P}_t$, denoted by $\myVec{\theta}_t^\star \in \mathbb{R}^d$, we represent them through a latent variable (``state")  $\myVec{z}_t \in \mathbb{R}^m$, where $m \ll d$, together with a lifting map
$\mathcal{G}:\mathbb{R}^m \rightarrow \mathbb{R}^d$,
such that the desired parameters at time $t$ are given by
\[
\myVec{\theta}_t^\star = \mathcal{G}(\myVec{z}_t).
\]
The latent state $\myVec{z}_t$ is designed to capture the adaptation-relevant degrees of freedom of the predictor, enabling efficient online updates in a compact representation space.

\paragraph{Latent State-Space Representation.}
We model the evolution of the latent adaptation state via a stochastic linear dynamical system,
\begin{equation}
\myVec{z}_{t+1} = \myVec{F}\myVec{z}_t + \myVec{v}_t,
\label{eq:latent_dynamics}
\end{equation}
where $\myVec{F}\in\mathbb{R}^{m\times m}$ is a diagonal state transition matrix, and $\myVec{v}_t$ is 
process noise with covariance $\myMat{Q}$. This model provides a tractable surrogate for the temporal evolution of the adaptation state.
In particular, the setting $\myVec{F}=\gamma\myMat{I}$ with $\gamma <1$ is used in the common \ac{ou}-type dynamics~\cite{kurle2020}. 

The learner observes the latent state only indirectly through streaming labeled samples $(x_t,y_t)\sim\mathbb{P}_t$. We model the observation process as
\begin{equation}
{\text{OneHot}}(y_t)
=
f_{\mathcal{G}(\myVec{z}_t)}(x_t)
+
\myVec{w}_t,
\label{eq:observation_model}
\end{equation}
where ${\text{OneHot}}(\cdot)$ denotes one-hot encoding and $\myVec{w}_t$ captures the discrepancy between the predicted class probabilities of the desired model and the observed label. Motivated by the fact that, under cross-entropy training, the mismatch of the optimal probabilistic predictor is zero-mean and uncorrelated with the estimate, we model $\myVec{w}_t$ as zero-mean observation noise with covariance $\myMat{R}$.\footnote{Formal justification of this modeling assumption is provided in Appendix~\ref{app:proofs}.} 
is chosen as a regularized positive-definite covariance, which ensures numerical stability of the learning procedure despite the simplex structure of the observation representation.

\paragraph{EKF-Based Online Adaptation.}
Given the latent state-space representation \eqref{eq:latent_dynamics}--\eqref{eq:observation_model}, \ac{name} performs online adaptation via extended Kalman filtering.  The \ac{ekf} is especially  suited for the considered setting for several reasons: 
$(i)$ It performs adaptation using a {\em single} local linearization and update per time step, relying on one gradient (Jacobian) computation;
$(ii)$ While the complexity of the \ac{ekf} scales quadratically with the state dimension, here the state corresponds to the low-dimensional latent vector dimension ($m$) rather than the full parameter dimension ($d$), rendering the resulting updates feasible under \ref{itm:rapid}; 
$(iii)$ Compared with alternative Bayesian filters (e.g., unscented Kalman filters~\cite{wan2000unscented}, cubature Kalman filters~\cite{arasaratnam2009cubature}, or particle filters~\cite{djuric2003particle}), the \ac{ekf} admits a particularly simple and efficient implementation, while retaining strong performance in smooth nonlinear tracking.

The \ac{ekf} tracks the Gaussian approximation of the posterior distribution of the latent state via its mean $\hat{\myVec{z}}_t$ and covariance $\myMat{\Sigma}_t$. At each time step, it first predicts the latent state according to~\cite{durbin2012time}
\begin{equation}
\hat{\myVec{z}}_{t|t-1}
=
\myVec{F}\hat{\myVec{z}}_{t-1},
\qquad
\myMat{\Sigma}_{t|t-1}
=
\myVec{F}\myMat{\Sigma}_{t-1}\myVec{F}^\top+\myMat{Q}.
\label{eq:predict}
\end{equation}

The predicted states are then corrected using the incoming labeled observation via
\begin{equation}
\hat{\myVec{z}}_t
=
\hat{\myVec{z}}_{t|t-1}
+
\myMat{K}_t
\Big(
{\text{OneHot}}(y_t)
-
f_{\mathcal{G}(\hat{\myVec{z}}_{t|t-1})}(x_t)
\Big), \quad 
\myMat{\Sigma}_t
=
\myMat{\Sigma}_{t|t-1}
-
\myMat{K}_t\myMat{H}_t\myMat{\Sigma}_{t|t-1},
\label{eq:update}
\end{equation}
where
$
\myMat{H}_t
:=
\left.
{\partial f_{\mathcal{G}(\myVec{z})}(x_t)}/
{\partial \myVec{z}^\top}
\right|_{\myVec{z}=\hat{\myVec{z}}_{t|t-1}}
$
locally linearizes the observation model, and 
\begin{equation}
\myMat{K}_t
:=
\myMat{\Sigma}_{t|t-1}\myMat{H}_t^\top
\Big(
\myMat{H}_t\myMat{\Sigma}_{t|t-1}\myMat{H}_t^\top+\myMat{R}
\Big)^{-1}.
\label{eq:KG}
\end{equation}
The adapted predictor parameters are finally recovered through the lifting map as
$\myVec{\theta}_t
=
\mathcal{G}(\hat{\myVec{z}}_t)$.
We summarize the resulting online adaptation procedure in Algorithm~\ref{alg:latent_ekf}. At each incoming sample, \ac{name} performs a single  tracking step in latent space, followed by reconstruction of the full model parameters.

\begin{algorithm}
\caption{\ac{name} online learning at time $t$}
\label{alg:latent_ekf}
\begin{algorithmic}[1]
\Require Prior latent state $\{\hat{\myVec{z}}_{t-1}, \myMat{\Sigma}_{t-1}\}$, current data $(x_t,y_t)$,

\Require Latent state-space parameters $\myMat{F}, \myMat{Q}, \myMat{R}$; lifting map $\mathcal{G}(\cdot)$

\State Predict $\hat{\myVec{z}}_{t|t-1}$ and $\myMat{\Sigma}_{t|t-1}$ via \eqref{eq:predict}

\State Compute the Jacobian: $\myMat{H}_t = 
\left. {\partial f_{\mathcal{G}(\myVec{z})}(x_t)}/{\partial \myVec{z}^{\top}} \right|_{\myVec{z}=\hat{\myVec{z}}_{t|t-1}}$

\State Compute Kalman gain $\myMat{K}_t$ via \eqref{eq:KG}
\State Update $\hat{\myVec{z}}_t$ and $\myMat{\Sigma}_t$ via \eqref{eq:update}
\State \Return updated parameters $\myVec{\theta}_t = \mathcal{G}(\hat{\myVec{z}}_t)$
\end{algorithmic}
\end{algorithm}

\subsection{Offline Meta-Learning of the Latent Adaptation Space}
\label{ssec:Offline}
The effectiveness of Algorithm~\ref{alg:latent_ekf} critically depends on the choice of the latent state-space representation, namely, the lifting map $\mathcal{G}(\cdot)$ and the latent dynamics parameters governing \eqref{eq:latent_dynamics}. In particular, while the latent Bayesian formulation enables efficient online adaptation, its performance hinges on identifying a representation in which the desired model evolution is well captured by the assumed low-dimensional state-space representation.
To this end, \ac{name} learns the latent adaptation space offline from non-stationary training trajectories in \eqref{eq:dataset}. Specifically, we treat the components of the latent state-space model as \emph{meta-learnable hyperparameters} of the online adaptation process, and optimize them such that the \ac{ekf}-based online learner performs effectively under distribution shift.

\paragraph{Parameterized Latent State-Space Model.}
We parameterize the lifting map $\mathcal{G}:\mathbb{R}^m\rightarrow\mathbb{R}^d$ as a neural network with learnable parameters $\myVec{\psi}_G \in \mathbb{R}^n$. In addition, we treat the diagonal state transition matrix $\myVec{F}$ as a learnable parameter, and optionally also learn the noise covariance matrices $\myMat{Q},\myMat{R}$. Collectively, we denote the meta-learned latent state-space parameters by
$
\myVec{\phi}
:=
\{\myVec{\psi}_G,\myVec{F}, \myMat{Q},\myMat{R}\}$.
Under this parameterization, the online adaptation rule induced by Algorithm~\ref{alg:latent_ekf} is written as
$
\myVec{\theta}_t
=
\mathcal{A}_t\big(x_t,y_t;\myVec{\phi}\big)$,
highlighting its dependence on the learned latent state-space model.

\paragraph{Meta-Learning Objective.}
We employ  trajectory-based meta-learning  in which each non-stationary sequence in \eqref{eq:dataset}
is treated as an adaptation task. The goal is to meta-learn $\myVec{\phi}$ such that the induced Algorithm~\ref{alg:latent_ekf} achieves low cumulative loss along each trajectory, by treating the first sample at each time step as the support set and the remaining as the query set~\cite{hospedales2021meta}.
Accordingly, we optimize the meta-objective
\begin{equation}
\label{eq:meta_objective}
\mathcal{L}_{\mySet{D}}^{\rm meta}(\myVec{\phi}) = 
\frac{1}{|\mathcal{D}|}
\sum_{i=1}^{|\mathcal{D}|}
\frac{1}{T}
\sum_{t=1}^{T}
\frac{1}{B-1}
\sum_{b=2}^{B}
\ell\!\left(
f_{\mathcal{A}_t(x_{t,1}^{(i)},y_{t,1}^{(i)};\myVec{\phi})}
\big(x_{t,b}^{(i)}\big),
y_{t,b}^{(i)}
\right),
\end{equation}
which minimizes the empirical online prediction loss incurred by the latent Bayesian adaptation. 

\paragraph{End-to-End Optimization.}
Since both the EKF recursion and the lifting map $\mathcal{G}_{\myVec{\psi}_G}(\cdot)$ are differentiable~\cite{xu2024ekfnet}, the meta-objective in \eqref{eq:meta_objective} can be optimized end-to-end using stochastic gradient-based methods by differentiating through the online adaptation trajectory. This yields a training procedure in which the latent representation and its governing dynamics are learned jointly so as to maximize the effectiveness of the resulting online adaptation rule.
The meta-learning process, summarized as Algorithm~\ref{alg:meta_learning}, can be interpreted as learning a latent geometry of parameter space specifically tailored for rapid adaptation.

\begin{algorithm}
\caption{Offline meta-learning of \ac{name} 
}
\label{alg:meta_learning}
\begin{algorithmic}[1]
\Require Offline dataset $\mathcal{D}=\{\mathcal{D}^{(i)}\}$, learning rate $\eta$,  initial  state-space parameters $\myVec{\phi}, \hat{\myVec{z}}_0, \myMat{\Sigma}_0$
\Repeat
    \State Sample mini-batch of trajectories $\{\mathcal{D}^{(i)}\}$
    
    \For{each trajectory $\mathcal{D}^{(i)}=\{(x_{t,b}^{(i)},y_{t,b}^{(i)})\}_{t,b=1,1}^{T,B}$}
        
        \For{$t=1,\dots,T$}
            \State Run Algorithm~\ref{alg:latent_ekf} on $(x_{t,1}^{(i)},y_{t,1}^{(i)})$ to obtain $\myVec{\theta}_t^{(i)} = \mathcal{A}_t(x_{t,1}^{(i)},y_{t,1}^{(i)};\myVec{\phi})$ 
        \EndFor
    \EndFor
    

    \State Update latent state-space parameters:
    $    \myVec{\phi}
    \leftarrow
    \myVec{\phi}
    -
    \eta
    \nabla_{\myVec{\phi}}
    \mathcal{L}_{\{\mathcal{D}^{(i)}\}_{i=1}^{B}}^{\rm meta}(\myVec{\phi})$
    
\Until{convergence}

\State \Return Learned latent state-space parameters $\myVec{\phi}$
\end{algorithmic}
\end{algorithm}

\subsection{Discussion and Practical Considerations}
\label{ssec:Discussion}

\paragraph{Complexity Analysis.}
We next characterize the per-step computational complexity of Algorithm~\ref{alg:latent_ekf}. The dominant computational steps are as follows:
$(i)$ \emph{Jacobian computation:} evaluating the Jacobian $\myMat{H}_t$ requires differentiating through the composition $\myVec{z}\mapsto \mathcal{G}(\myVec{z}) \mapsto f_{\mathcal{G}(\myVec{z})}(x_t)$, incurring complexity $\mathcal{O}(n)$ (as $\mathcal{G}$ is implemented via an $n$-parameter neural network);
$(ii)$ \emph{Prediction step:} propagating the latent covariance in \eqref{eq:predict} requires 
$\mathcal{O}(m^2)$ operations (as  $\myVec{F}$ is constrained to be diagonal);
$(iii)$ \emph{Kalman update:} forming the Kalman gain and updating the posterior moments requires $\mathcal{O}(m^2)$ operations, since the output dimension $C$ is typically small and independent of the model size;
$(iv)$ \emph{Parameter reconstruction:} evaluating $\myVec{\theta}_t=\mathcal{G}(\hat{\myVec{z}}_t)$ requires a forward pass through the lifting map, with complexity $\mathcal{O}(n)$.
In general, the complexity per-step of \ac{name} is of the order of
$\mathcal{O}(n+m^2)$.
Since $m\ll d$ by design, this is substantially more efficient than applying Bayesian filtering directly in the ambient parameter space, whose complexity scales at least quadratically in the number of model parameters $d$.

\paragraph{Design Trade-Offs.}
\ac{name} exposes several design parameters that govern the trade-off between performance and complexity: 
$(i)$ The latent dimension $m$ controls the expressiveness of the adaptation space: larger latent spaces can capture richer parameter variations but incur higher  complexity; 
$(ii)$ Constraining the structure of the state transition  $\myVec{F}$ (e.g., to be a scaled identity matrix) provides a mechanism for reducing complexity; 
$(iii)$ The architecture of the lifting map $\mathcal{G}(\cdot)$ controls the expressive power of the parameter reconstruction, with larger  parameterizations increasing both representational flexibility and computational cost. These design choices allow \ac{name} to be tailored to a broad range of  constraints.

\paragraph{Interpretation and Theoretical Motivation.}
Although \ac{name} employs Bayesian filtering as its online adaptation mechanism, it is not intended as a Bayesian neural network method and does not seek to maintain a posterior distribution over the full predictive model, as is common in Kalman-based Bayesian learning~\cite{jones2024bayesian,duran-martin2025BONE,gusakov2025rapid}. Rather, the Bayesian posterior maintained by \ac{name} pertains solely to the low-dimensional latent state, which is treated as a stochastic tracking variable used to guide online updates. The predictive model parameters themselves are deterministically instantiated via the lifting map  $\myVec{\theta}_t=\mathcal{G}(\hat{\myVec{z}}_t)$.  \ac{name} thus leverages Bayesian filtering as a structured mechanism for efficient online optimization, rather than for uncertainty quantification over network weights~\cite{gawlikowski2023survey}.

The use of latent Bayesian tracking in \ac{name} is motivated by the hypothesis that the evolution of optimal model parameters 
often admits a structured low-dimensional representation. Such a hypothesis is consistent with empirical observations that overparameterized neural networks often admit substantial redundancy and can be effectively adapted within low-dimensional subspaces or low intrinsic-dimensional parameterizations~\cite{li2018measuring,aghajanyan2021intrinsic}. It is further aligned with classical dimensionality reduction results, such as the Johnson--Lindenstrauss lemma~\cite{johnson1984extensions}, which imply that structured high-dimensional point sets can often be embedded into substantially lower-dimensional spaces while approximately preserving their geometry. Under suitable regularity assumptions, one can further show that gradual changes in the data-generating distribution induce correspondingly smooth variations in compressed representations of the optimal model parameters, providing support for the latent dynamical systems viewpoint adopted by \ac{name}. Formal statements and proofs of these observations are provided in Appendix~\ref{app:proofs}.

\section{Experimental Study}
\label{sec:experiments}
We evaluate \ac{name} on two complementary settings, assessing its generality and practical relevance:
$(i)$  a non-stationary image classification task under gradually evolving corruptions~\cite{press2023rdumb,wang2022continual}; and $(ii)$
 online adaptation of neural wireless receivers, a motivating real-world application where rapid millisecond-level adaptation is essential for practical deployment under time-varying channels~\cite{lin2023overview,wiesmayr2025design,dai2020deep,qualcomm5gmmwaveMobility}.
Additional ablations, robustness studies, and proof-of-concept hardware experiments are provided in the appendix. The source code is available at \textcolor{red}{\url{https://github.com/aura-online-adaptation}}. 

\subsection{General-Purpose Online Adaptation for Non-stationary Image Classification}
\label{ssec:VisionExp}
We first evaluate \ac{name} on non-stationary image classification under evolving distribution shifts~\cite{press2023rdumb,wang2022continual}.
We compare \ac{name} not only with streaming online learning methods that update from labeled samples as they arrive~\cite{titsias2024kalman,bottou1998online}, but also with batch-wise adaptation methods commonly studied in the \ac{tta} literature~\cite{schneider2020corruptions,nado2020bn,mirza2022bn-dua,lim2023ttn,marsden2024roid}. Such methods accumulate multiple test samples for adaptation, and often rely on batch statistics or batch-level optimization objectives rather than single-sample streaming updates.
\begin{table}[t]
\centering
\caption{Mean classification error (\%, 3 seeds per cell) under gradual non-stationary corruptions.
Methods are grouped by adaptation regime. Best per dataset in \textbf{bold}; second best \underline{underlined}}
\label{tab:main2}
\small
\begin{tabular}{llcccc}
\toprule
Regime & Method & \# Train Samples & MNIST-C & CIFAR-10-C & CIFAR-100-C \\
\midrule
Pre-training
& Source
& 0
& 47.93 $\pm$ 19.83
& 41.81 $\pm$ 5.62
& 37.41 $\pm$ 4.82 \\
\midrule

\multirow{9}{*}{Streaming}
& \multirow{3}{*}{Online GD~\cite{bottou1998online}}
& 5  & 7.50 $\pm$ 0.50  & 29.46 $\pm$ 5.55 & 37.93 $\pm$ 1.83 \\
&  & 10 & 5.84 $\pm$ 0.24  & 25.89 $\pm$ 0.75 & 34.00 $\pm$ 2.37 \\
&  & 15 & 4.53 $\pm$ 0.39  & 22.37 $\pm$ 1.38 & 33.22 $\pm$ 3.35 \\
\cmidrule(lr){2-6}

& \multirow{3}{*}{EKF-FC~\cite{titsias2024kalman}}
& 5  & 19.74 $\pm$ 5.09 & 30.62 $\pm$ 1.34  & 31.27 $\pm$ 1.98 \\
&  & 10 & 17.00 $\pm$ 4.96 & 28.69 $\pm$ 4.61 & 33.38 $\pm$ 2.65 \\
&  & 15 &  12.16 $\pm$ 3.06 & 26.99 $\pm$ 5.26 & 32.43 $\pm$ 1.55 \\
\cmidrule(lr){2-6}

 & \multirow{3}{*}{\textbf{AURA}\textsuperscript{$\dagger$}}  
& 5  & 2.76 $\pm$ 0.63 & 16.13 $\pm$ 1.50 & 29.47 $\pm$ 0.62 \\
&  & 10 & 2.67 $\pm$ 0.93 & 16.84 $\pm$ 0.51 & 29.65 $\pm$ 1.97 \\
&  & 15 & \textbf{1.63 $\pm$ 0.09} & \textbf{14.28 $\pm$ 2.87} & \underline{26.66 $\pm$ 3.89} \\
\midrule

\multirow{4}{*}{Batch-wise}
& BN-Test~\cite{schneider2020corruptions} & 100 & 3.97 $\pm$ 2.58  & 19.46 $\pm$ 3.34 & 31.32 $\pm$ 2.08 \\
& BN-Alpha~\cite{mirza2022bn-dua} & 100 & 38.24 $\pm$ 14.23 & 28.05 $\pm$ 2.48 & 33.06 $\pm$ 1.15 \\
& BN-EMA~\cite{nado2020bn} & 100 & 3.17 $\pm$ 0.99  & 18.57 $\pm$ 2.64 & 33.19 $\pm$ 2.15 \\
& ROID~\cite{marsden2024roid} & 100 & \underline{2.64 $\pm$ 1.37} & \underline{16.00 $\pm$ 1.29} & \textbf{26.50 $\pm$ 1.37} \\
\bottomrule
\\ \multicolumn{6}{l}{\footnotesize $\dagger$ Latent dimension $m=256, 500, 500$ for MNIST-C, CIFAR-10-C,
  CIFAR-100-C.}
\end{tabular}
\end{table}

\paragraph{Experimental Setup.}
We consider pretrained image classifiers deployed under temporally evolving corruption streams. We generate non-stationary input streams by gradually varying the mixture of multiple corruption types over time~\cite{press2023rdumb,wang2022continual}, inducing smooth distribution drift across consecutive frames. We evaluate on MNIST-C~\cite{mu2019mnistc}, CIFAR-10-C, and CIFAR-100-C~\cite{hendrycks2019benchmarking}, which are corrupted variants of MNIST~\cite{lecun2010mnist} and CIFAR-10/100~\cite{krizhevsky2009cifar}, using standard pretrained backbone architectures for each benchmark based on ResNet-18~\cite{he2016resnet}, wide ResNet-28-10~\cite{zagoruyko2016wide}, and ResNet-29~\cite{xie2017aggregated}, respectively. 

We compare against three classes of baselines. First, we consider the original model with no adaptation. Second, among \emph{streaming supervised online learning} methods, we compare $(i)$ \ac{name};  $(ii)$ supervised online gradient descent (GD) 
\cite{bottou1998online};  and $(iii)$ parameter-space Bayesian tracking of the final classification layer  (EKF-FC)~\cite{titsias2024kalman}.  EKF-FC is applied with a full covariance  on MNIST-C and CIFAR-10-C, where the corresponding $d \times d$ matrix is small enough to enable parameter-space learning; on CIFAR-100-C the classifier layer has $d \approx 10^{5}$ parameters and the full covariance becomes infeasible, so we set the EKF-FC there with a diagonal covariance approximation. Applying Bayesian online learning to the \emph{entire} image-classifier parameter space, even under aggressive covariance approximations (e.g., diagonal~\cite{Chang2022on} or \ac{dlr}~\cite{Chang2023low}), is computationally prohibitive in this setting and is therefore omitted. Third, we compare with four representative \emph{batch-wise unsupervised adaptation} methods: three normalization-based \ac{tta} approaches---BN-Test, BN-EMA~\cite{schneider2020corruptions,nado2020bn}, and BN-Alpha~\cite{mirza2022bn-dua,lim2023ttn}---and ROID~\cite{marsden2024roid}, a self-supervised continual \ac{tta} method that adapts from unlabeled inputs via diversity-weighted soft-likelihood-ratio and consistency losses. Each test stream consists of $1000$ time steps, each delivering a batch of $100$ corrupted inputs. Within a step, up to $15$ inputs come with ground-truth labels and are used by the \emph{streaming supervised} methods (Online GD, EKF-FC, \ac{name}) for adaptation.

\paragraph{Results.}
Table~\ref{tab:main2} reports classification error and the per-method learning-sample budget. \ac{name} performs competitively across all benchmarks and achieves the best performance among streaming-feasible adaptation methods. In particular, \ac{name} attains the lowest overall error on both MNIST-C and CIFAR-10-C, outperforming every batch-wise baseline despite using as few as five adaptation samples per frame. On CIFAR-100-C it ranks second overall  and substantially improves over the streaming baselines, Online GD~~\cite{bottou1998online} and EKF-FC~~\cite{titsias2024kalman}.
ROID~\cite{marsden2024roid} attains the lowest error on CIFAR-100-C, but it is structurally incompatible with the streaming setting. Its loss combines a soft-likelihood-ratio term with diversity- and certainty-based weights and a class-prior EMA, all computed from the \emph{batch statistics} of the current frame and therefore requiring multiple unlabeled samples to be well-defined. Streaming ROID is consequently feasible only through external buffering, which induces a per-batch latency penalty proportional to the buffer size. In contrast, \ac{name} adapts through a single structured latent Bayesian update on the current sample and emits a prediction before the next arrival.

\begin{figure}
    \centering
    \includegraphics[width=\linewidth]{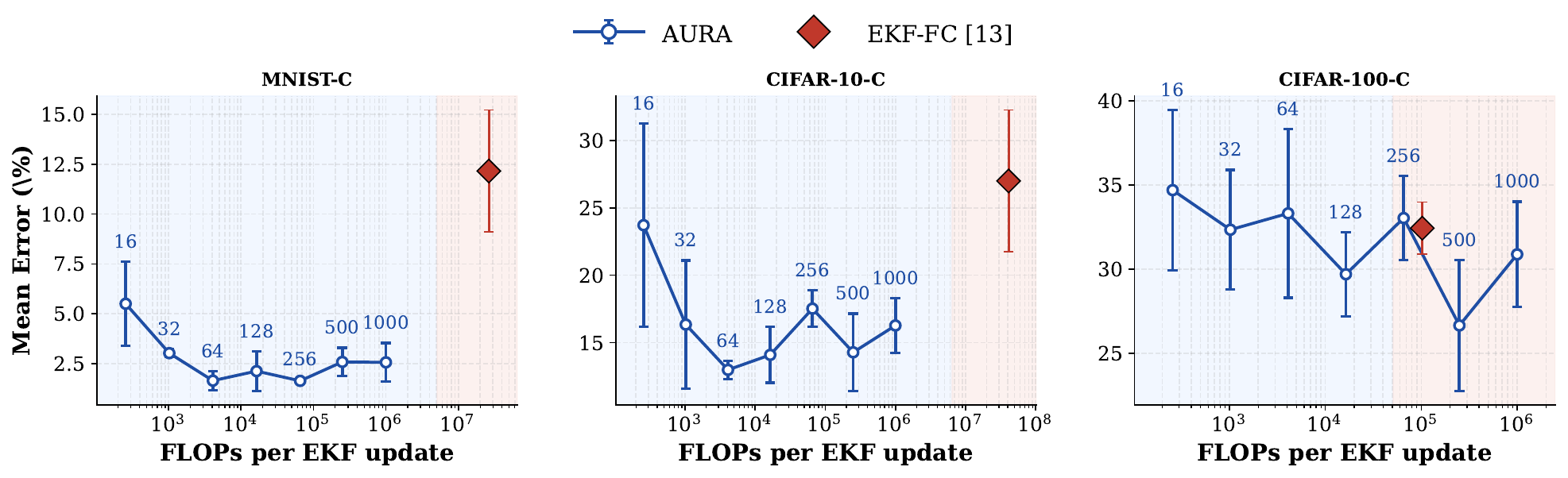}
    \caption{Complexity–performance trade-off in streaming online learning. 
    The blue curve sweeps AURA's latent dimension; the number above each marker is that $m$. 
    The red diamond is EKF-FC. Shaded regions separate the latent and parameter-space regimes. Error bars are $\pm 1$ standard deviation. 
    }
    \label{fig:vision_flops_tradeoff}
\end{figure}

\paragraph{Complexity--Performance Trade-off.}
To  assess the computational benefits of latent-space Bayesian adaptation, Fig.~\ref{fig:vision_flops_tradeoff} compares the streaming online learning complexity of \ac{name} and EKF-FC in floating-point operations (FLOPs) per update. This experiment serves both as a comparison against parameter-space adaptation and as an ablation of the complexity--performance trade-off controlled by $m$. The results show that \ac{name} consistently dominates parameter-space adaptation in the complexity--accuracy plane: for substantially lower computational cost, latent-space adaptation yields markedly better prediction accuracy than parameter-space filtering on the classifier layer. Moreover, varying  $m$ reveals the expected trade-off between expressiveness and efficiency: increasing the latent dimension initially improves performance by enriching the adaptation space, after which gains saturate and may slightly deteriorate due to over-parameterization or increased estimation noise. The selected operating points used in Table~\ref{tab:main2} are highlighted in Fig.~\ref{fig:vision_flops_tradeoff}, and are seen to lie near the empirical Pareto frontier for each dataset. These results indicate that learning a compact latent adaptation geometry makes Bayesian online learning  substantially more computationally efficient without sacrificing adaptation quality.

\subsection{Rapid Online Adaptation for Wireless Receivers}
\label{ssec:WirelessExp}
Next, we examine wireless communications, a practically important benchmark for evaluating rapid online adaptation 
\cite{raviv2023adaptive}. In particular, time-varying wireless channels constitute a prominent application where the inability to adapt neural receivers at millisecond timescales remains a major obstacle to practical deployment of neural physical layer communication systems~\cite{dai2020deep}.

\paragraph{Experimental Setup.}
We consider uplink multi-user detection with three single-antenna users and five receive antennas, using the DeepSIC neural receiver of~\cite{shlezinger2019deepsic} as the underlying predictive model. Channel realizations are obtained from QuaDRiGa~\cite{jaeckel2014quadriga} under the 3GPP Indoor Office scenario. We evaluate both linear and nonlinear channel models, where the latter incorporates hardware-induced distortions via a $\tanh$ nonlinearity (modeling nonlinearities  at the RF front-end~\cite{smaini2012rf}). 
We compare \ac{name} against representative streaming learning with online GD and \ac{ekf} variants in full parameter space which we term \ac{bong} following \cite{jones2024bayesian}, with full, diagonal, and \ac{dlr} covariance parameterizations~\cite{Chang2023low}. Offline meta-training and online evaluation are conducted on disjoint channel trajectories.
During online adaptation, the receiver is evaluated over temporally evolving channel sequences of $150$ frames. The first four frames serve for synchronization, during which 64 labeled pilot samples per frame are provided for initialization. Subsequently, the receiver enters a tracking regime in which only six labeled samples (pilots) are available per frame for adaptation, after which the adapted detector is used for inference on the remaining symbols, and evaluated in its \ac{ber}. Appendix~\ref{app:exp_details} contains full details of the training setup. 


\paragraph{Results.}
The results in Figs.~\ref{fig:ber_compare_linear_mixed}--\ref{fig:ber_compare_nonlinear_mixed} demonstrate that \ac{name} consistently outperforms all considered baselines across both linear and nonlinear QuaDRiGa scenarios. In particular, \ac{name} achieves the lowest \ac{ber} across the \ac{snr} sweep while maintaining substantially more stable temporal tracking throughout the channel evolution.
These gains are especially pronounced relative to full-parameter Bayesian filtering methods, highlighting that the benefit of \ac{name} stems not merely from employing Bayesian online updates, but from performing them in a learned low-dimensional latent adaptation space tailored for efficient tracking. This supports our central hypothesis that identifying an adaptation-aware latent geometry is critical for enabling effective Bayesian online learning in high-dimensional models.
Additional ablation studies analyzing the impact of latent dimension, latent dynamics parameterization, and robustness across channel models are provided in Appendix~\ref{app:additional}. Furthermore, Appendix~\ref{app:demo} presents a proof-of-concept hardware demonstration showing that \ac{name} enables millisecond-order self-adaptive neural WiFi receivers on software-defined radio platforms, validating the practical deployability of the proposed framework in real-time embedded systems.


\begin{figure}[t]
    \centering
    \begin{minipage}[t]{0.48\linewidth}
        \centering
        \includegraphics[
            width=\linewidth
        ]{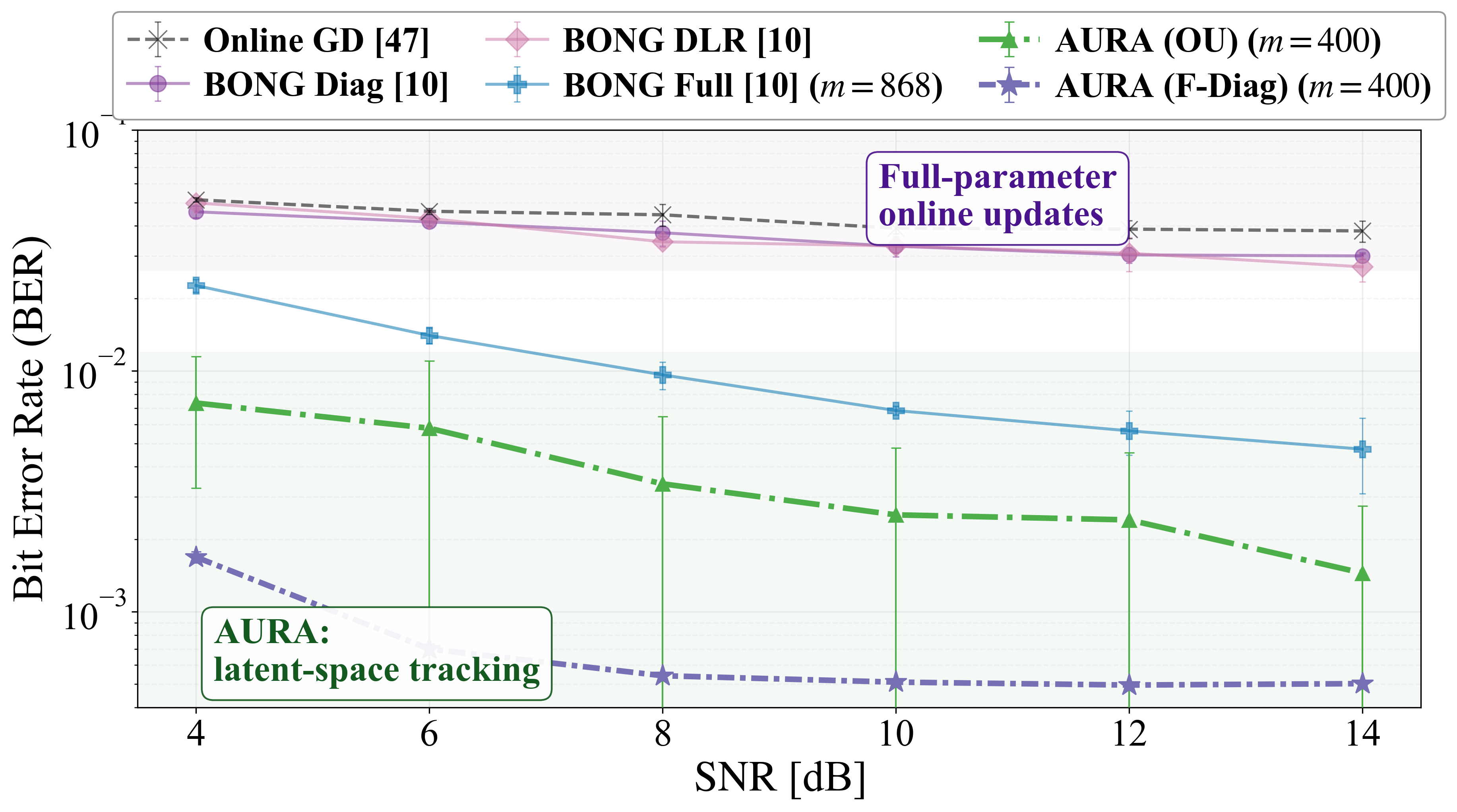}
        \caption{BER versus SNR, linear channel}
        \label{fig:ber_compare_linear_mixed}
    \end{minipage}
    \hfill
    \begin{minipage}[t]{0.48\linewidth}
        \centering
        \includegraphics[
            width=\linewidth
        ]{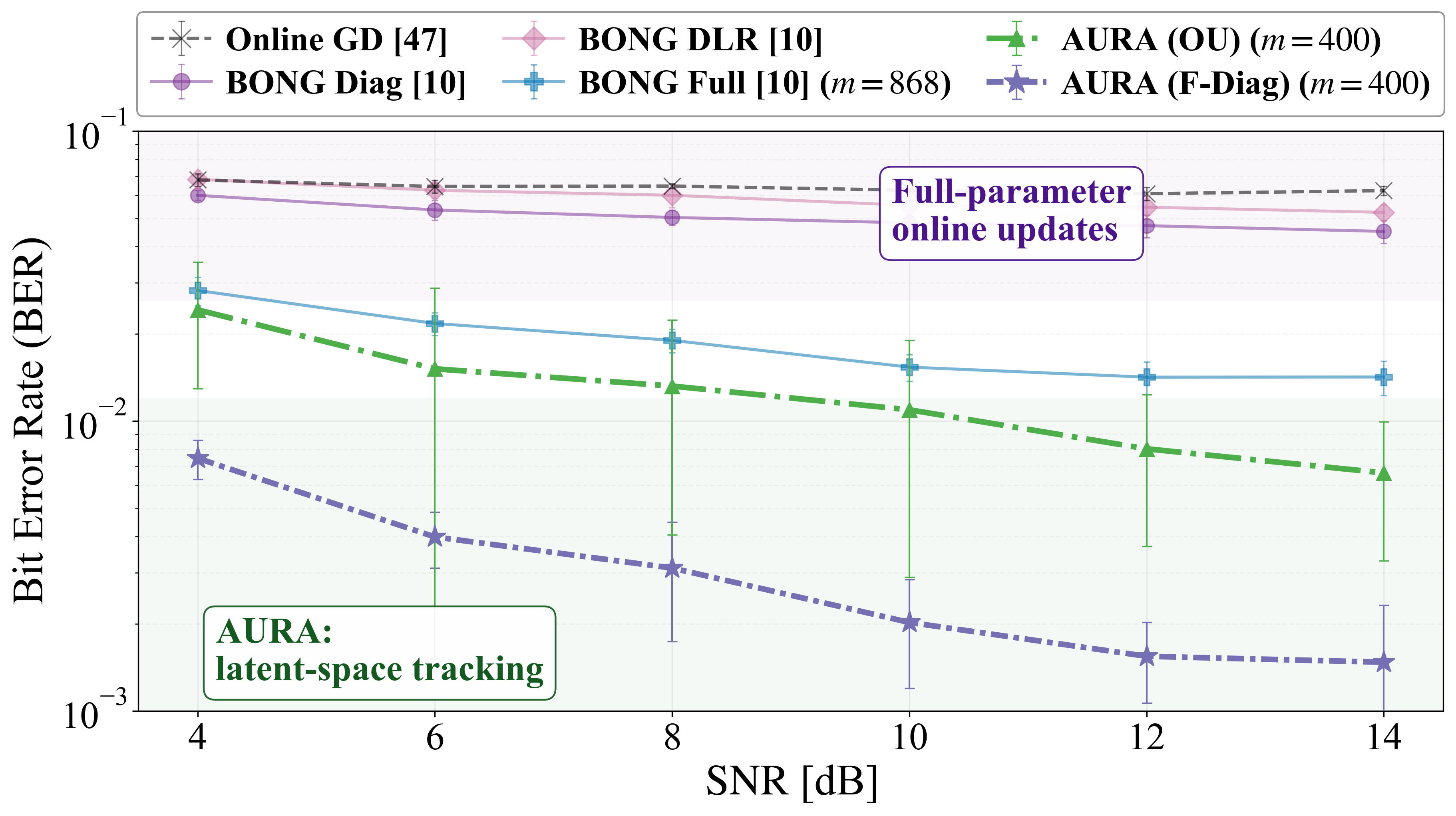}
        \caption{BER versus SNR, nonlinear channel}
        \label{fig:ber_compare_nonlinear_mixed}
    \end{minipage}
\end{figure}


\section{Conclusion}
\label{sec:conclusion}
We introduced \ac{name}, a framework for rapid online adaptation in non-stationary environments. By learning a low-dimensional state-space representation tailored for online adaptation, \ac{name} enables an efficient single-step Bayesian update. This yields a principled online learning mechanism that combines Kalman-based filtering with flexible learned representations. Experimental results demonstrate that \ac{name} achieves strong adaptation performance across diverse domains while operating under stringent computational constraints.  

\section*{Acknowledgments}

This work was supported by the Israel Science Foundation (ISF) under grant no.~3314/25, by the European Research Council (ERC) under the ERC Starting Grant No.~101163973 (FLAIR), and by the Israeli Ministry of Science and Technology. The work of O. Simeone was supported by the EPSRC (EP/W024101/1, EP/X011852/1) and by the ERC (No.~101198347).

\bibliographystyle{IEEEtran}
\bibliography{IEEEfull, mybib}

\newpage
\appendix
\section*{Appendix}

\newlength{\NumBoxWidth}
\setlength{\NumBoxWidth}{2em}

\newcommand{\outlineitem}[2]{%
  \noindent
  \hyperref[#1]{#2}%
  \nobreak\leaders\hbox to .6em{\hss.\hss}\hfill
  \nobreak\makebox[\NumBoxWidth][r]{\pageref*{#1}}%
  \par
}
\newcommand{\outlinesubitem}[2]{%
  \noindent\hspace*{1.5em}%
  \hyperref[#1]{#2}%
  \nobreak\leaders\hbox to .6em{\hss.\hss}\hfill
  \nobreak\makebox[\NumBoxWidth][r]{\pageref*{#1}}%
  \par
}

\subsubsection*{Appendix Overview}

{\setlength{\parskip}{0.2em}

\outlineitem{app:related_work}{Additional Related Work}

\outlineitem{app:proofs}{Theoretical Motivation for the Latent State-Space Formulation}
\outlinesubitem{app:latent_exist}{Existence of Low-Dimensional Model Representations}
\outlinesubitem{app:obs_noise}{Observation Model as Additive Noise Channel}
\outlinesubitem{app:bayesian_relation}{Relation to Bayesian Learning}
\outlinesubitem{app:latent_variation_proof}{Proof of Theorem~\ref{thm:LatentVariation}}
\outlinesubitem{app:LemmOrtProof}{Proof of Lemma~\ref{lem:Orthogonality}}
\outlinesubitem{app:bong_proof}{Proof of Lemma~\ref{lem:BONGtoKF}}

\outlineitem{app:exp_details}{Implementation Details of the Experiments}
\outlinesubitem{sec:settingWireless}{Non-stationary Wireless Receiver Adaptation}
\outlinesubitem{sec:settingVision}{Non-stationary Image Classification}

\outlineitem{app:additional}{Additional Experiments}
\outlinesubitem{app:wireless_ablation}{Additional Wireless Communication Experiments}
\outlinesubitem{app:vision_ablation}{Additional Non-stationary Image Classification Experiments}

\outlineitem{app:demo}{Proof-of-Concept: Adaptive Deep Software-Defined Radio}
\outlinesubitem{app:demo_setup}{Hardware Setup and Adaptive Receiver Architecture}
\outlinesubitem{app:demo_results}{Hardware Performance and Runtime Characterization}

\outlineitem{app:limitations}{Limitations}
}

\clearpage
\section{Additional Related Work}
\label{app:related_work}

\paragraph{Online Continual Learning.} 
Online continual learning (OCL) studies learning from sequential, potentially non-stationary data while retaining useful knowledge acquired from previous observations~\cite{de2021continual,cossu2026practical}. A central challenge is the stability-plasticity trade-off: adapting rapidly to new data without catastrophically overwriting previously acquired knowledge. Existing approaches address this challenge through mechanisms such as replay, regularization, and Bayesian posterior updates~\cite{nguyen2017variational,kurle2020}. \ac{name} shares with OCL the sequential, non-stationary setting and the need for rapid adaptation from limited observations, but differs in its primary objective. Rather than explicitly preserving performance on previously encountered distributions, \ac{name} aims to track the predictor that is best aligned with the current distribution. Accordingly, it does not rely on episodic replay or explicit consolidation. Instead, it meta-learns offline a latent parameterization and dynamical model that capture how effective predictors evolve, and uses them for uncertainty-aware Bayesian tracking at deployment. This distinction is particularly relevant in settings where past operating conditions become obsolete rather than tasks that must later be revisited.

\paragraph{Differentiable and Learned Kalman Filters.}
The offline meta-learning stage of \ac{name}, which learns latent adaptation dynamics by differentiating through Kalman-style recursions, is related to prior works that cast Kalman filtering as a differentiable machine learning model. Classical learning of state-space model parameters is typically studied separately from filtering within the framework of system identification, where the underlying dynamics are estimated from data~\cite{schon2011system,gedon2021deep}. More recent works instead optimize Kalman-type filters end-to-end by backpropagating through the filtering recursion to learn state-space parameters or noise statistics directly from task losses~\cite{greenberg2023optimization,xu2024ekfnet,haarnoja2016backprop, kloss2021train}. In parallel, a growing literature converts Kalman-type filtering itself into trainable neural architectures via learned gains~\cite{revach2022kalmannet}, learned corrections~\cite{satorras2019combining}, augmentations of prior~\cite{ghosh2023danse} and posterior updates~\cite{becker2019recurrent}, and broader AI-aided Kalman filtering frameworks~\cite{shlezinger2025artificial}. While these works primarily learn \emph{state estimators} for tracking latent physical processes, \ac{name} differs fundamentally in its objective: it does not learn a Kalman filter for state estimation, but rather leverages differentiable Bayesian filtering as a \emph{meta-learning mechanism} for constructing an online adaptation rule, where the tracked latent state represents a learned low-dimensional parameterization of the evolving optimal model parameters in a non-stationary learning problem.


\paragraph{Parameter-Efficient Fine-Tuning and Adaptation.}
The learned low-dimensional adaptation space of \ac{name} is also conceptually related to the literature on parameter-efficient fine-tuning (PEFT), which seeks to adapt large pretrained models (typically language models) by updating only a restricted subset or structured parameterization of the model weights rather than the full parameter vector. Representative PEFT approaches include adapter-based methods~\cite{houlsby2019parameter,karimi2021compacter}, prompt- and prefix-based tuning~\cite{li2021prefix,lester2021power}, selective parameter updates such as BitFit~\cite{zaken2022bitfit}, and low-rank or reparameterized adaptation mechanisms including LoRA and its variants~\cite{hu2022lora,dettmers2023qlora}; see also the recent comprehensive overview~\cite{wang2025parameter}. Despite the shared motivation of reducing adaptation dimensionality, the setting considered in this work is fundamentally different. PEFT methods typically address \emph{offline} adaptation to a fixed downstream task or domain, where substantial task-specific data and multiple optimization iterations are available. In contrast, \ac{name} addresses \emph{streaming online adaptation} in continuously evolving non-stationary environments, where only a handful of labeled samples are observed per update and adaptation must be performed under strict latency constraints. Consequently, while PEFT focuses on reducing the cost of conventional fine-tuning, \ac{name} instead learns a latent adaptation geometry specifically tailored for real-time Bayesian online tracking of drifting optima.




\section{Theoretical Motivation for the Latent State-Space Formulation}
\label{app:proofs}
\ac{name} is based on the principle that rapid adaptation can be achieved by \emph{tracking the evolution of optimal model parameters in a suitably learned latent space}. Concretely, we posit that, although  $\myVec{\theta}_t$ may be high-dimensional, it admits a low-dimensional representation whose temporal evolution is governed by smooth dynamics. When such a representation is available, online learning can be cast as a state estimation problem, where adaptation is carried out efficiently via Bayesian filtering in the latent space. In this appendix we establish the foundations of this approach. This includes:
$(i)$  providing a theoretical argument for the existence of low-dimensional representations in which the optimal parameters evolve smoothly over time; 
$(ii)$ identify conditions for which the representation of the observation corruption as additive uncorrelated noise in \eqref{eq:observation_model} holds; 
and $(iii)$ provide a rigorous relation between Kalman tracking in latent space and conventional Bayesian learning based on \ac{elbo} minimization. All proofs are delegated to the end of this appendix.

\subsection{Existence of Low-Dimensional Model Representations}
\label{app:latent_exist}
We begin by formalizing the learning objective of \ac{dnn} predictors and establishing the modeling viewpoint that motivates our formulation of a latent dynamical system.

\paragraph{Learning Framework.} 
Following standard statistical learning terminology~\cite{shalev2014understanding}, we consider learning guided by a loss function $\ell:\Delta^C \times \mathcal{Y} \mapsto \mathbb{R}$. The instantaneous risk at time $t$ is defined as
\begin{equation}
\mathcal{L}_t(\myVec{\theta})
:=
\mathbb{E}_{(x,y)\sim \mathbb{P}_t}
\left[\ell\left(f_{\myVec{\theta}}(x),y\right)\right].
\label{eq:instantaneous_risk}
\end{equation}
The learning objective at time $t$ is thus to minimize the risk induced by the current distribution.
Accordingly, we define the instantaneous optimal parameter vector as
\begin{equation}
\myVec{\theta}_t^\star \in \arg\min_{\myVec{\theta}} \mathcal{L}_t(\myVec{\theta}).
\label{eq:instantaneous_optimum}
\end{equation}

\paragraph{Motivating Learning Model. }
To ground the casting of online learning 
 as the tracking of a dynamic low-dimensional
system, we make use of the following assumption:
\begin{enumerate}[label={\em AS\arabic*},series=assumptions]
\item \label{itm:Compressibility}(Lipschitz Compressibility) There exists an  $L$-Lipschitz compression mapping $\mathcal{C}: \mathbb{R}^d \mapsto \mathbb{R}^{m}$ with $m\ll d$   such that $\myVec{\theta}_t^{\star}\in \mathbb{R}^d$ can be recovered from $\myVec{z}_t  = \mathcal{C} \big(\myVec{\theta}_t^{\star}\big) \in \mathbb{R}^m$ for each $t$. 
\end{enumerate}
Assumption \ref{itm:Compressibility} reflects the hypothesis that, although the ambient parameter space is high-dimensional, the set of optimal solutions induced by varying data distributions lies on a low-dimensional manifold.  In our setting, this suggests that, provided the sequence $\{\myVec{\theta}_t^\star\}$ remains sufficiently structured, there exists a low-dimensional representation $\myVec{z}_t$ that captures the relevant variations across time. Importantly, we do not require explicit knowledge of the mapping $\mathcal{C}$, but only posit its existence as a modeling assumption that will later be relaxed through learning.
Under \ref{itm:Compressibility}, we can identify regimes in which  the optimal parameters evolve along a low-dimensional manifold and vary similarly to the distribution variations. This is formulated in the following theorem:
\begin{proposition}
    \label{thm:LatentVariation}
    When the risk \eqref{eq:instantaneous_risk}  is differentiable and $\mu$-strongly quasi-convex, and the loss $\ell$ is uniformly bounded by some $B>0$, then, the latent representations $\myVec{z}_t$ and $\myVec{z}_{t+1}$  satisfy
    \begin{equation}
        \|\myVec{z}_t-\myVec{z}_{t+1}\|_2 \leq 2 L \sqrt{\frac{2}{\mu} B \cdot D_{\rm TV}(\mathbb{P}_t; \mathbb{P}_{t+1})}, \quad \forall t,
        \label{eq:LatentVariation}
    \end{equation}
    where $D_{\rm TV}(\cdot ; \cdot)$ is the total variation distance. 
\end{proposition}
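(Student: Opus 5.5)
The plan is to reduce the latent bound to a bound on the parameter displacement $\|\myVec{\theta}_t^\star-\myVec{\theta}_{t+1}^\star\|_2$ and then control that displacement through strong quasi-convexity combined with a total-variation bound on the risk difference. By \ref{itm:Compressibility}, $\myVec{z}_t=\mathcal{C}(\myVec{\theta}_t^\star)$ with $\mathcal{C}$ being $L$-Lipschitz, so
\[
\|\myVec{z}_t-\myVec{z}_{t+1}\|_2\le L\|\myVec{\theta}_t^\star-\myVec{\theta}_{t+1}^\star\|_2 .
\]
It therefore suffices to show
\[
\|\myVec{\theta}_t^\star-\myVec{\theta}_{t+1}^\star\|_2\le 2\sqrt{\tfrac{2}{\mu}B\,D_{\rm TV}(\mathbb{P}_t;\mathbb{P}_{t+1})}.
\]

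The first step is a uniform risk perturbation bound. For any fixed $\myVec{\theta}$, the map $(x,y)\mapsto \ell(f_{\myVec{\theta}}(x),y)$ is bounded by $B$ in absolute value. The variational characterization of total variation then gives
\[
|\mathcal{L}_t(\myVec{\theta})-\mathcal{L}_{t+1}(\myVec{\theta})|\le 2B\,D_{\rm TV}(\mathbb{P}_t;\mathbb{P}_{t+1}) =: \epsilon .
\]
The factor is $2B$ if $D_{\rm TV}$ is the $\sup_A|P(A)-Q(A)|$ convention and the loss lies in $[-B,B]$. It improves to $B$ if $\ell\in[0,B]$. I will keep track of which convention reproduces the stated constant.

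The second step uses the quadratic-growth consequence of $\mu$-strong quasi-convexity. For a differentiable $\mu$-strongly quasi-convex function with minimizer $\myVec{\theta}^\star$, one has
\[
\mathcal{L}(\myVec{\theta})-\mathcal{L}(\myVec{\theta}^\star)\ge \tfrac{\mu}{2}\|\myVec{\theta}-\myVec{\theta}^\star\|^2 .
\]
Either I take this as the standard implication, or I derive it by integrating the defining inequality along the segment. In particular the minimizer is unique, so $\myVec{\theta}_t^\star$ is well defined. I then chain the two steps:
\[
\tfrac{\mu}{2}\|\myVec{\theta}_{t+1}^\star-\myVec{\theta}_t^\star\|^2\le \mathcal{L}_t(\myVec{\theta}_{t+1}^\star)-\mathcal{L}_t(\myVec{\theta}_t^\star)
\le \big[\mathcal{L}_{t+1}(\myVec{\theta}_{t+1}^\star)+\epsilon\big]-\big[\mathcal{L}_{t+1}(\myVec{\theta}_t^\star)-\epsilon\big]\le 2\epsilon .
\]
The last inequality uses optimality of $\myVec{\theta}_{t+1}^\star$ for $\mathcal{L}_{t+1}$. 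This yields
\[
\|\myVec{\theta}_{t+1}^\star-\myVec{\theta}_t^\star\|\le\sqrt{4\epsilon/\mu}.
\]
With $\epsilon=2B D_{\rm TV}$ this equals $\sqrt{8BD_{\rm TV}/\mu}=2\sqrt{2BD_{\rm TV}/\mu}$, exactly the target. Multiplying by $L$ gives \eqref{eq:LatentVariation}.

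The main obstacle is the quadratic-growth step, because the definition of strong quasi-convexity varies across the literature. The weakest common form is
\[
\mathcal{L}(\myVec{\theta}^\star)\ge\mathcal{L}(\myVec{\theta})+\langle\nabla\mathcal{L}(\myVec{\theta}),\myVec{\theta}^\star-\myVec{\theta}\rangle+\tfrac{\mu}{2}\|\myVec{\theta}^\star-\myVec{\theta}\|^2 .
\]
I would first verify that this form implies quadratic growth with constant $\mu/2$, possibly losing a factor. If it does not, I would state the assumption directly in its quadratic-growth form, since that is all the argument uses. A secondary point is to fix the total-variation and loss-sign conventions so that the constants match the statement.
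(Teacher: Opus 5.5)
Your proposal is correct and follows the paper's own argument. Both use the $2B\,D_{\rm TV}$ uniform risk perturbation, quadratic growth of $\mathcal{L}_t$ at $\myVec{\theta}_t^\star$, and optimality of $\myVec{\theta}_{t+1}^\star$ to get $\frac{\mu}{2}\|\myVec{\theta}_{t+1}^\star-\myVec{\theta}_t^\star\|_2^2\le 4B\,D_{\rm TV}(\mathbb{P}_t;\mathbb{P}_{t+1})$, followed by the $L$-Lipschitz property of $\mathcal{C}$; you use that property more cleanly than the paper, whose write-up treats $\mathcal{C}$ as linear.

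Your caution about the quadratic-growth step is justified. The paper gets it by citing $\nabla\mathcal{L}_t(\myVec{\theta}_t^\star)=\myVec{0}$, as though the assumption were strong convexity based at $\myVec{\theta}_t^\star$. Your segment argument closes this with no loss in the constant. Set $g(\tau)=\mathcal{L}_t\big(\myVec{\theta}_t^\star+\tau(\myVec{\theta}-\myVec{\theta}_t^\star)\big)-\mathcal{L}_t(\myVec{\theta}_t^\star)\ge 0$. Applying the stated inequality at $\myVec{\theta}_t^\star+\tau(\myVec{\theta}-\myVec{\theta}_t^\star)$ gives $\tau g'(\tau)-g(\tau)\ge\frac{\mu}{2}\tau^2\|\myVec{\theta}-\myVec{\theta}_t^\star\|_2^2$, i.e.\ $(g(\tau)/\tau)'\ge\frac{\mu}{2}\|\myVec{\theta}-\myVec{\theta}_t^\star\|_2^2$. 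The mean value theorem on $[\epsilon,1]$, together with $g(\epsilon)/\epsilon\ge 0$ and $\epsilon\to 0$, then yields growth constant exactly $\mu/2$. By contrast, discarding the $g(\tau)/\tau$ term before integrating would only give $\mu/4$.
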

While Proposition~\ref{thm:LatentVariation} is derived under restrictive conditions (e.g., strong quasi-convexity of the risk), its main implication is qualitative rather than prescriptive. Specifically, it suggests that the variation in the optimal parameters, when viewed through a suitable low-dimensional representation, is governed by the magnitude of the underlying distribution shift.  This observation provides support for modeling the latent sequence $\{\myVec{z}_t\}$ as a {\em smoothly evolving dynamical system}. Although the assumptions required for the theorem may not hold exactly in practical deep learning settings, they serve to motivate the broader hypothesis that non-stationary learning problems often admit low-dimensional structures with temporally coherent dynamics, which can be exploited for efficient online adaptation.

\subsection{Observations Model as an Additive Noise Channel}
\label{app:obs_noise}
The observation model \eqref{eq:observation_model} reflects the mismatch between the probabilistic output of the predictive model and the label, with $\myVec{w}_t$ being the estimation error. Specifically, when the loss $\ell$ guiding learning is the standard cross-entropy loss, then $\myVec{w}_t$ holds the following property:
\begin{lemma}
    \label{lem:Orthogonality}
      When $f_{\myVec{\theta}}$ is sufficiently expressive such that $\exists \myVec{\theta}_t \in \mathbb{R}^d$ for which $f_{\myVec{\theta}_t}(x)=[\mathbb{P}_t(y=1|x),\ldots,\mathbb{P}_t(y=C|x)]$ for all $x \in {\rm supp}(\mathbb{P}_t)$, then $\myVec{w}_t$ is zero-mean and uncorrelated with $f_{\mathcal{G}(\myVec{z}_t)}(x_t)$.
\end{lemma}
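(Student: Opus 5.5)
The plan is to identify the desired parameters $\myVec{\theta}_t^\star=\mathcal{G}(\myVec{z}_t)$ with the cross-entropy risk minimizer and to show that, under the expressiveness assumption, this minimizer outputs exactly the true conditional class probabilities on ${\rm supp}(\mathbb{P}_t)$. The claim then reduces to the elementary orthogonality property of conditional expectations. Write $\myVec{p}_t(x):=[\mathbb{P}_t(y=1|x),\ldots,\mathbb{P}_t(y=C|x)]$.

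\textbf{Step 1 (minimizer equals the conditional law).} For cross-entropy, decompose the risk \eqref{eq:instantaneous_risk} as
\[
\mathcal{L}_t(\myVec{\theta})=\mathbb{E}_{x}\big[H(\myVec{p}_t(x))+D_{\rm KL}(\myVec{p}_t(x)\,\|\,f_{\myVec{\theta}}(x))\big].
\]
This is a pointwise lower bound by $\mathbb{E}_x[H(\myVec{p}_t(x))]$, and the expressiveness hypothesis says the bound is attained. Therefore every minimizer $\myVec{\theta}_t^\star$ in \eqref{eq:instantaneous_optimum} satisfies $D_{\rm KL}=0$ for $\mathbb{P}_t$-almost every $x$. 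This gives $f_{\myVec{\theta}_t^\star}(x)=\myVec{p}_t(x)$ almost surely. Combined with $\myVec{\theta}_t^\star=\mathcal{G}(\myVec{z}_t)$, we obtain $f_{\mathcal{G}(\myVec{z}_t)}(x_t)=\myVec{p}_t(x_t)$.

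\textbf{Step 2 (zero mean).} From \eqref{eq:observation_model}, $\myVec{w}_t={\rm OneHot}(y_t)-\myVec{p}_t(x_t)$. Since $\mathbb{E}[{\rm OneHot}(y_t)\mid x_t]=\myVec{p}_t(x_t)$ componentwise, we get $\mathbb{E}[\myVec{w}_t\mid x_t]=\myVec{0}$. The tower property then gives $\mathbb{E}[\myVec{w}_t]=\myVec{0}$.

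\textbf{Step 3 (uncorrelatedness).} The vector $f_{\mathcal{G}(\myVec{z}_t)}(x_t)$ is a bounded, $x_t$-measurable quantity. Hence
\[
\mathbb{E}\big[\myVec{w}_t f_{\mathcal{G}(\myVec{z}_t)}(x_t)^\top\big]=\mathbb{E}\big[\mathbb{E}[\myVec{w}_t\mid x_t]\,\myVec{p}_t(x_t)^\top\big]=\myVec{0}.
\]
Together with the zero mean from Step 2, the cross-covariance vanishes, which is the claimed uncorrelatedness.

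\textbf{Main obstacle.} The delicate point is Step 1. We must make sure the minimizer, which may not be unique, agrees with $\myVec{p}_t$ almost surely rather than merely somewhere. We must also handle $\log 0$ issues when some $\mathbb{P}_t(y=c|x)=0$.

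I would deal with both via the KL decomposition above. Nonnegativity of KL plus attainment forces equality almost everywhere, with the convention $0\log 0=0$. Points outside the support carry no probability mass, so they do not affect any of the expectations.

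Finally, I would remark that $\myVec{z}_t$ is treated as deterministic given $t$, so all expectations are over $(x_t,y_t)\sim\mathbb{P}_t$ only.
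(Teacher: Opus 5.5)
Your proposal is correct and follows essentially the same route as the paper: decompose the cross-entropy risk into entropy plus conditional KL to show the optimum realizes $\myVec{p}_t$, then use $\mathbb{E}[\myVec{w}_t\mid x_t]=\myVec{0}$, the tower property, and orthogonality to any $x_t$-measurable output. Your version is slightly more careful than the paper's, which simply identifies the expressive $\myVec{\theta}_t$ with $\myVec{\theta}_t^\star$; you instead show that every minimizer agrees with $\myVec{p}_t$ almost surely, and you note explicitly that zero mean turns orthogonality into vanishing cross-covariance.
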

    Recall that 
   we design our online learning algorithm assuming that $\myVec{w}_t$ is zero-mean and uncorrelated with the model output. This modeling decision is thus further motivated by Lemma~\ref{lem:Orthogonality}.

\subsection{Relation to Bayesian Learning}
\label{app:bayesian_relation}
\ac{name} implements online learning by casting this task as the tracking of the latent representation of the optimal model parameters, rather than by using standard risk-minimization based learning formulation. Still, it admits a principled interpretation from a Bayesian learning perspective.  In particular, under a linearized Gaussian approximation, \ac{ekf} updates coincide with Bayesian online learning rules based on the \ac{elbo}. This is stated in the following lemma, adapted from~\cite{jones2024bayesian}:
\begin{lemma}
\label{lem:BONGtoKF}
    When $(i)$ the observation model induced by the predictor is replaced with its first-order Taylor approximation around $\hat{\myVec{z}}_{t|t-1}$, i.e., 
    $f_{\mathcal{G}(\myVec{z}_t)}(x_t) = 
    f_{\mathcal{G}(\hat{\myVec{z}}_{t|t-1})}(x_t)  + \myMat{H}_t(\myVec{z}_t - \hat{\myVec{z}}_{t|t-1})$; 
    and $(ii)$ the noise terms $\myVec{w}_t$ and $\myVec{v}_t$  are Gaussian, then \eqref{eq:predict}-\eqref{eq:update} are equivalent to a natural gradient minimization of the \ac{elbo} loss with Gaussian prior~\cite{khan2023bayesian}.
\end{lemma}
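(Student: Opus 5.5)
Under assumptions $(i)$--$(ii)$ the latent model is linear-Gaussian, so the plan is to reduce both the \ac{ekf} recursion \eqref{eq:predict}--\eqref{eq:update} and the \ac{elbo}-based natural-gradient update to exact Gaussian Bayesian filtering and show that they coincide. I would split the proof into the prediction step and the correction step.

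\textbf{Prediction step.} With $\myVec{v}_t$ Gaussian and the dynamics \eqref{eq:latent_dynamics} linear, pushing the posterior $\mathcal{N}(\hat{\myVec{z}}_{t-1},\myMat{\Sigma}_{t-1})$ through the transition gives exactly the Gaussian prior $\mathcal{N}(\hat{\myVec{z}}_{t|t-1},\myMat{\Sigma}_{t|t-1})$ of \eqref{eq:predict}. This prior is what the \ac{elbo} at time $t$ uses, so the two procedures agree here trivially.

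\textbf{Correction step.} Using the linearization in $(i)$ and Gaussian $\myVec{w}_t$, the likelihood becomes
\[
\log p(y_t\mid\myVec{z}) = -\tfrac12\big\|\text{OneHot}(y_t)-f_{\mathcal{G}(\hat{\myVec{z}}_{t|t-1})}(x_t)-\myMat{H}_t(\myVec{z}-\hat{\myVec{z}}_{t|t-1})\big\|^2_{\myMat{R}^{-1}}+\text{const},
\]
which is quadratic in $\myVec{z}$. The \ac{elbo} over Gaussians $q=\mathcal{N}(\myVec{\mu},\myMat{\Sigma})$ is
\[
\mathbb{E}_q[\log p(y_t\mid\myVec{z})] - \mathrm{KL}\big(q\,\|\,\mathcal{N}(\hat{\myVec{z}}_{t|t-1},\myMat{\Sigma}_{t|t-1})\big).
\]
I would then invoke the \ac{blr} / \ac{bong} result of~\cite{khan2023bayesian,jones2024bayesian}. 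A natural-gradient step with unit step size in the natural parameters $(\myMat{\Sigma}^{-1}\myVec{\mu},-\tfrac12\myMat{\Sigma}^{-1})$, initialized at the prior, gives
\[
\myMat{\Sigma}_t^{-1}=\myMat{\Sigma}_{t|t-1}^{-1}-\mathbb{E}_q[\nabla^2\log p],\qquad
\myMat{\Sigma}_t^{-1}\myVec{\mu}_t=\myMat{\Sigma}_{t|t-1}^{-1}\hat{\myVec{z}}_{t|t-1}+\mathbb{E}_q[\nabla\log p]-\mathbb{E}_q[\nabla^2\log p]\,\hat{\myVec{z}}_{t|t-1}.
\]
Because the log-likelihood is quadratic, the expectations are exact: the Hessian is $-\myMat{H}_t^\top\myMat{R}^{-1}\myMat{H}_t$, and the gradient evaluated at the prior mean is $\myMat{H}_t^\top\myMat{R}^{-1}(\text{OneHot}(y_t)-f_{\mathcal{G}(\hat{\myVec{z}}_{t|t-1})}(x_t))$. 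This yields the information-form update
\[
\myMat{\Sigma}_t^{-1}=\myMat{\Sigma}_{t|t-1}^{-1}+\myMat{H}_t^\top\myMat{R}^{-1}\myMat{H}_t,\qquad
\myVec{\mu}_t=\hat{\myVec{z}}_{t|t-1}+\myMat{\Sigma}_t\myMat{H}_t^\top\myMat{R}^{-1}\big(\text{OneHot}(y_t)-f_{\mathcal{G}(\hat{\myVec{z}}_{t|t-1})}(x_t)\big).
\]

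\textbf{Matching with \eqref{eq:update}--\eqref{eq:KG}.} I would apply the Woodbury identity to get $\myMat{\Sigma}_t=\myMat{\Sigma}_{t|t-1}-\myMat{K}_t\myMat{H}_t\myMat{\Sigma}_{t|t-1}$ with $\myMat{K}_t$ as in \eqref{eq:KG}. I would then use the push-through identity
\[
\myMat{\Sigma}_t\myMat{H}_t^\top\myMat{R}^{-1}=\myMat{\Sigma}_{t|t-1}\myMat{H}_t^\top\big(\myMat{H}_t\myMat{\Sigma}_{t|t-1}\myMat{H}_t^\top+\myMat{R}\big)^{-1}=\myMat{K}_t,
\]
which recovers the mean update in \eqref{eq:update}.

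\textbf{Main obstacle.} The delicate step is fixing the precise variant of the natural-gradient \ac{elbo} update so that one step is exact. This requires step size one, the expectations taken under the prior, and the \ac{elbo} regularized toward the predicted prior rather than $q_{t-1}$. Once those choices are fixed, the one-step exactness follows from the conjugacy created by assumption $(i)$, and the remaining matrix algebra is routine.
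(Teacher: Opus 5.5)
Your proposal is correct and follows essentially the same route as the paper. Both take a single unit-step natural-gradient update of the ELBO started from the predictive prior $q_{\myVec{\phi}_{t|t-1}}$, compute the expected gradient $\myMat{H}_t^\top\myMat{R}^{-1}\big(\text{OneHot}(y_t)-f_{\mathcal{G}(\hat{\myVec{z}}_{t|t-1})}(x_t)\big)$ and Hessian $-\myMat{H}_t^\top\myMat{R}^{-1}\myMat{H}_t$ exactly under the linearized Gaussian model, and rewrite the result in Kalman form. The only differences are that you take the BLR/BONG natural-parameter update as given, whereas the paper derives it through the dual (mean) parameters and Bonnet's and Price's identities, and that you spell out the Woodbury and push-through steps the paper compresses into ``rearranging.''
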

Lemma~\ref{lem:BONGtoKF} shows that while our learning methodology is based on Kalman-style adaptation, designed for tracking dynamic systems, it can viewed as specialization of existing Bayesian machine learning frameworks derived from conventional training objectives.

\subsection{Proofs}
\label{app:DetProofs}
\subsubsection{Proof of Proposition~\ref{thm:LatentVariation}}
\label{app:latent_variation_proof}
We first state the additional assumption introduced in the statement of the theorem:
\begin{enumerate}[label={\em AS\arabic*},series=assumptions, resume]
\item \label{itm:Convexity}(Strong quasi-convexity) The risk  \eqref{eq:instantaneous_risk}  is differentiable and $\mu$-strongly quasi-convex for some constant $\mu > 0$, i.e., for any $\myVec{\theta} \in \mathbb{R}^d$ and $t$ it holds that
\begin{equation*}
f_{\mathbb{P}_t}(\myVec{\theta}_t^{\star}) \geq f_{\mathbb{P}_t}(\myVec{\theta}) + \langle \nabla f_{\mathbb{P}_t}(\myVec{\theta}), \myVec{\theta}_t^{\star} -\myVec{\theta}\rangle + \frac{\mu}{2} \|\myVec{\theta}_t^{\star} - \myVec{\theta}\|^2.
\end{equation*}
\item \label{itm:BoundLoss}(Bounded loss) The loss $\ell$ is bounded by $B>0$. 
\end{enumerate}

We first bound the change in the risk induced by the variation in the data-generating distribution. 
Fix ${\myVec{\theta}}$, and define
\[
g_{\myVec{\theta}}(x,y) \triangleq \ell\!\left(f_{\myVec{\theta}}(x),y\right).
\]
By Assumption~\ref{itm:BoundLoss}, $g_{\myVec{\theta}}$ is uniformly bounded, namely 
$|g_{\myVec{\theta}}(x,y)| \le B$ for all $(x,y)$. It is a standard property of total variation distance that, for any bounded measurable function $g$,
\[
\left|\mathbb{E}_{\mathbb{P}_t}[g(x,y)] - \mathbb{E}_{\mathbb{P}_{t+1}}[g(x,y)]\right|
\le 2 \|g\|_\infty \, D_{\rm TV}(\mathbb{P}_t;\mathbb{P}_{t+1}).
\]
Applying this with $g=g_{\myVec{\theta}}$ yields
\begin{align}
|f_t({\myVec{\theta}})-f_{t+1}({\myVec{\theta}})|
&=
\left|
\mathbb{E}_{(x,y)\sim \mathbb{P}_t}[g_{\myVec{\theta}}(x,y)]
-
\mathbb{E}_{(x,y)\sim \mathbb{P}_{t+1}}[g_{\myVec{\theta}}(x,y)]
\right| \notag\\
&\le 2B\,\cdot D_{\rm TV}(\mathbb{P}_t;\mathbb{P}_{t+1}).
\label{eq:uniform_risk_bound}
\end{align} 

Next, by Assumption~\ref{itm:Convexity} and the optimality of ${\myVec{\theta}}_t^\star$ for $f_t$, we have
\begin{equation}
f_t({\myVec{\theta}}_{t+1}^\star)
\geq
f_t({\myVec{\theta}}_t^\star)
+
\frac{\mu}{2}\|{\myVec{\theta}}_{t+1}^\star-{\myVec{\theta}}_t^\star\|_2^2,
\label{eq:strong_conv_step}
\end{equation}
where we used that $\nabla f_t({\myVec{\theta}}_t^\star)=0$.

On the other hand, since ${\myVec{\theta}}_{t+1}^\star$ minimizes $f_{t+1}$,
\[
f_{t+1}({\myVec{\theta}}_{t+1}^\star)\leq f_{t+1}({\myVec{\theta}}_t^\star).
\]
Combining this inequality with \eqref{eq:uniform_risk_bound} yields
\begin{align}
f_t({\myVec{\theta}}_{t+1}^\star)
&\leq f_{t+1}({\myVec{\theta}}_{t+1}^\star)+2B\cdot D_{\rm TV}(\mathbb{P}_t;\mathbb{P}_{t+1}) \notag\\
&\leq f_{t+1}({\myVec{\theta}}_t^\star)+2B\cdot D_{\rm TV}(\mathbb{P}_t;\mathbb{P}_{t+1}) \notag\\
&\leq f_t({\myVec{\theta}}_t^\star)+4B\cdot D_{\rm TV}(\mathbb{P}_t;\mathbb{P}_{t+1}).
\label{eq:risk_compare}
\end{align}
Substituting \eqref{eq:risk_compare} into \eqref{eq:strong_conv_step} gives
\[
f_t({\myVec{\theta}}_t^\star)+\frac{\mu}{2}\|{\myVec{\theta}}_{t+1}^\star-{\myVec{\theta}}_t^\star\|_2^2
\leq
f_t({\myVec{\theta}}_t^\star)+4B\cdot D_{\rm TV}(\mathbb{P}_t;\mathbb{P}_{t+1}),
\]
and therefore
\begin{equation}
\|{\myVec{\theta}}_{t+1}^\star-{\myVec{\theta}}_t^\star\|_2
\leq
2\sqrt{\frac{2B\cdot D_{\rm TV}(\mathbb{P}_t;\mathbb{P}_{t+1})}{\mu}}.
\label{eq:theta_drift_bound}
\end{equation}

Finally, by the definition of the compressed representations,
\[
z_{t+1}-z_t = \mathcal{C}({\myVec{\theta}}_{t+1}^\star-{\myVec{\theta}}_t^\star).
\]
Hence, by continuity of the operator,
\[
\|z_{t+1}-z_t\|_2
=
\|\mathcal{C}({\myVec{\theta}}_{t+1}^\star-{\myVec{\theta}}_t^\star)\|_2
\leq
\|A\|_2\,\|{\myVec{\theta}}_{t+1}^\star-{\myVec{\theta}}_t^\star\|_2.
\]
Combining this with \eqref{eq:theta_drift_bound}, we obtain
\[
\|z_{t+1}-z_t\|_2
\leq
2L\sqrt{\frac{2B\cdot D_{\rm TV}(\mathbb{P}_t;\mathbb{P}_{t+1})}{\mu}},
\]
which proves the claim.

\subsubsection{Proof of Lemma~\ref{lem:Orthogonality}}
\label{app:LemmOrtProof}
\begin{proof}
We prove the lemma in two steps. We first show that, under the stated expressivity assumption, the parameter vector realizing the true posterior probabilities is an optimizer of the instantaneous risk under the cross-entropy loss. We then show that the corresponding observation error is zero-mean and uncorrelated with the model output.

\paragraph{Step 1.}
Consider the standard cross-entropy loss
\begin{equation}
\ell\big(f_{\myVec{\theta}}(x),y\big)
=
-\sum_{c=1}^{C}\mathbf{1}\{y=c\}\log \big[f_{\myVec{\theta}}(x)\big]_c,
\label{eq:ce_loss_appendix}
\end{equation}
where $\big[f_{\myVec{\theta}}(x)\big]_c$ denotes the $c$th component of the predictive probability vector $f_{\myVec{\theta}}(x)\in\Delta^C$. Substituting \eqref{eq:ce_loss_appendix} into the instantaneous risk \eqref{eq:instantaneous_risk}, we obtain
\begin{align}
\mathcal{L}_t(\myVec{\theta})
&=
\mathbb{E}_{(x,y)\sim\mathbb{P}_t}
\left[
-\sum_{c=1}^{C}\mathbf{1}\{y=c\}\log \big[f_{\myVec{\theta}}(x)\big]_c
\right] \notag\\
&=
\mathbb{E}_{x\sim\mathbb{P}_t}
\left[
\mathbb{E}_{y|x\sim\mathbb{P}_t}
\left[
-\sum_{c=1}^{C}\mathbf{1}\{y=c\}\log \big[f_{\myVec{\theta}}(x)\big]_c
\,\middle|\, x
\right]
\right] \notag\\
&=
\mathbb{E}_{x\sim\mathbb{P}_t}
\left[
-\sum_{c=1}^{C}\mathbb{P}_t(y=c|x)\log \big[f_{\myVec{\theta}}(x)\big]_c
\right].
\label{eq:risk_ce_conditional}
\end{align}
Now let
\[
\myVec{p}_t(x)
:=
\big[\mathbb{P}_t(y=1|x),\ldots,\mathbb{P}_t(y=C|x)\big]\in\Delta^C.
\]
By assumption, there exists $\myVec{\theta}_t\in\mathbb{R}^d$ such that $f_{\myVec{\theta}_t}(x)=\myVec{p}_t(x)$ for all  $x\in{\rm supp}(\mathbb{P}_t)$. 
For any fixed $x$, the inner term in \eqref{eq:risk_ce_conditional} is the cross-entropy between the true conditional distribution $\myVec{p}_t(x)$ and the model prediction $f_{\myVec{\theta}}(x)$. Using the identity
\begin{equation}
-\sum_{c=1}^{C} p_c \log q_c
=
-\sum_{c=1}^{C} p_c \log p_c
+
\sum_{c=1}^{C} p_c \log \frac{p_c}{q_c},
\label{eq:cross_entropy_kl_identity}
\end{equation}
valid for any probability vectors $\myVec{p},\myVec{q}\in\Delta^C$, we may rewrite \eqref{eq:risk_ce_conditional} as
\begin{align}
\mathcal{L}_t(\myVec{\theta})
&=
\mathbb{E}_{x\sim\mathbb{P}_t}
\left[
-\sum_{c=1}^{C}\mathbb{P}_t(y=c|x)\log \mathbb{P}_t(y=c|x)
\right] \notag\\
&\quad+
\mathbb{E}_{x\sim\mathbb{P}_t}
\left[
\sum_{c=1}^{C}\mathbb{P}_t(y=c|x)
\log
\frac{\mathbb{P}_t(y=c|x)}{\big[f_{\myVec{\theta}}(x)\big]_c}
\right].
\label{eq:risk_decomposition_kl}
\end{align}
The first term in \eqref{eq:risk_decomposition_kl} does not depend on $\myVec{\theta}$, while the second term is a conditional Kullback--Leibler divergence and is therefore nonnegative. Consequently,
\begin{equation}
\mathcal{L}_t(\myVec{\theta})
\geq
\mathbb{E}_{x\sim\mathbb{P}_t}
\left[
-\sum_{c=1}^{C}\mathbb{P}_t(y=c|x)\log \mathbb{P}_t(y=c|x)
\right],
\label{eq:risk_lower_bound}
\end{equation}
with equality if and only if
$
f_{\myVec{\theta}}(x)
\equiv 
\big[\mathbb{P}_t(y=1|x),\ldots,\mathbb{P}_t(y=C|x)\big]
$
Hence, the parameter vector $\myVec{\theta}_t$ minimizing \eqref{eq:risk_lower_bound} 
is an optimizer of \eqref{eq:instantaneous_risk}, namely
\[
\myVec{\theta}_t \in \arg\min_{\myVec{\theta}} \mathcal{L}_t(\myVec{\theta}).
\]
By the definition of the instantaneous optimum in \eqref{eq:instantaneous_optimum}, we may therefore identify this optimizer with $\myVec{\theta}_t^\star$. Since, by construction of the latent representation, $\mathcal{G}(\myVec{z}_t)=\myVec{\theta}_t^\star$, it follows that the observation model \eqref{eq:observation_model} can be written as
\begin{equation}
{\rm OneHot}(y_t)
=
f_{\myVec{\theta}_t^\star}(x_t)+\myVec{w}_t.
\label{eq:obs_model_optimal_appendix}
\end{equation}

\paragraph{Step 2.}
Define the observation error as
\begin{equation}
\myVec{w}_t
:=
{\rm OneHot}(y_t)-f_{\myVec{\theta}_t^\star}(x_t).
\label{eq:error_definition_appendix}
\end{equation}
Since $f_{\myVec{\theta}_t^\star}(x)$ realizes the true posterior probabilities, we have
\begin{equation}
f_{\myVec{\theta}_t^\star}(x)
=
\mathbb{E}_{y|x\sim\mathbb{P}_t}\big[{\rm OneHot}(y)\mid x\big].
\label{eq:posterior_conditional_mean}
\end{equation}
Therefore,
\begin{align}
\mathbb{E}\big[\myVec{w}_t \mid x_t\big]
&=
\mathbb{E}\big[{\rm OneHot}(y_t)-f_{\myVec{\theta}_t^\star}(x_t)\mid x_t\big] \notag\\
&=
\mathbb{E}\big[{\rm OneHot}(y_t)\mid x_t\big]-f_{\myVec{\theta}_t^\star}(x_t) \notag\\
&=
\myVec{0},
\label{eq:conditional_zero_mean}
\end{align}
where the last equality follows from \eqref{eq:posterior_conditional_mean}. Taking expectation over $x_t$ yields
\begin{equation}
\mathbb{E}[\myVec{w}_t]=\myVec{0},
\label{eq:unconditional_zero_mean}
\end{equation}
and thus $\myVec{w}_t$ is zero-mean.

Moreover, since $f_{\myVec{\theta}_t^\star}(x_t)$ is a deterministic function of $x_t$, \eqref{eq:conditional_zero_mean} implies the orthogonality relation
\begin{align}
\mathbb{E}\!\left[\myVec{w}_t f_{\myVec{\theta}_t^\star}(x_t)^\top\right]
&=
\mathbb{E}\!\left[
\mathbb{E}\!\left[\myVec{w}_t f_{\myVec{\theta}_t^\star}(x_t)^\top \mid x_t\right]
\right] \notag\\
&=
\mathbb{E}\!\left[
f_{\myVec{\theta}_t^\star}(x_t)^\top
\mathbb{E}\!\left[\myVec{w}_t \mid x_t\right]
\right] \notag\\
&=
\myVec{0}.
\label{eq:orthogonality_appendix}
\end{align}
Hence, the estimation error is orthogonal to the model output. 
\end{proof}



\subsubsection{Proof of Lemma~\ref{lem:BONGtoKF}}
\label{app:bong_proof}
The lemma extends the corresponding characterization in \cite{jones2024bayesian} to Bayesian learning in latent space. Accordingly, the proof follows similar arguments as in \cite[Appendix E]{jones2024bayesian}. 
We begin by writing the ELBO objective associated with the Gaussian variational posterior based on the Bayesian learning rule of~\cite{khan2023bayesian}. Under a Gaussian modeling of the parameters $\myVec{z}_t$ as $\myVec{z}_t\sim \mathcal{N}(\hat{\myVec{z}}_t, \myMat{\Sigma}_t) := q_{\myVec{\phi}_t }$ where   $\myVec{\phi}_t := \{\myMat{\Sigma}^{-1}_{t}\myVec{\hat{z}}_{t},-\frac{1}{2}\myMat{\Sigma}^{-1}_{t}\}$ encapsulates the distribution parameters of  $\myVec{z}_t$. The online ELBO can be written as
\begin{equation}
\label{eq:ELBO}
\mathcal{L}(\myVec{\phi}_t)
=
- \mathbb{E}_{\myVec{z}_t \sim q_{\myVec{\phi}_t}}
\left[
\log p(y_t \mid {x}_t, \mathcal{G}(\myVec{z}_t)
\right]
+
D_{\mathrm{KL}}\!\left(q_{\myVec{\phi}_t}\,\|\,q_{\myVec{\phi}_{t\mid t-1}}\right),
\end{equation}
where we write $ p(y \mid {x}, \myVec{\theta})$ as the element of the model output $f_{\myVec{\theta}}(x)$ corresponding to $y \in \{1,\ldots,C\}$.

For this Gaussian exponential-family representation, the corresponding
dual, or expectation, parameters are
\begin{align}
\label{eq:Natural Parameter}
\myVec{\rho}^{(1)}_{t}
&=
\hat{\myVec{z}}_{t},
&
\myMat{\rho}^{(2)}_{t}
&=
\hat{\myVec{z}}_{t}\hat{\myVec{z}}_{t}^{\top}
+
\myMat{\Sigma}_{t}.
\end{align}
Equivalently, the mean and covariance can be recovered from the natural
or dual parameters as
\begin{align}
\hat{\myVec{z}}_{t}
&=
-\frac{1}{2}
\left(\myMat{\phi}^{(2)}_{t}\right)^{-1}
\myVec{\phi}^{(1)}_{t}
=
\myVec{\rho}^{(1)}_{t},
\\
\myMat{\Sigma}_{t}
&=
-\frac{1}{2}
\left(\myMat{\phi}^{(2)}_{t}\right)^{-1}
=
\myMat{\rho}^{(2)}_{t}
-
\myVec{\rho}^{(1)}_{t}
\big(\myVec{\rho}^{(1)}_{t}\big)^{\top}.
\end{align}

Starting from the prior predictive natural parameter
$\myVec{\phi}_{t|t-1}$, a single natural-gradient step on
\eqref{eq:ELBO} yields
\begin{equation}
\label{eq:First step NG}
\myVec{\phi}_t
=
\myVec{\phi}_{t|t-1}
+
\mathbf{F}^{-1}_{\myVec{\phi}_{t|t-1}}
\nabla_{\myVec{\phi}_{t|t-1}}
\,
\mathbb{E}_{\myVec{z}_t \sim q_{\myVec{\phi}_{t|t-1}}}
\left[
\log p({y}_t \mid {x}_t, \mathcal{G}(\myVec{z}_t))
\right],
\end{equation}
where $\mathbf{F}_{\myVec{\phi}}$ is the Fisher information matrix with
respect to the natural parameters $\myVec{\phi}$.
Since the Gaussian variational posterior belongs to the exponential
family, the natural gradient with respect to the natural parameters
coincides with the ordinary gradient with respect to the corresponding
dual parameters $\myVec{\rho}$. Therefore, \eqref{eq:First step NG} can be
equivalently written as 
\begin{equation}
\label{eq:First step NG dual}
\myVec{\phi}_t
=
\myVec{\phi}_{t|t-1}
+
\nabla_{\myVec{\rho}_{t|t-1}}
\,
\mathbb{E}_{\myVec{z}_t \sim q_{\myVec{\phi}_{t|t-1}}}
\left[
\log p({y}_t \mid {x}_t,\mathcal{G}(\myVec{z}_t))
\right].
\end{equation}

Using the relations between the Gaussian natural, dual, and moment parameters, the chain rule yields, for any scalar function $\ell$,
\begin{equation}
\label{eq:rho_chain_rule}
\begin{aligned}
\nabla_{\myVec{\rho}^{(1)}_{t}} \ell
&=
\frac{\partial \myVec{\hat{z}}_{t}}{\partial \myVec{\rho}^{(1)}_{t}}
\nabla_{\myVec{\hat{z}}_{t}} \ell
+
\frac{\partial \myMat{\Sigma}_{t}}{\partial \myVec{\rho}^{(1)}_{t}}
\nabla_{\myMat{\Sigma}_{t}} \ell
=
\nabla_{\myVec{\hat{z}}_{t}} \ell
-
2\big(\nabla_{\myMat{\Sigma}_{t}} \ell\big)\myVec{\hat{z}}_{t},
\\[0.5em]
\nabla_{\myMat{\rho}^{(2)}_{t}} \ell
&=
\frac{\partial \myVec{\hat{z}}_{t}}{\partial \myMat{\rho}^{(2)}_{t}}
\nabla_{\myVec{\hat{z}}_{t}} \ell
+
\frac{\partial \myMat{\Sigma}_{t}}{\partial \myMat{\rho}^{(2)}_{t}}
\nabla_{\myMat{\Sigma}_{t}} \ell
=
\nabla_{\myMat{\Sigma}_{t}} \ell.
\end{aligned}
\end{equation}
Substituting 
$\ell =
\mathbb{E}_{\myVec{z}_t \sim q_{\myVec{\phi}_t}}
\!\left[
\log p\big(y_t \mid x_t, \mathcal{G}(\myVec{z}_t)\big)
\right]$
into the chain-rule identities, and using Bonnet's and Price's identities~\cite{bonnet1964,price1958}, we obtain
%
\begin{align}
\nabla_{\hat{\myVec{z}}_{t}}
\mathbb{E}_{\myVec{z}_t \sim q_{\myVec{\phi}_t}}
\!\left[
\log p\big(y_t \mid x_t,\mathcal{G}(\myVec{z}_t)\big)
\right]
&=
\mathbb{E}_{\myVec{z}_t \sim q_{\myVec{\phi}_t}}
\!\left[
\nabla_{\myVec{z}_t}
\log p\big(y_t \mid x_t,\mathcal{G}(\myVec{z}_t)\big)
\right],
\\
\nabla_{\myMat{\Sigma}_{t}}
\mathbb{E}_{\myVec{z}_t \sim q_{\myVec{\phi}_t}}
\!\left[
\log p\big(y_t \mid x_t,\mathcal{G}(\myVec{z}_t)\big)
\right]
&=
\frac{1}{2}
\mathbb{E}_{\myVec{z}_t \sim q_{\myVec{\phi}_t}}
\!\left[
\nabla_{\myVec{z}_t}^{2}
\log p\big(y_t \mid x_t,\mathcal{G}(\myVec{z}_t)\big)
\right].
\end{align}

Substituting the above identities into \eqref{eq:rho_chain_rule} yields
\begin{align}
\nabla_{\myVec{\rho}^{(1)}_{t}}
\mathbb{E}_{\myVec{z}_t \sim q_{\myVec{\phi}_t}}
\!\left[
\log p\big(y_t \mid x_t,\mathcal{G}(\myVec{z}_t)\big)
\right]
&=
\nabla_{\hat{\myVec{z}}_{t}}
\mathbb{E}_{\myVec{z}_t \sim q_{\myVec{\phi}_t}}
\!\left[
\log p\big(y_t \mid x_t,\mathcal{G}(\myVec{z}_t)\big)
\right]
\notag\\
&\quad
-
2
\left(
\nabla_{\myMat{\Sigma}_{t}}
\mathbb{E}_{\myVec{z}_t \sim q_{\myVec{\phi}_t}}
\!\left[
\log p\big(y_t \mid x_t,\mathcal{G}(\myVec{z}_t)\big)
\right]
\right)
\hat{\myVec{z}}_{t}
\notag\\
&=
\mathbb{E}_{\myVec{z}_t \sim q_{\myVec{\phi}_t}}
\!\left[
\nabla_{\myVec{z}_t}
\log p\big(y_t \mid x_t,\mathcal{G}(\myVec{z}_t)\big)
\right]
\notag\\
&\quad
-
\mathbb{E}_{\myVec{z}_t \sim q_{\myVec{\phi}_t}}
\!\left[
\nabla_{\myVec{z}_t}^{2}
\log p\big(y_t \mid x_t,\mathcal{G}(\myVec{z}_t)\big)
\right]
\hat{\myVec{z}}_{t},
\label{eq:rho1_expected_loglik_explicit}
\\[0.5em]
\nabla_{\myMat{\rho}^{(2)}_{t}}
\mathbb{E}_{\myVec{z}_t \sim q_{\myVec{\phi}_t}}
\!\left[
\log p\big(y_t \mid x_t,\mathcal{G}(\myVec{z}_t)\big)
\right]
&=
\nabla_{\myMat{\Sigma}_{t}}
\mathbb{E}_{\myVec{z}_t \sim q_{\myVec{\phi}_t}}
\!\left[
\log p\big(y_t \mid x_t,\mathcal{G}(\myVec{z}_t)\big)
\right]
\notag\\
&=
\frac{1}{2}
\mathbb{E}_{\myVec{z}_t \sim q_{\myVec{\phi}_t}}
\!\left[
\nabla_{\myVec{z}_t}^{2}
\log p\big(y_t \mid x_t,\mathcal{G}(\myVec{z}_t)\big)
\right].
\label{eq:rho2_expected_loglik_explicit}
\end{align}
Equivalently, in terms of the posterior mean and covariance, the update becomes
\begin{equation}
\label{eq:Basic SSM}
\begin{aligned}
\hat{\myVec{z}}_t
&=
\hat{\myVec{z}}_{t|t-1}
+
\myMat{\Sigma}_t
\mathbb{E}_{\myVec{z}_t \sim q_{\myVec{\phi}_{t|t-1}}}
\!\left[
\nabla_{\myVec{z}_t}
\log p\big(y_t \mid x_t,\mathcal{G}(\myVec{z}_t)\big)
\right],
\\[0.5em]
\myMat{\Sigma}_{t}^{-1}
&=
\myMat{\Sigma}_{t|t-1}^{-1}
-
\mathbb{E}_{\myVec{z}_t \sim q_{\myVec{\phi}_{t|t-1}}}
\!\left[
\nabla_{\myVec{z}_t}^{2}
\log p\big(y_t \mid x_t,\mathcal{G}(\myVec{z}_t)\big)
\right].
\end{aligned}
\end{equation}


Next, recall that the state-space model   assumes Gaussian process and observation noises, namely, \(\myVec{v}_t \sim \mathcal{N}(\myVec{0},\myMat{Q})\) and \(\myVec{w}_t \sim \mathcal{N}(\myVec{0},\myMat{R})\). Consequently, the prior predictive distribution of the latent state is Gaussian and can be written as
\[
q_{\myVec{\phi}_{t\mid t-1}}\sim
\mathcal{N}\big(\hat{\myVec{z}}_{t\mid t-1},\myMat{\Sigma}_{t\mid t-1}\big).
\]
Moreover, under the Gaussian observation model 
\[
 p({y}_t \mid {x}_t,\mathcal{G}(\myVec{z}_t))
\sim
\mathcal{N}\!\left(
f_{\mathcal{G}(\myVec{z}_t)}(\myVec{x}_t),\myMat{R}
\right),
\]
we replace the nonlinear predictor with its first-order Taylor approximation around \(\hat{\myVec{z}}_{t\mid t-1}\), namely,
\[
f_{\mathcal{G}(\myVec{z}_t)}(\myVec{x}_t)
=
f_{\mathcal{G}(\hat{\myVec{z}}_{t\mid t-1})}(\myVec{x}_t)
+
\myMat{H}_t\big(\myVec{z}_t-\hat{\myVec{z}}_{t\mid t-1}\big).
\]
Under these assumptions, the expected gradient and Hessian admit the closed-form expressions
\begin{equation}
\label{eq:Linearization of Expectation}
\scalebox{0.98}{$
\begin{aligned}
\mathbb{E}_{\myVec{z}_t \sim q_{\myVec{\phi}_{t\mid t-1}}}
\!\left[
\nabla_{\myVec{z}_t}
\log p_t(\myVec{y}_t \mid \myVec{z}_t)
\right]
&=
\mathbb{E}_{\myVec{z}_t \sim q_{\myVec{\phi}_{t\mid t-1}}}
\!\left[
\myMat{H}_t^{\top}\myMat{R}^{-1}
\Big(
\text{OneHot}(y_t)-\hat{\myVec{y}}_t-\myMat{H}_t(\myVec{z}_t-\hat{\myVec{z}}_{t\mid t-1})
\Big)
\right]
\\
&=
\myMat{H}_t^{\top}\myMat{R}^{-1}
\big(
\text{OneHot}(y_t)-\hat{\myVec{y}}_t
\big),
\\[0.5em]
\mathbb{E}_{\myVec{z}_t \sim q_{\myVec{\phi}_{t\mid t-1}}}
\!\left[
\nabla_{\myVec{z}_t}^{2}
\log p_t(\myVec{y}_t \mid \myVec{z}_t)
\right]
&=
\mathbb{E}_{\myVec{z}_t \sim q_{\myVec{\phi}_{t\mid t-1}}}
\!\left[
-\myMat{H}_t^{\top}\myMat{R}^{-1}\myMat{H}_t
\right]
\\
&=
-\myMat{H}_t^{\top}\myMat{R}^{-1}\myMat{H}_t.
\end{aligned}
$}
\end{equation}

Applying \eqref{eq:Linearization of Expectation} to \eqref{eq:Basic SSM}, and rearranging the resulting expressions, yields
\begin{equation*}
\begin{aligned}
\myMat{K}_t
&=
\myMat{\Sigma}_{t|t-1}\myMat{H}_t^{\top}
\big(
\myMat{H}_t\myMat{\Sigma}_{t|t-1}\myMat{H}_t^{\top}+\myMat{R}
\big)^{-1},
\\[0.5em]
\hat{\myVec{z}}_t
&=
\hat{\myVec{z}}_{t|t-1}
+
\myMat{K}_t
\Big(
\text{OneHot}(y_t)-f_{\mathcal{G}(\hat{\myVec{z}}_{t|t-1})}(\myVec{x}_t)
\Big),
\\[0.5em]
\myMat{\Sigma}_t
&=
\myMat{\Sigma}_{t|t-1}
-
\myMat{K}_t\myMat{H}_t\myMat{\Sigma}_{t|t-1}.
\end{aligned}
\end{equation*}

Hence, under the Gaussian noise assumption and the first-order linearization of the observation model, minimizing the ELBO via a single natural-gradient step is equivalent to a Kalman-style single-step adaptation update.


\section{Implementation Details of the Experiments in Section \ref{sec:experiments}}
\label{app:exp_details}


\subsection{Non-stationary Wireless Receiver Adaptation}
\label{sec:settingWireless}

\paragraph{Time-Varying MIMO Detection Scenario.}
We consider an uplink multiuser MIMO system with \(K\) single-antenna transmitters and \(N_r\) receive antennas. At each time frame, the users transmit a vector of information symbols, and the receiver observes a noisy superposition governed by a time-varying wireless channel. Specifically, for a transmitted QPSK symbol vector \(\myVec{s}_t \in \mathbb{R}^{K}\), which modulates the $2K$ bits, i.e., $C = 2^{2K}$, the received signal is modeled as
\begin{equation}
    \label{eq:MIMO}
    \myVec{x}_t = \myMat{H}_t \myVec{s}_t + \myVec{w}_t,
    \qquad
    \myVec{w}_t \sim \mathcal{N}(\myVec{0},\myMat{P}_t),
\end{equation}
where \(\myMat{H}_t \in \mathbb{R}^{N_r \times K}\) is the channel matrix and \(\myMat{P}_t\) denotes the noise covariance. The variation of \(\myMat{H}_t\) across time frames makes the MIMO detection rule time dependent, thereby giving rise to a non-stationary online detection setting.

The receiver architecture is based on DeepSIC~\cite{shlezinger2019deepsic}, a learned implementation of iterative soft interference cancellation for multiuser detection. We evaluate this receiver under both linear and nonlinear observation models. In the linear case, the received signal follows \eqref{eq:MIMO}. In the nonlinear case, we apply a component-wise \(\tanh\) distortion to the received signal, resulting in a more challenging setting that departs from the nominal linear MIMO model and tests the robustness of the adaptation mechanism.

\paragraph{Protocol and Channel Generation.}
We instantiate the system with \(K=3\) users and \(N_r=5\) receive antennas. Time-varying channel trajectories are generated using the QuaDRiGa simulator under the 3GPP Indoor Office LOS scenario, yielding temporally correlated channel realizations that emulate realistic propagation dynamics. Each evaluation sequence contains \(150\) consecutive frames. The channel realization remains fixed over the duration of each frame and varies across frames, thereby inducing a sequence of temporally correlated detection problems.

The online protocol consists of two phases. The first four frames serve as an initial synchronization stage, during which the receiver is provided with \(64\) labeled pilot samples per frame. These samples are used to perform consecutive online updates and initialize the adaptive receiver state. The remaining \(146\) frames constitute the tracking stage. In each tracking frame, only \(6\) labeled pilot samples are available for adaptation, corresponding to \(6\) online updates. The adapted receiver is then evaluated on \(1000\) additional QPSK symbol 
vectors generated under the same channel realization, i.e., \(2000\) information bits per user.
This setup reflects the operational structure of practical communication links: a short synchronization period enables reliable initialization, whereas subsequent operation must track channel variations using only limited pilot overhead. To avoid leakage between phases, all offline training is performed on channel trajectories disjoint from those used for online evaluation. During online operation, each method may update only from the labeled pilots revealed in the current stream.

The neural receiver is based on the DeepSIC  multi-user detection network \cite{shlezinger2019deepsic}. It is composed of four multi-layer perceptron modules, each containing one detector block
per transmitted user. Each block produces a soft estimate of its associated
user symbol from the received signal and the current soft estimates of the
interfering users; these estimates are propagated across layers to
progressively refine the multi-user decision. In our implementation, each
block is a compact two-layer neural network with a nonlinear activation and
\(868\) trainable parameters. Online adaptation is therefore performed at the
block-detector level, with the proposed method replacing direct full-space
block updates by updates in a learned low-dimensional latent representation
from which the block parameters are reconstructed.

\paragraph{Compared Methods.}
We compare \ac{name} against representative online adaptation baselines. As a gradient-based reference, we include \emph{Online GD}, which updates the full set of receiver parameters directly using the labeled pilots available at each frame  and perform five consecutive gradient updates per labeled sample. We further consider parameter-space \ac{ekf}-based learning, using  BONG~\cite{jones2024bayesian}, with full, diagonal~\cite{Chang2022on}, and \ac{dlr} covariance parameterizations~\cite{Chang2023low}. These baselines adapt directly in the original parameter space of the DeepSIC detector.

\paragraph{Implementation and Compute.}
All wireless receiver experiments were implemented in PyTorch and executed
on CPU using a 13th Gen Intel(R) Core(TM) i7-13620H processor with \(10\)
physical cores and \(16\) logical threads. All methods were evaluated on the
same hardware configuration and with the same evaluation pipeline to ensure
a fair comparison.

\subsection{Non-stationary Image Classification}
\label{sec:settingVision}

\paragraph{Gradual Distribution Shift Scenario.}
We consider a pretrained image classification model trained on a clean source dataset, such as CIFAR-10, CIFAR-100, or ImageNet. Let \(\{\tilde{\myVec{x}}_{t,i}\}_{i=1}^{V}\) denote the set of corrupted versions of the same underlying image at time step \(t\), where \(V\) is the number of corruption types under consideration. To model a gradually evolving test distribution, we define a mixing vector \(\myVec{\alpha}_t \in \mathbb{R}^{V}\), satisfying \(\sum_{i=1}^{V} \alpha_{t,i} = 1\), whose entries determine the contribution of each corruption type at time \(t\).

Initially, the mixing process is set such that all mass is assigned to the clean component, namely \(\myVec{\alpha}_0 = [1,0,\ldots,0]\). At each time step \(t\), the observed test sample is generated as a convex combination of the corrupted versions of the same image, weighted by the current mixing vector:
\begin{equation}
    \label{eq:tta_sample_generation}
    \myVec{x}_t = \sum_{i=1}^{V} \alpha_{t,i}\, \tilde{\myVec{x}}_{t,i}.
\end{equation}
Thus, as \(\myVec{\alpha}_t\) evolves over time according to a stochastic process, the induced test distribution undergoes a gradual and continuous shift. In the considered online adaptation setting, only a small subset of samples is labeled at each time step, enabling the adaptation methods to update using a limited amount of reliable supervision.

\paragraph{Protocol and Datasets.}
Concretely, each evaluation run consists of a stream of $T = 1000$ time steps (frames), each containing a batch of $100$ images, for a total of $10^5$ test samples per run. We draw corruptions from a pool of $V = 6$ types covering two qualitatively distinct regimes: three noise corruptions (Gaussian/shot/impulse on CIFAR; shot/impulse/fog on MNIST) and three blur/photometric corruptions (defocus, glass, motion on CIFAR; motion blur, glass blur, brightness on MNIST). The mixing vector \(\myVec{\alpha}_t\) evolves as a random walk on the simplex, so that consecutive frames induce similar but non-identical mixtures and the induced shift is both gradual and non-stationary. We evaluate on three standard corruption benchmarks, MNIST-C, CIFAR-10-C, and CIFAR-100-C, applying the corruptions at severity~$5$ on CIFAR-10-C and CIFAR-100-C. Each backbone is a standard image-classification architecture pretrained on the corresponding clean source dataset, and is kept fixed throughout the test-time stream: on MNIST-C we use a ResNet-18~\cite{he2016resnet} adapted to single-channel inputs, trained in-house on clean MNIST to $99.34\%$ test accuracy; on CIFAR-10-C we use a Wide Residual Network~\cite{zagoruyko2016wide} with $28$ layers and a widen factor of $10$, pretrained on clean CIFAR-10 with standard supervised training; and on CIFAR-100-C we use a $29$-layer ResNeXt~\cite{xie2017aggregated} pretrained on clean CIFAR-100 with AugMix~\cite{hendrycks2019augmix}. The CIFAR-100 backbone is obtained from RobustBench~\cite{croce2020robustbench}.

\paragraph{Compared Methods.}
We benchmark the proposed method against representative online adaptation strategies spanning non-adaptive, gradient-based, normalization-based, self-supervised, and Bayesian approaches. As a reference point, we include the \emph{Source} model, which applies the pretrained classifier without any test-time parameter updates. To capture supervised gradient-based adaptation, we consider an \emph{Online GD} baseline that updates the model directly using the labeled samples obtained on each new frame  and perform five consecutive gradient update per labeled sample. We also evaluate a family of normalization-based baselines that adapt the batch-normalization statistics at test time, including a standard BN adaptation rule~\cite{schneider2020corruptions,nado2020bn} (\emph{BN-Test}), an exponential-moving-average variant (\emph{BN-EMA}), and an interpolation-based variant that combines source and target statistics~\cite{mirza2022bn-dua,lim2023ttn} (\emph{BN-Alpha}). As a self-supervised alternative, we evaluate \emph{ROID}~\cite{marsden2024roid}. We further consider a Bayesian filtering baseline based on the Kalman-filter framework of~\cite{titsias2024kalman}, adapted in our setting to operate on the final fully-connected classification layer. Finally, we compare these methods with the proposed approach, which performs Bayesian online adaptation in a learned low-dimensional latent space (with dimension size $m=128$), while reconstructing the full parameter vector for prediction.
In all image-classification experiments, online adaptation is applied to the final fully-connected classification layer together with selected batch-normalization parameters throughout the backbone. To keep the Bayesian update tractable, we assume independence between the adapted parameter groups, corresponding to a block-diagonal covariance structure across groups. This type of block-structured approximation has also been used in scalable Kalman-style neural-network updates to avoid modeling all cross-parameter dependencies~\cite{duranmartin2025martingale}, thereby reducing complexity.

\paragraph{Implementation and Compute.}
All image adaptation experiments were implemented in PyTorch and executed on
an NVIDIA RTX 3060 Ti GPU. All methods were evaluated using the same hardware
configuration and experimental pipeline to ensure a fair comparison.

\section{Additional Experiments}
\label{app:additional}
 This appendix provides supplementary experimental studies that further analyze the behavior and robustness of \ac{name} beyond the main-paper results of Section~\ref{sec:experiments}. These experiments are designed to provide deeper insight into the properties of the proposed latent Bayesian adaptation framework, including its sensitivity to latent dimensionality, robustness to train-test mismatch, and dependence on the offline meta-learning stage. We report additional studies for both application domains considered in this work: wireless receiver adaptation and non-stationary image classification. Together, these results further validate the generality, robustness, and practical design trade-offs of the proposed approach.

\subsection{Additional Wireless Communication Experiments}
\label{app:wireless_ablation}
\subsubsection{Temporal Tracking Behavior}
\label{subsec:wireless_temporal_tracking}

We complement the BER-SNR results in Section~\ref{ssec:WirelessExp} by reporting the temporal tracking behavior over the online deployment horizon. Fig.~\ref{fig:wireless_temporal_tracking} shows the BER as a function of the snapshot index for the linear and nonlinear QuaDRiGa channels. The results indicate that \ac{name} maintains stable tracking throughout the channel evolution, supporting the aggregate gains reported in ~\ref{sec:experiments}.

\begin{figure}[t]
    \centering
    \begin{subfigure}[t]{0.49\linewidth}
        \centering
        \includegraphics[
            width=\linewidth
        ]{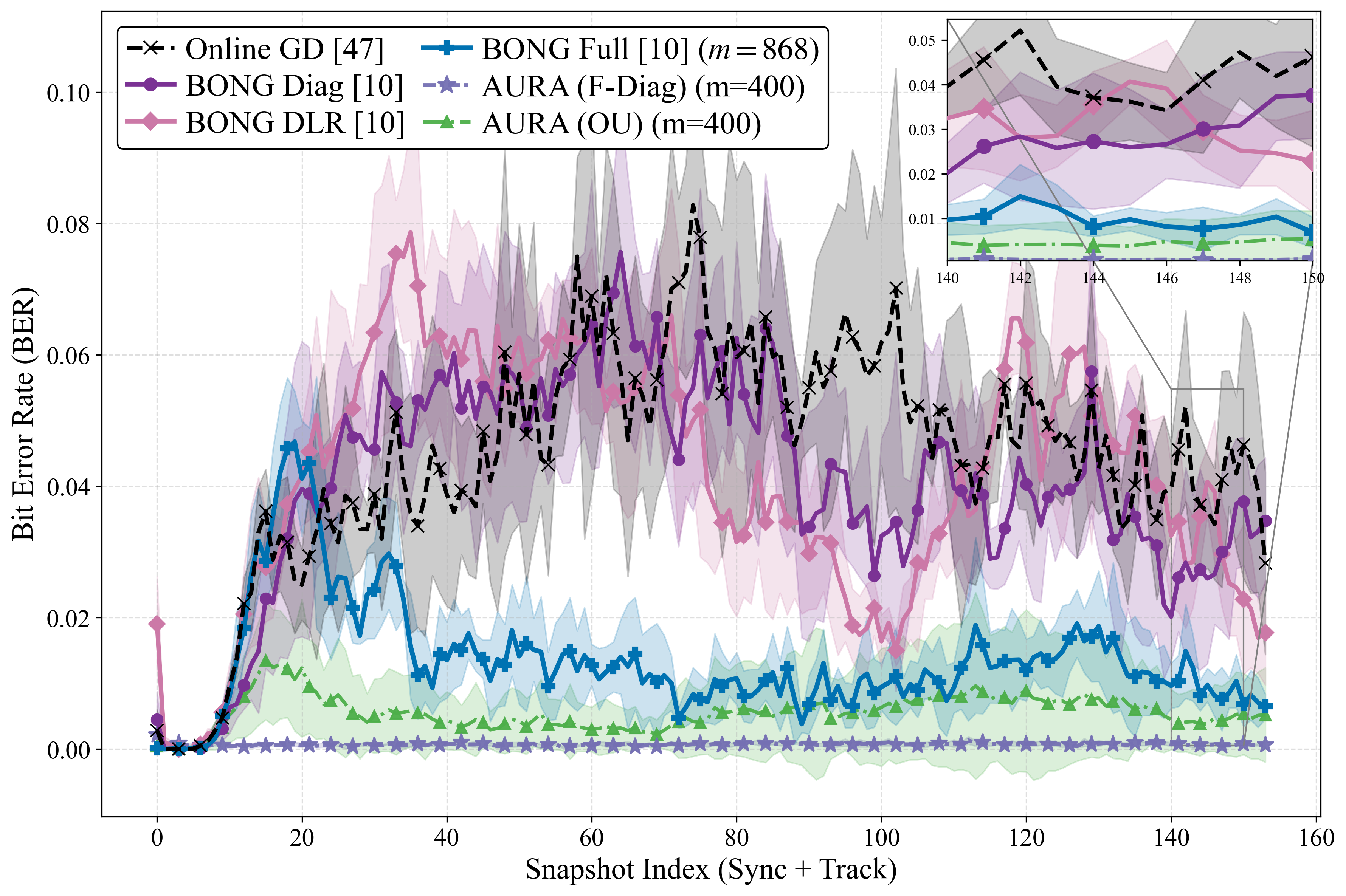}
        \caption{Linear channel.}
        \label{fig:wireless_temporal_tracking_linear}
    \end{subfigure}
    \hfill
    \begin{subfigure}[t]{0.49\linewidth}
        \centering
        \includegraphics[
            width=\linewidth
        ]{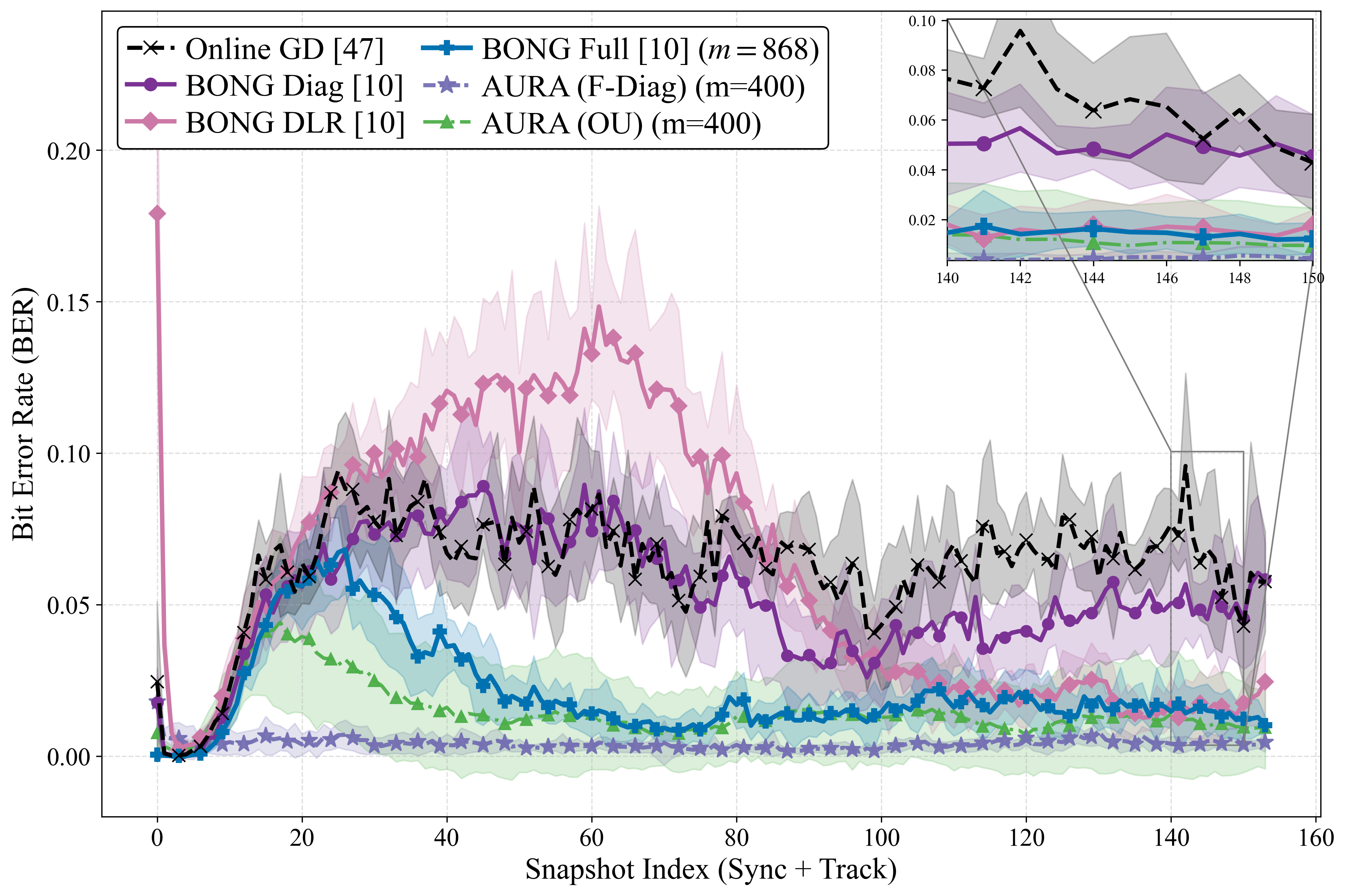}
        \caption{Nonlinear channel.}
        \label{fig:wireless_temporal_tracking_nonlinear}
    \end{subfigure}
    \caption{BER vs. Snapshot Index}
    \label{fig:wireless_temporal_tracking}
\end{figure}

    \subsubsection{Effect of Latent Compression and Dynamics Parameterization}

While Subsection~\ref{ssec:VisionExp} studies the effect of the latent dimension $m$ in the non-stationary image classification setting, we here provide a complementary analysis in the wireless receiver domain that additionally examines the interaction between latent compression and the choice of temporal dynamics model. Specifically, we assess how the dimensionality of the learned latent adaptation space affects performance under two alternative state evolution parameterizations: $(i)$ OU-based dynamics, corresponding to the scaled-identity transition model $\myVec{F}=\gamma\myMat{I}$, and $(ii)$ fully learned linear dynamics, where $\myVec{F}$ is meta-learned from data.
To this end, we vary the latent compression ratio used by \ac{name}, corresponding to different latent dimensions $m$, and evaluate the resulting online adaptation performance under the linear QuaDRiGa channel model. This experiment aims to characterize the trade-off between latent compression, adaptation fidelity, and temporal modeling flexibility.

\begin{figure}[t]
    \centering
    \begin{minipage}[t]{0.48\linewidth}
        \centering
        \includegraphics[
            width=\linewidth
        ]{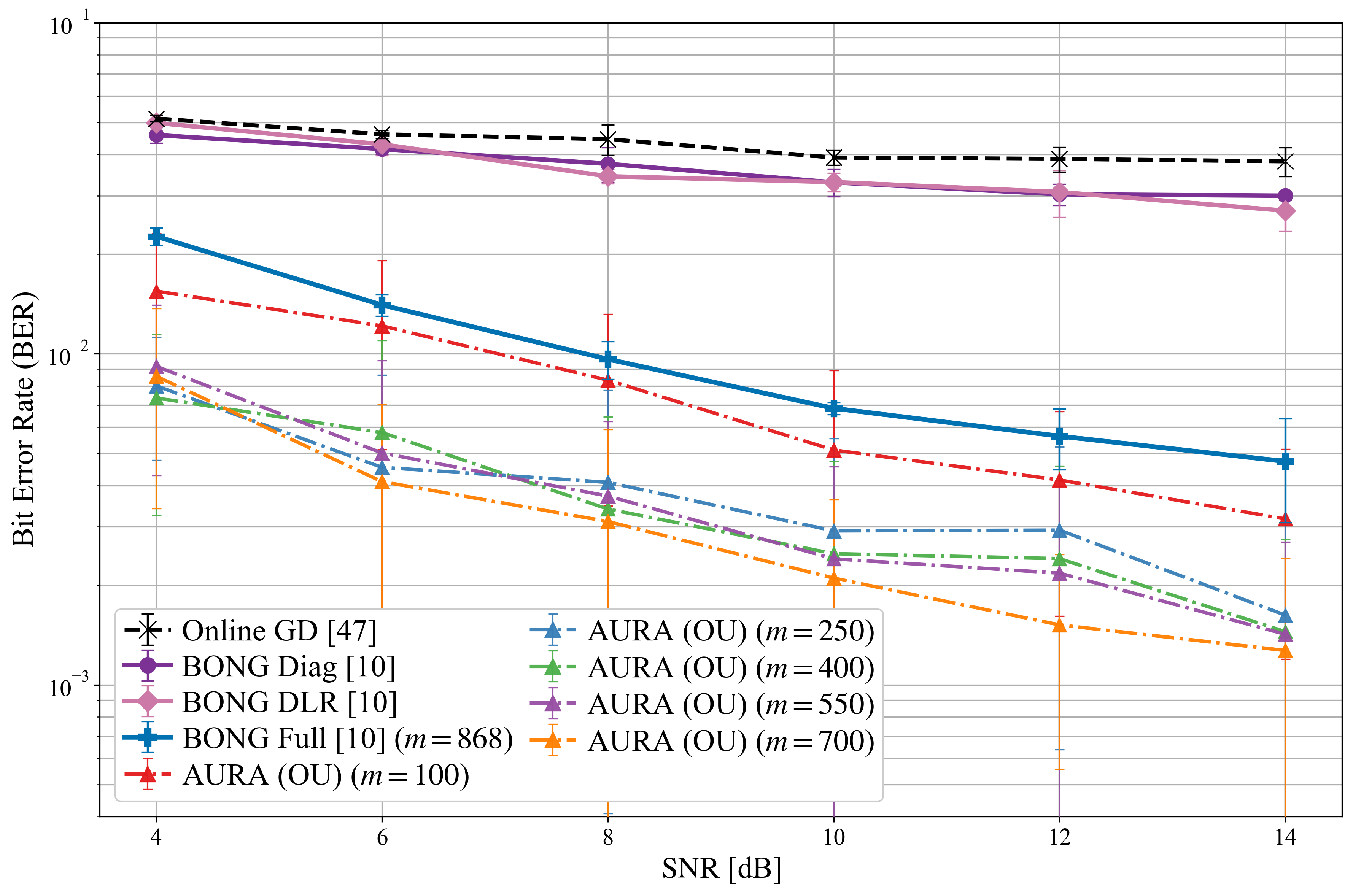}
        \caption{OU Ablation: BER vs. SNR.}
        \label{fig:ablation_ou_linear}
    \end{minipage}
    \hfill
    \begin{minipage}[t]{0.48\linewidth}
        \centering
        \includegraphics[
            width=\linewidth
        ]{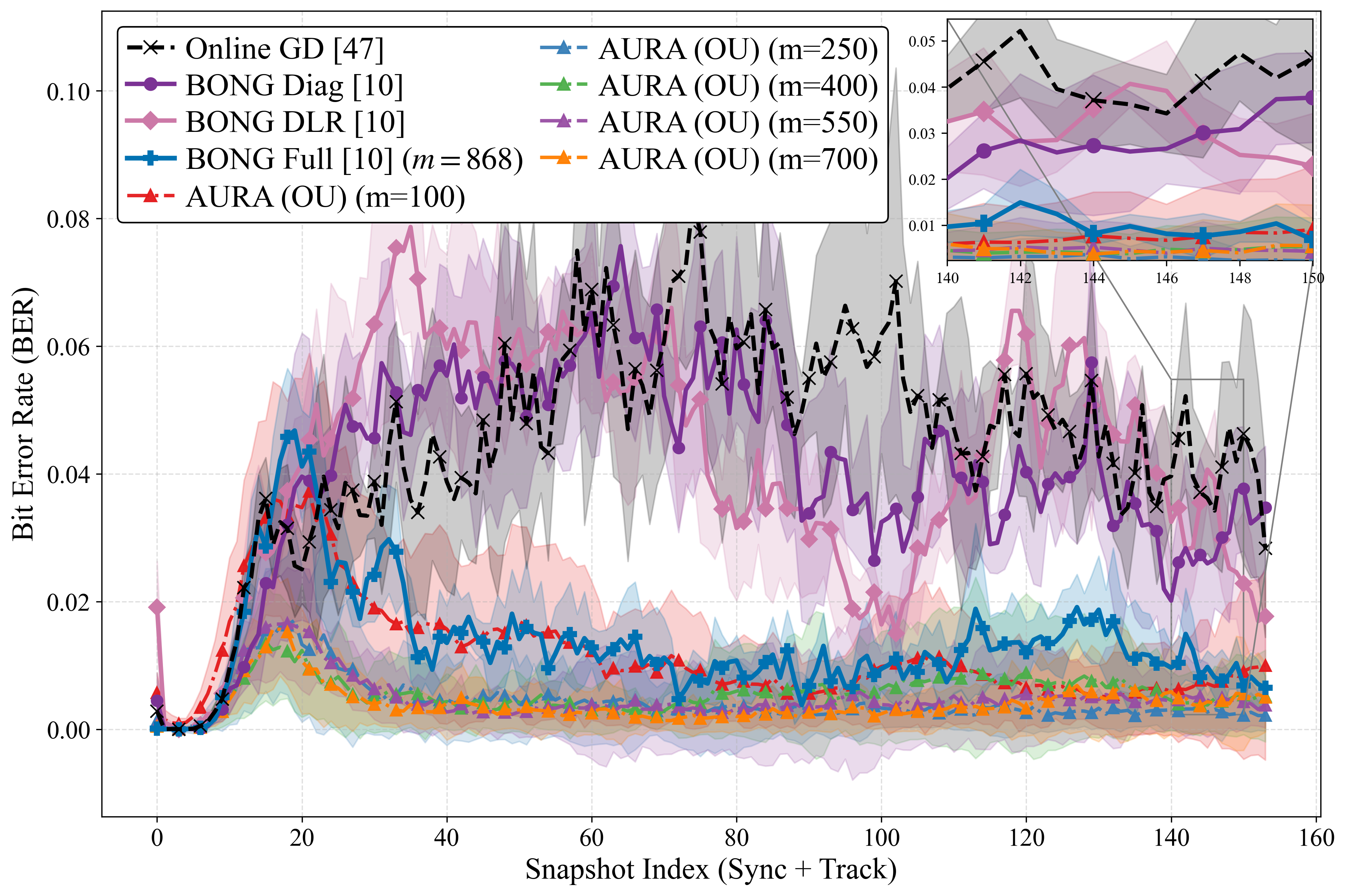}
        \caption{OU Ablation: BER vs. Snapshot Index.}
        \label{fig:ablation_ou_per_snapshot}
    \end{minipage}
\end{figure}

\begin{figure}[t]
    \centering
    \begin{minipage}[t]{0.48\linewidth}
        \centering
        \includegraphics[
            width=\linewidth
        ]{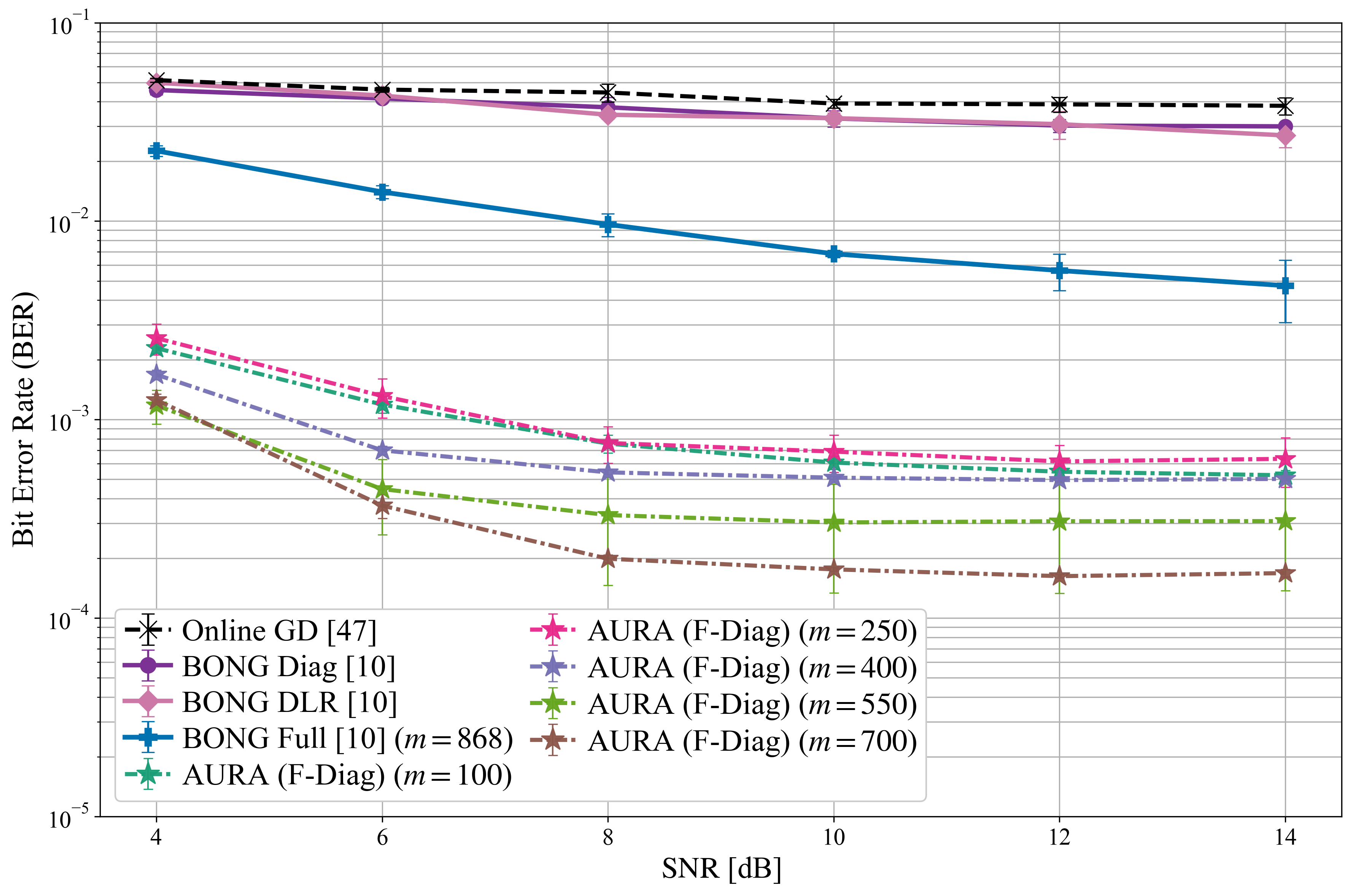}
        \caption{$F$-Diagonal Ablation: BER vs. SNR.}
        \label{fig:ablation_f_diag_linear}
    \end{minipage}
    \hfill
    \begin{minipage}[t]{0.48\linewidth}
        \centering
        \includegraphics[
            width=\linewidth
        ]{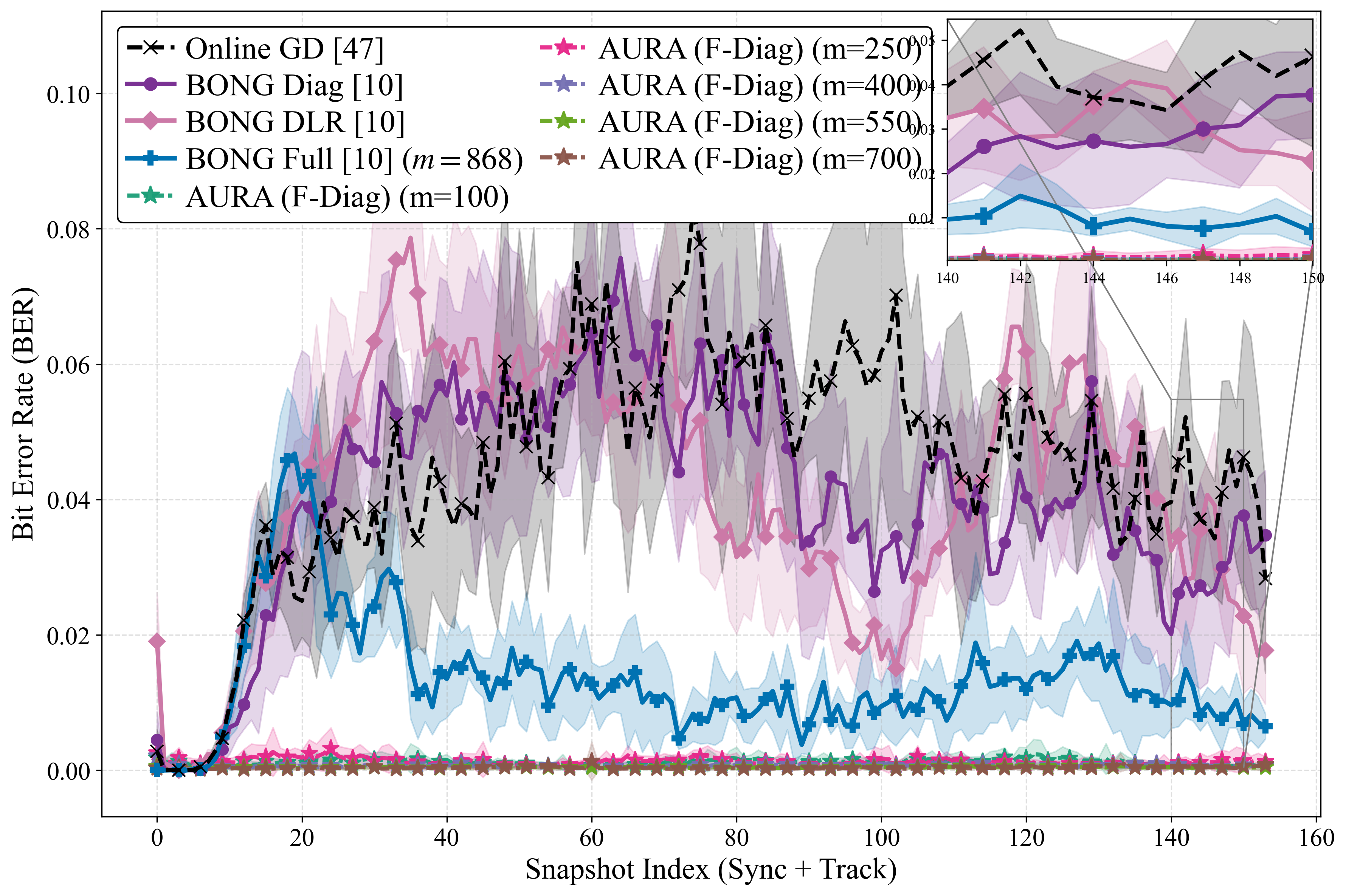}
        \caption{$F$-Diagonal Ablation: BER vs. Snapshot Index.}
        \label{fig:ablation_f_diag_per_snapshot}
    \end{minipage}
\end{figure}

The results in Figs.~\ref{fig:ablation_ou_linear}--\ref{fig:ablation_f_diag_per_snapshot} reveal several consistent trends. First, across all tested compression levels and for both temporal dynamics models, \ac{name} maintains a clear advantage over the corresponding full-parameter Bayesian baselines. This indicates that the gains of the proposed latent Bayesian adaptation framework do not rely on a narrowly tuned latent dimensionality, but persist across a broad range of compression levels.
Second, the results exhibit the expected compression--performance trade-off: increasing the latent dimension consistently improves detection accuracy, both in aggregate BER across the SNR sweep and in temporal BER throughout the adaptation trajectory. This behavior reflects the fact that larger latent spaces preserve a richer set of adaptation-relevant degrees of freedom, whereas excessive compression limits the representational capacity of the latent adaptation mechanism.
Comparing the two temporal dynamics parameterizations reveals that the learned $\myVec{F}$-based dynamics generally achieve stronger performance than the OU-based model when sufficient latent capacity is available, owing to their increased flexibility in modeling temporal evolution. However, both parameterizations display similar qualitative compression trends, suggesting that the benefits of latent Bayesian adaptation are robust to the particular state evolution model employed.

   \subsubsection{Cross-Dataset Generalization}
\label{subsec:cross_learning}

The wireless communication experiments in Section~\ref{sec:experiments} evaluate \ac{name} under matched train-test conditions, where both offline meta-training and online deployment are conducted using channel trajectories generated by the QuaDRiGa simulator. While this setting demonstrates the effectiveness of the proposed framework under realistic non-stationary fading, it does not assess the robustness of the learned latent adaptation space to distributional mismatch between offline and online environments.
To study this aspect, we consider a cross-dataset transfer setting in which \ac{name} is meta-trained offline on QuaDRiGa-generated channel trajectories and then deployed online on channels generated by COST2100~\cite{liu2012cost} (an alternative standardized stochastic wireless channel generator). This experiment evaluates whether the learned latent adaptation space captures transferable adaptation structure beyond the statistics of a single simulator.

\begin{figure}[t]
    \centering
    \begin{minipage}[t]{0.48\linewidth}
        \centering
        \includegraphics[width=\linewidth]{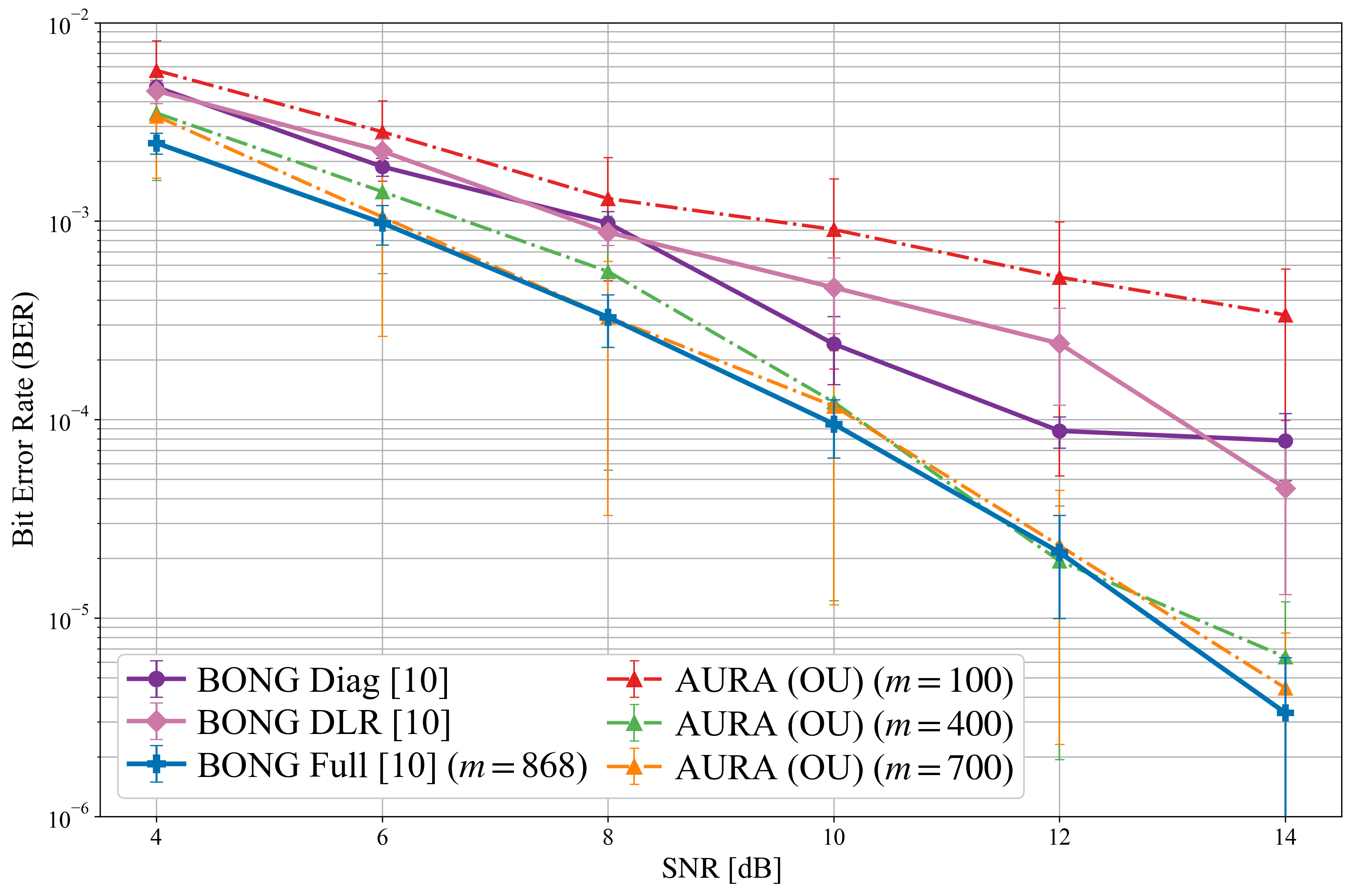}
        \caption{Cross-learning evaluation: BER vs. SNR.}
        \label{fig:cross_learning_snr}
    \end{minipage}
    \hfill
    \begin{minipage}[t]{0.48\linewidth}
        \centering
        \includegraphics[width=\linewidth]{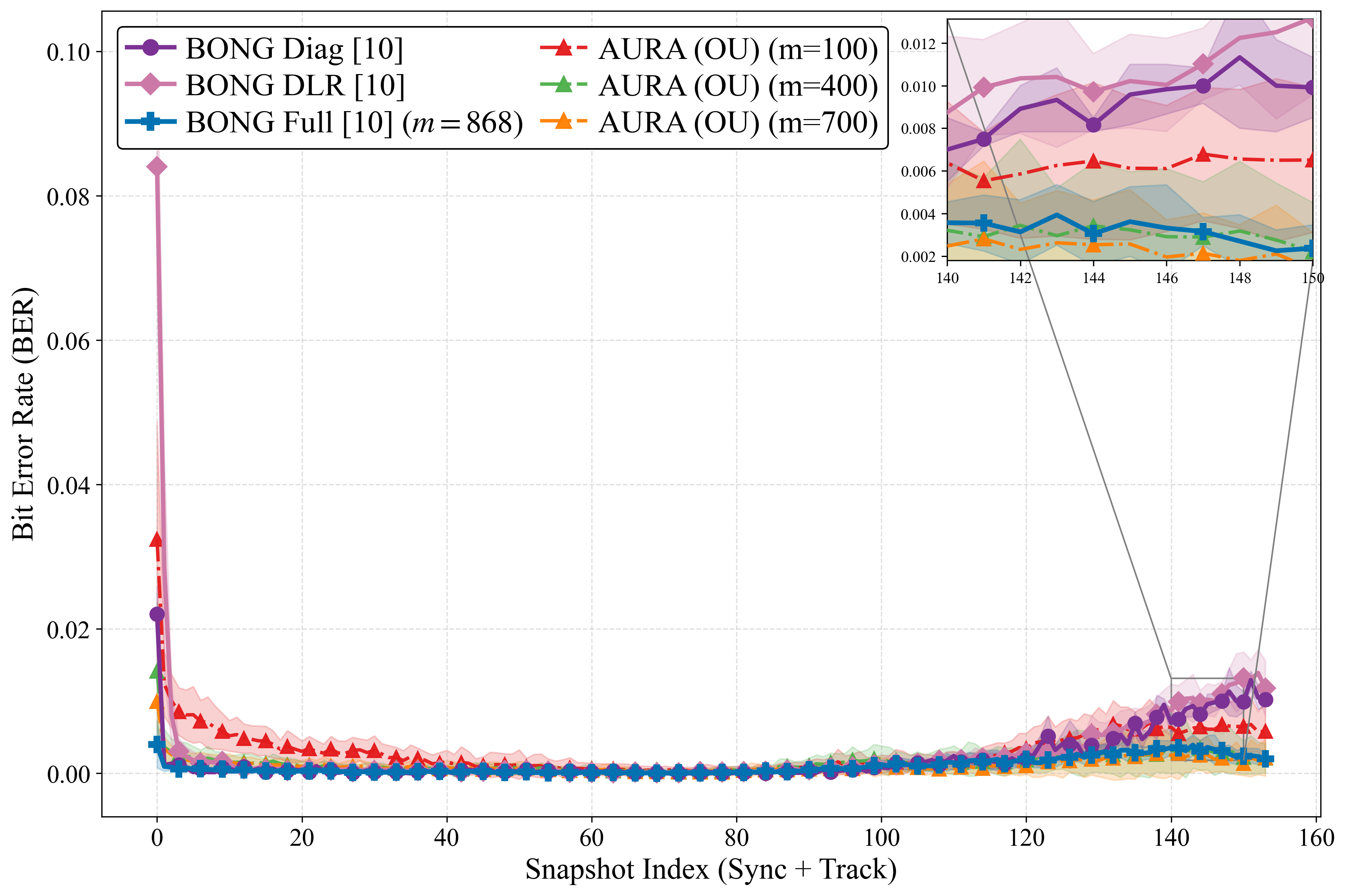}
        \caption{Cross-learning evaluation: BER vs. Snapshot Index.}
        \label{fig:cross_learning_snapshot}
    \end{minipage}
\end{figure}

The results reported in Figs.~\ref{fig:cross_learning_snr} and~\ref{fig:cross_learning_snapshot} show that \ac{name} remains effective under a pronounced train-test mismatch. In this cross-simulator setting, \ac{name} achieves BER performance that is broadly comparable to the leading full-parameter Bayesian baselines, while performing the online update in a substantially compressed latent space. Thus, although it does not provide a uniform BER advantage over the strongest full-space methods, the results indicate that the learned latent representation captures transferable adaptation structure beyond the simulator used for offline meta-training.



\subsubsection{Effect of Offline Meta-Learning.}
We further isolate the contribution of the offline meta-learning stage in the wireless receiver setting. To this end, we compare the full \ac{name} framework against a cold-start variant in which the same latent dimensionality and prediction model are used, but the latent representation is not learned offline. This comparison is carried out for both OU and diagonal-\(F\) prediction dynamics, while keeping the receiver architecture and online Bayesian update protocol fixed. Table~\ref{tab:offline_learning_comm} reports the tracking BER for two SNR values and different latent dimensions \(m\), where each warm-start/cold-start pair uses the same prediction model and latent dimension.

The results demonstrate that offline meta-learning is critical for reliable online tracking in the considered non-stationary communication scenario. Without offline learning, the latent-space update remains ineffective and yields high BER across prediction models and latent dimensions. In contrast, the warm-start \ac{name} variants consistently reduce the BER by orders of magnitude under the same online adaptation budget. This confirms that the observed gains do not stem merely from restricting the update to a lower-dimensional parameterization, but from learning an adaptation-aware latent geometry that aligns the online Bayesian updates with the dominant channel-induced variations.

\begin{table}[t]
\centering
\caption{Effect of offline learning in the QuaDRiGa communication scenario. Each pair compares the same prediction method and latent dimension with and without offline learning. BER in (\%)}
\label{tab:offline_learning_comm}
\small
\begin{tabular}{llccc}
\toprule
Prediction method & Latent dim. & Mode & SNR \(=8\) dB & SNR \(=10\) dB \\
\midrule
\multirow{2}{*}{OU} & \multirow{2}{*}{\(m=100\)} & warm-start & \(\mathbf{0.832 \pm 0.485}\) & \(\mathbf{0.512 \pm 0.379}\) \\
                    &                           & cold-start & \(14.399 \pm 3.024\) & \(11.542 \pm 1.054\) \\
\midrule
\multirow{2}{*}{OU} & \multirow{2}{*}{\(m=250\)} & warm-start & \(\mathbf{0.409 \pm 0.514}\) & \(\mathbf{0.292 \pm 0.418}\) \\
                    &                           & cold-start & \(14.662 \pm 1.541\) & \(12.225 \pm 1.259\) \\
\midrule
\multirow{2}{*}{OU} & \multirow{2}{*}{\(m=400\)} & warm-start & \(\mathbf{0.340 \pm 0.496}\) & \(\mathbf{0.249 \pm 0.353}\) \\
                    &                           & cold-start & \(14.828 \pm 2.880\) & \(12.320 \pm 0.643\) \\
\midrule
\multirow{2}{*}{OU} & \multirow{2}{*}{\(m=700\)} & warm-start & \(\mathbf{0.311 \pm 0.367}\) & \(\mathbf{0.210 \pm 0.152}\) \\
                    &                           & cold-start & \(14.870 \pm 1.626\) & \(12.980 \pm 1.487\) \\
\midrule
\multirow{2}{*}{F-Diag} & \multirow{2}{*}{\(m=100\)} & warm-start & \(\mathbf{0.0757 \pm 0.0079}\) & \(\mathbf{0.0608 \pm 0.0045}\) \\
                        &                           & cold-start & \(18.471 \pm 3.024\) & \(17.464 \pm 1.622\) \\
\midrule
\multirow{2}{*}{F-Diag} & \multirow{2}{*}{\(m=250\)} & warm-start & \(\mathbf{0.0763 \pm 0.0160}\) & \(\mathbf{0.0690 \pm 0.0148}\) \\
                        &                           & cold-start & \(18.677 \pm 1.541\) & \(17.508 \pm 1.810\) \\
\midrule
\multirow{2}{*}{F-Diag} & \multirow{2}{*}{\(m=400\)} & warm-start & \(\mathbf{0.0542 \pm 0.0014}\) & \(\mathbf{0.0510 \pm 0.0010}\) \\
                        &                           & cold-start & \(18.321 \pm 2.880\) & \(16.562 \pm 0.670\) \\
\midrule
\multirow{2}{*}{F-Diag} & \multirow{2}{*}{\(m=700\)} & warm-start & \(\mathbf{0.0199 \pm 0.0009}\) & \(\mathbf{0.0176 \pm 0.0004}\) \\
                        &                           & cold-start & \(18.225 \pm 1.626\) & \(17.027 \pm 1.815\) \\
\bottomrule
\end{tabular}
\end{table}

\subsubsection{Computational Cost versus Adaptation Accuracy.}
We complement the accuracy evaluation with a direct cost--accuracy analysis of the different adaptation rules.
Specifically, we compare the FLOPs required by each method against the resulting mean error at a fixed operating point of \(\mathrm{SNR}=12\) dB.
The error is measured on the linear QuaDRiGa channel, while the FLOPs are obtained using a dedicated cost-evaluation script tailored to the operations of each adaptation rule.
Diagonal covariance variants are evaluated using an element-wise implementation that exploits their structure, and BONG-DLR is evaluated with rank \(30\), as used in the reported experiments in \ref{fig:ber_compare_linear_mixed} and \ref{fig:ber_compare_nonlinear_mixed}.

Figure~\ref{fig:flops_vs_ber_snr12} reports the resulting trade-off.
The AURA variants occupy a favorable region of the cost-accuracy plane, achieving substantially lower error than the full-space BONG baseline while requiring fewer FLOPs.
Increasing the latent dimension provides a smooth trade-off between computational budget and adaptation quality.
The diagonal-\(F\) variant achieves the lowest error among the AURA configurations, with a moderate increase in cost relative to the OU dynamics.
In contrast, the diagonal and DLR BONG approximations reduce the cost of full-space Bayesian tracking but remain substantially less accurate in this setting.
These results indicate that adapting in a learned latent space preserves the relevant tracking directions while avoiding the computational burden of full-parameter Bayesian updates.

\begin{figure}[t]
    \centering
    \includegraphics[width=0.5\linewidth]{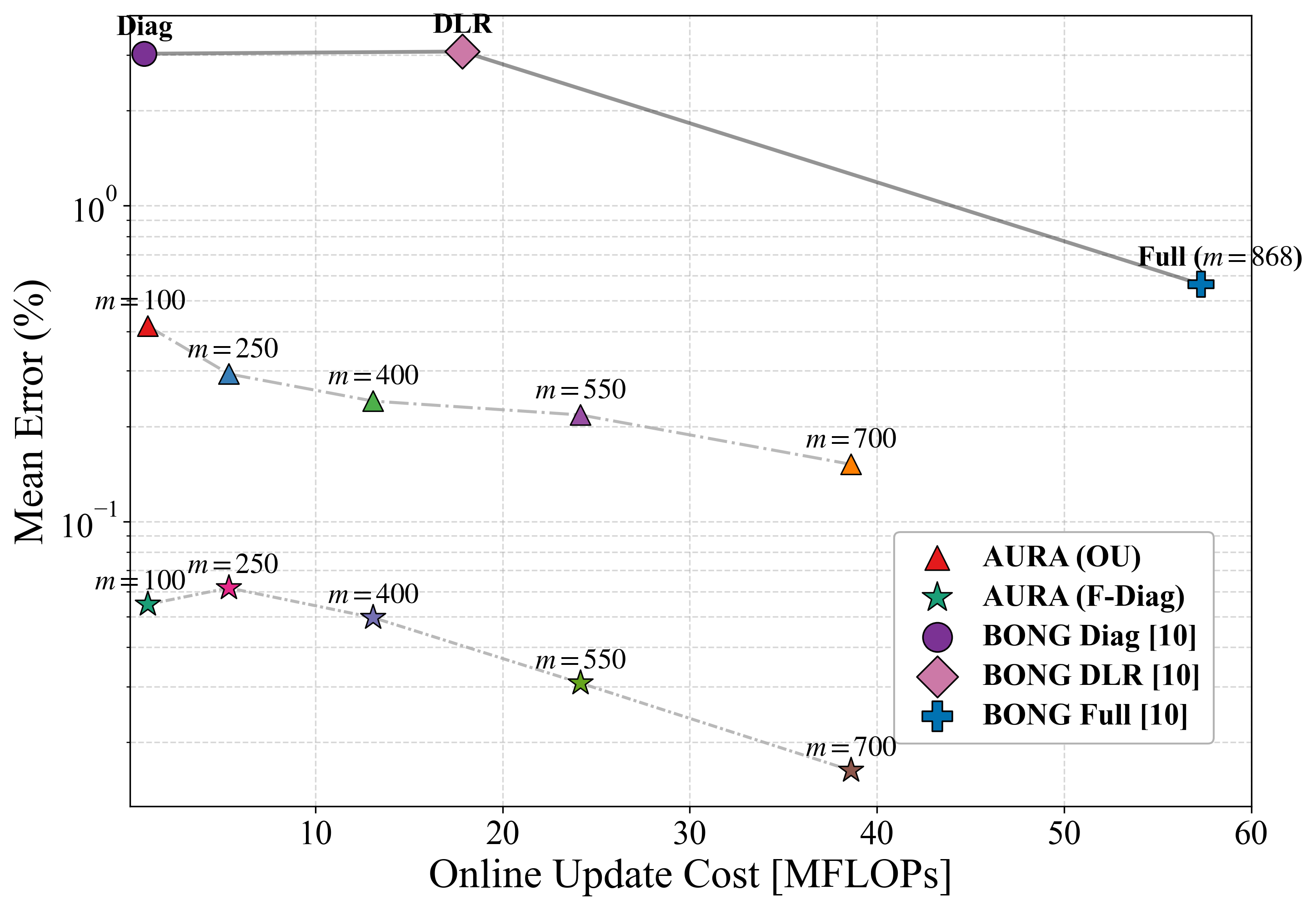}
    \caption{FLOPs-error trade-off on the linear QuaDRiGa channel at \(\mathrm{SNR}=12\) dB}
    \label{fig:flops_vs_ber_snr12}
\end{figure}

\subsubsection{Intermittent Supervision}
\label{subsec:intermittent_supervision}
The wireless experiments in Section~\ref{ssec:WirelessExp} assume that a small number of labeled pilot symbols is available at every frame. We next examine a more restrictive setting in which reliable supervision is available only intermittently. Specifically, we provide $K$ labeled pilots once every $I$ frames, with $I\in\{1,\ldots,5\}$ and $K\in\{2,4,6\}$. At supervised frames, the receiver performs the standard prediction--correction update using the available pilots. At intervening unlabeled frames, the correction step in \eqref{eq:update} is omitted, while the latent mean and covariance continue to evolve according to the prediction step in \eqref{eq:predict}. Importantly, the channel realization continues to vary at every frame; hence, increasing $I$ requires the receiver to track multiple successive channel variations without receiving an observation correction.

Table~\ref{tab:intermittent_supervision} reports the resulting \ac{ber} in $\%$ over all $15$ combinations of supervision interval and pilot budget.
\begin{table}[t]
\centering
\caption{BER under intermittent supervision (\%, mean $\pm$ standard deviation over three seeds).}
\label{tab:intermittent_supervision}
\small
\begin{tabular}{ccccc}
\toprule
$I$ & $K$ & \ac{name} & BONG Full & BONG Diag \\
\midrule
1 & 2 & $\mathbf{1.04 \pm 0.21}$ & $12.09 \pm 1.32$ & $17.02 \pm 1.48$ \\
1 & 4 & $\mathbf{0.10 \pm 0.02}$ & $3.15 \pm 0.35$ & $8.54 \pm 0.83$ \\
1 & 6 & $\mathbf{0.07 \pm 0.01}$ & $1.11 \pm 0.04$ & $4.14 \pm 0.23$ \\
\midrule
2 & 2 & $\mathbf{6.19 \pm 0.61}$ & $25.00 \pm 1.25$ & $29.38 \pm 0.65$ \\
2 & 4 & $\mathbf{0.70 \pm 0.07}$ & $12.17 \pm 0.38$ & $17.56 \pm 0.68$ \\
2 & 6 & $\mathbf{0.20 \pm 0.07}$ & $6.11 \pm 0.17$ & $12.18 \pm 0.28$ \\
\midrule
3 & 2 & $\mathbf{13.16 \pm 1.91}$ & $34.03 \pm 1.66$ & $37.60 \pm 1.22$ \\
3 & 4 & $\mathbf{3.07 \pm 0.37}$ & $19.55 \pm 1.17$ & $24.12 \pm 2.07$ \\
3 & 6 & $\mathbf{1.04 \pm 0.15}$ & $12.52 \pm 1.20$ & $18.46 \pm 1.88$ \\
\midrule
4 & 2 & $\mathbf{18.31 \pm 1.25}$ & $39.25 \pm 1.55$ & $39.29 \pm 0.38$ \\
4 & 4 & $\mathbf{5.28 \pm 0.27}$ & $25.21 \pm 0.40$ & $29.97 \pm 0.88$ \\
4 & 6 & $\mathbf{1.99 \pm 0.19}$ & $17.93 \pm 0.63$ & $22.11 \pm 0.94$ \\
\midrule
5 & 2 & $\mathbf{24.00 \pm 0.88}$ & $42.47 \pm 0.46$ & $43.21 \pm 1.26$ \\
5 & 4 & $\mathbf{10.60 \pm 1.08}$ & $29.15 \pm 0.39$ & $34.20 \pm 1.47$ \\
5 & 6 & $\mathbf{4.54 \pm 0.96}$ & $22.04 \pm 0.43$ & $27.48 \pm 2.05$ \\
\bottomrule
\end{tabular}
\end{table}
%
The results reveal two consistent trends. First, performance degrades as supervision becomes less frequent or fewer pilots are available, as expected when the receiver must rely more heavily on prediction between observation corrections. Second, \ac{name} achieves the lowest \ac{ber} in all $15$ evaluated configurations, with its advantage over parameter-space Bayesian tracking persisting as supervision becomes increasingly sparse.

These findings show that \ac{name} does not require an observation correction after every channel variation. The learned latent dynamics can propagate the adaptation state across several distribution changes, while subsequent labeled observations correct accumulated prediction errors. At the same time, the degradation observed for longer supervision intervals highlights the growing dependence on the accuracy of the learned dynamics. We therefore interpret these results as evidence that \ac{name} remains effective under moderate intermittent supervision, rather than as evidence of stability over arbitrarily long periods without feedback.

\subsubsection{Full Transition Matrix Ablation.}
We additionally evaluate an ablation in which the latent dynamics are parameterized by a fully learned transition matrix $\myMat{F}$, using the same linear QuaDRiGa setting considered in \eqref{eq:MIMO}. This variant provides a more expressive temporal model than the OU and diagonal-$\myMat{F}$ parameterizations, since it allows arbitrary linear coupling between latent coordinates. As shown in Fig.~\ref{fig:ablation_full_f}, the full-$\myMat{F}$ model can achieve strong tracking performance, particularly for sufficiently large latent dimensions.

However, this expressiveness comes with a substantially higher computational cost. In particular, the covariance prediction step in~\eqref{eq:predict} involves the multiplication
$\myMat{F}\myMat{\Sigma}\myMat{F}^{\top}$, whose complexity scales as $\mathcal{O}(m^3)$ for a dense transition matrix and covariance. Consequently, although the full-$\myMat{F}$ variant is informative as an ablation and can be competitive in terms of BER, it is only attractive from a runtime perspective for small latent dimensions, roughly $m\leq 250$ in our implementation. For larger latent states, the cubic prediction cost becomes inconsistent with the goal of fast online adaptation. This motivates our focus in the main experiments on the OU and diagonal-$\myMat{F}$ dynamics, which retain a structured temporal prior while reducing the covariance prediction cost relative to a full transition matrix.

\begin{figure}[t]
    \centering
    \begin{subfigure}[t]{0.48\linewidth}
        \centering
        \includegraphics[width=\linewidth]{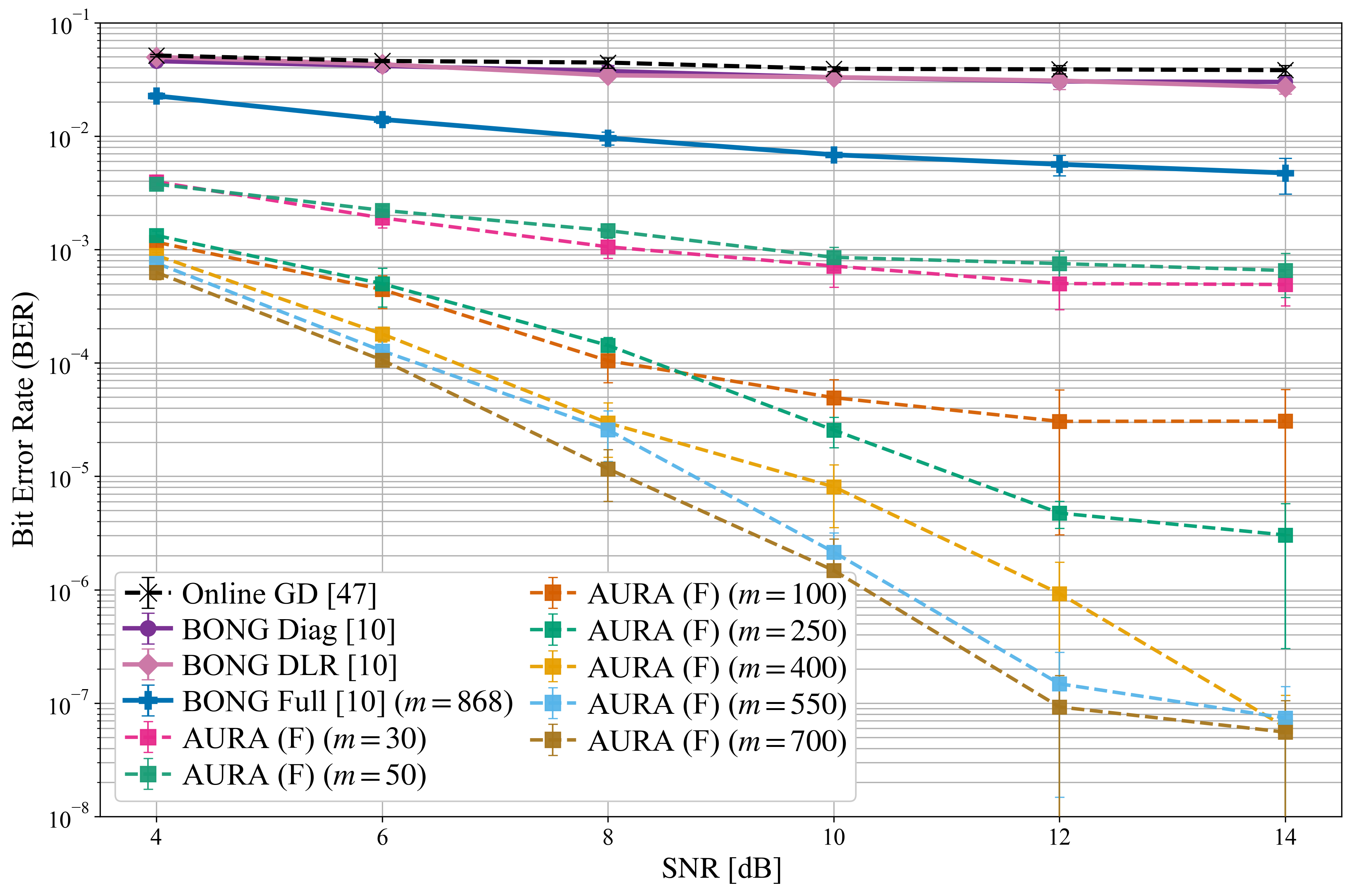}
        \caption{BER versus SNR.}
        \label{fig:ablation_full_f_snr}
    \end{subfigure}
    \hfill
    \begin{subfigure}[t]{0.48\linewidth}
        \centering
        \includegraphics[width=\linewidth]{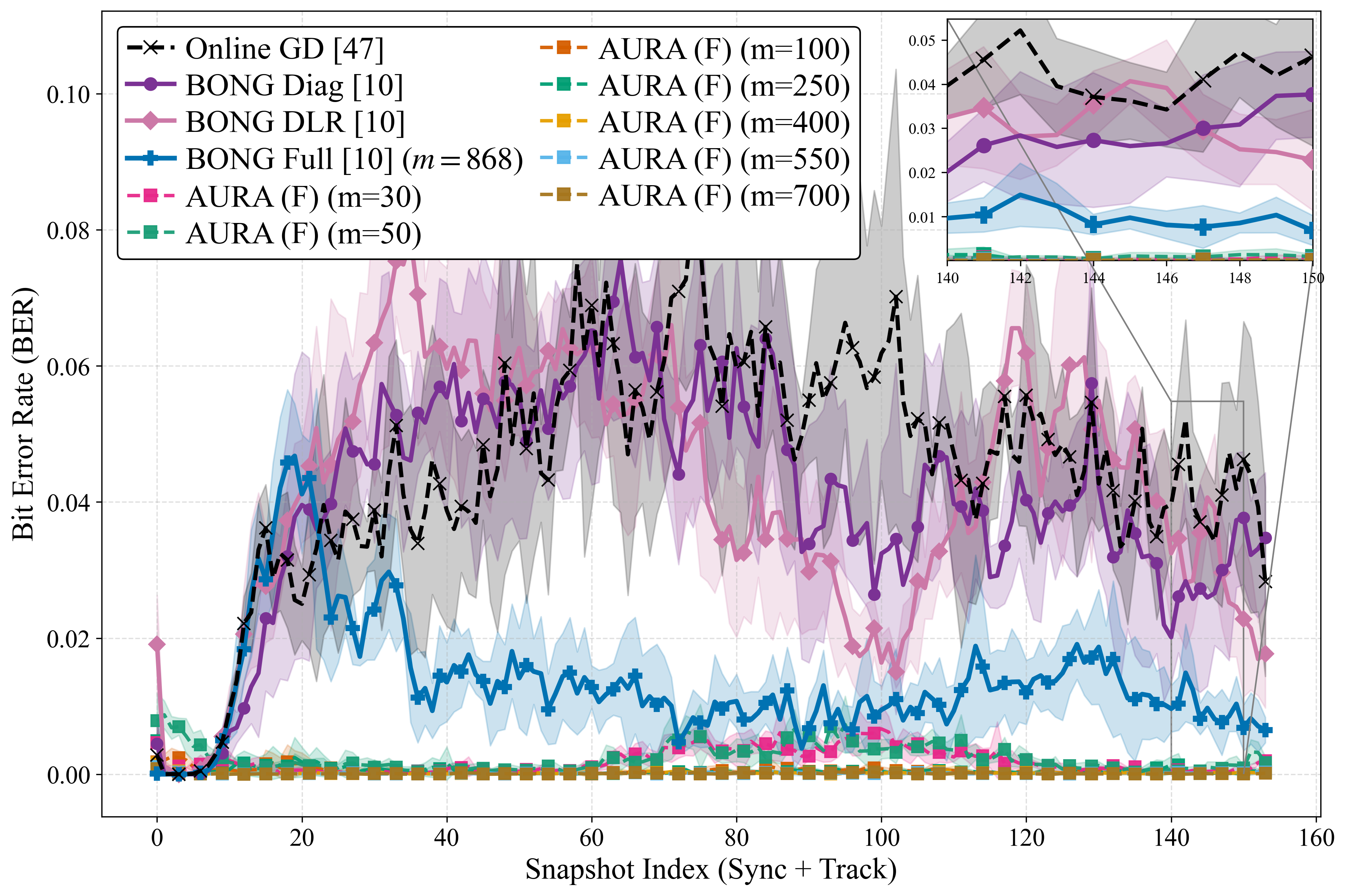}
        \caption{BER along the tracking horizon.}
        \label{fig:ablation_full_f_snapshot}
    \end{subfigure}
    \caption{Ablation of full-\(\myMat{F}\) latent dynamics on linear QuaDRiGa channel.}
    \label{fig:ablation_full_f}
\end{figure}

\subsection{Additional Non-stationary Image Classification Experiments}
\label{app:vision_ablation}

\subsubsection{Lifting Map Architecture}
\label{subsec:lifting_map_architecture}

A central component of \ac{name} is the learned lifting map
$\mathcal{G}:\mathbb{R}^{m}\rightarrow\mathbb{R}^{d}$, which determines how
the low-dimensional latent state is mapped to the adapted model parameters.
We next examine whether the effectiveness of the learned adaptation space
depends on restricting $\mathcal{G}$ to an affine transformation, or whether
additional nonlinear expressiveness is beneficial.

We compare two lifting-map parameterizations while keeping the latent
dimension and online adaptation protocol fixed. The affine variant corresponds
to
\[
\mathcal{G}_{\rm aff}(\myVec{z})
=
\myVec{\phi}+\myMat{A}\myVec{z},
\]
where $\myVec{\phi}$ denotes the source-model parameters and $\myMat{A}$ is
learned during offline meta-training. We additionally consider a residual
nonlinear lifting map of the form
\[
\mathcal{G}_{\rm nonlin}(\myVec{z})
=
\myVec{\phi}
+
\myMat{A}\myVec{z}
+
\myVec{\psi}(\myVec{z}),
\]
where $\myVec{\psi}(\cdot)$ is a learned MLP with a single hidden layer of
width $500$ and GELU activation.  Both variants are subsequently meta-learned end-to-end through
\ref{alg:meta_learning}

Table~\ref{tab:lifting_map_architecture} reports the classification error on
CIFAR-10-C and CIFAR-100-C for different numbers of labeled adaptation
samples.
\begin{table}[t]
\centering
\caption{Effect of the lifting-map architecture on classification error
(\%, mean $\pm$ standard deviation over three seeds).}
\label{tab:lifting_map_architecture}
\small
\begin{tabular}{lccc}
\toprule
Dataset & \# Labeled Samples & Affine & Nonlinear \\
\midrule
\multirow{3}{*}{CIFAR-10-C}
& 5  & $16.13 \pm 1.50$ & $\mathbf{10.21 \pm 0.40}$ \\
& 10 & $16.84 \pm 0.51$ & $\mathbf{10.15 \pm 0.81}$ \\
& 15 & $14.28 \pm 2.87$ & $\mathbf{9.75 \pm 0.54}$ \\
\midrule
\multirow{3}{*}{CIFAR-100-C}
& 5  & $29.47 \pm 0.62$ & $\mathbf{26.34 \pm 1.77}$ \\
& 10 & $29.65 \pm 1.97$ & $\mathbf{25.21 \pm 1.04}$ \\
& 15 & $26.66 \pm 3.89$ & $\mathbf{23.00 \pm 0.79}$ \\
\bottomrule
\end{tabular}
\end{table}
The results highlight the importance of both learning and appropriately parameterizing the lifting map. As shown by the communication ablation in Table~\ref{tab:offline_learning_comm}, merely restricting adaptation to a low-dimensional parameterization is not sufficient: without offline learning of the latent representation, Bayesian tracking fails to provide effective adaptation. Once the lifting geometry is learned, however, even an affine map yields strong performance. Increasing the expressiveness of $\mathcal{G}(\cdot)$ through a nonlinear parameterization further improves accuracy consistently across both CIFAR-10-C and CIFAR-100-C and across all considered supervision budgets. These results suggest that the learned latent geometry captures the dominant adaptation directions, while nonlinear lifting provides additional flexibility for representing the relation between the latent state and the evolving model parameters.

This gain in expressiveness introduces a corresponding computational trade-off. Each \ac{ekf} correction requires evaluating the observation Jacobian and therefore differentiating through $\mathcal{G}(\cdot)$, while parameter reconstruction additionally requires its forward evaluation. Consequently, deeper or more expressive lifting maps increase both forward and backward computational costs during online adaptation. The lifting-map architecture therefore provides a direct accuracy-latency design choice: affine mappings offer a simpler and more efficient realization, whereas nonlinear mappings can provide improved adaptation accuracy at higher computational cost. This overhead could potentially be reduced through dedicated hardware or compilation and optimized execution of the lifting and Jacobian computations; such implementation-level optimizations are not investigated in this work.

\subsubsection{Cross-Dataset Generalization}
The non-stationary image classification experiments in Section~\ref{sec:experiments} evaluate \ac{name} under gradual corruption streams generated from the same corruption pool used during offline meta-training. Although the offline and online phases remain distinct due to different temporal mixing trajectories, this setting does not test whether the learned latent adaptation space generalizes beyond the corruption families observed during training.
To assess this aspect, we introduce an explicit corruption-family split between offline and online phases. Specifically, \ac{name} is meta-trained offline using one subset of corruption types and subsequently deployed online on a disjoint set of previously unseen corruptions. This setting evaluates whether the learned latent adaptation mechanism captures transferable adaptation structure beyond the specific degradation families encountered during offline training.

\begin{table*}[t]
\centering
\caption{Cross-corruption generalization under mixed-gradual CIFAR-10.}
\label{tab:cross_corruption}
\renewcommand{\arraystretch}{1.15}
\setlength{\tabcolsep}{6pt}
\resizebox{\linewidth}{!}{%
\begin{tabular}{lcccccccc}
\hline
\textbf{Method} 
& \textbf{ROID} 
& \textbf{AURA} 
& \textbf{BN-EMA} 
& \textbf{BN-Test} 
& \textbf{EKF-FC} 
& \textbf{BN-Alpha} 
& \textbf{Source} 
& \textbf{Online GD} \\
\hline
\textbf{Classification Error (\%)}
& $9.88 \pm 1.01$
& $9.99 \pm 1.30$
& $11.50 \pm 2.64$
& $11.62 \pm 1.29$
& $12.31 \pm 1.02$
& $13.55 \pm 2.47$
& $17.07 \pm 2.67$
& $16.21 \pm 2.04$ \\
\hline
\end{tabular}
}
\end{table*}


The results in Table~\ref{tab:cross_corruption} show that \ac{name} remains highly effective under this transfer setting, achieving strong performance while remaining competitive with the best-performing baseline and clearly outperforming the remaining alternatives. This indicates that the learned latent adaptation space captures adaptation-relevant structure that transfers across distinct corruption families and temporal drift patterns.
These findings complement the main-paper image classification results by demonstrating that the adaptation mechanisms learned by \ac{name} generalize beyond the precise distribution shifts observed during offline meta-training, further supporting the robustness and transferability of the proposed meta-learned latent adaptation framework. 


\subsubsection{Comparison with Efficient Adaptation Methods}
\label{subsec:efficient_adaptation}

A central motivation of \ac{name} is to make rapid online adaptation
computationally tractable by operating in a compact learned representation.
We therefore compare against two complementary classes of efficient
adaptation methods. First, we consider LoRA~\cite{hu2022lora} as a
parameter-efficient supervised adaptation baseline, asking whether restricting
updates to a compact low-rank parameterization can reproduce the gains of the
learned latent representation. Second, we compare against the efficiency-oriented
TTA methods EcoTTA~\cite{song2023ecotta} and
ELaTTA~\cite{luo2025elatta}, which operate without ground-truth supervision
and explicitly target low-cost deployment.

\paragraph{Comparison with LoRA.}
We evaluate LoRA~\cite{hu2022lora} under the same supervised streaming
protocol as \ac{name}. Following a hyperparameter sweep, we use rank $r=8$
and adapt the classification head together with selected convolutional
layers: the first convolutional stage for CIFAR-10-C and the second stage
for CIFAR-100-C. We compare LoRA with both affine and nonlinear realizations
of the learned lifting map.

\begin{table}[t]
\centering
\caption{Classification error (\%, mean $\pm$ standard deviation over three
seeds) for LoRA and \ac{name}.}
\label{tab:lora_comparison}
\small
\begin{tabular}{lccc}
\toprule
Dataset & LoRA & \ac{name}, Affine & \ac{name}, Nonlinear \\
\midrule
CIFAR-10-C
& $15.73 \pm 1.23$
& $14.28 \pm 2.87$
& $\mathbf{9.75 \pm 0.54}$ \\
CIFAR-100-C
& $28.90 \pm 1.63$
& $26.66 \pm 3.89$
& $\mathbf{23.00 \pm 0.79}$ \\
\bottomrule
\end{tabular}
\end{table}

The comparison with LoRA isolates the effect of the adaptation
parameterization under supervised streaming conditions. Both \ac{name}
variants improve upon LoRA, while the nonlinear lifting map provides the
largest gain. This indicates that restricting adaptation to a compact
parameterization alone does not account for the performance of \ac{name};
learning the adaptation representation from non-stationary trajectories
provides an additional benefit.

\paragraph{Comparison with Efficient TTA.}
We further compare against EcoTTA~\cite{song2023ecotta} and
ELaTTA~\cite{luo2025elatta} on the gradual mixed-corruption streams used in
the main image-classification experiments. These methods retain their
respective unsupervised adaptation procedures, whereas \ac{name} uses five
labeled samples per update. The comparison therefore reflects distinct
supervision regimes and is intended to characterize the resulting
accuracy--efficiency trade-off rather than provide a supervision-matched
benchmark.

\begin{table}[t]
\centering
\caption{Classification error (\%) and adaptation latency for efficient
test-time adaptation methods.}
\label{tab:efficient_tta}
\small
\begin{tabular}{lcccc}
\toprule
Method & Supervision & CIFAR-10-C & CIFAR-100-C & Latency (ms/sample) for CIFAR-100-C) \\
\midrule
Source
& None
& 41.81
& 37.41
& -- \\
\ac{name}
& 5 labeled
& \textbf{16.13}
& \textbf{29.47}
& 2.01 \\
EcoTTA~\cite{song2023ecotta}
& Unlabeled
& 20.18
& 41.20
& 1.15 \\
ELaTTA~\cite{luo2025elatta}
& Unlabeled
& 44.82
& 39.98
& 0.41 \\
\bottomrule
\end{tabular}
\end{table}

EcoTTA and ELaTTA avoid labeled feedback and achieve lower adaptation
latency, but incur higher classification error on the considered streams.
EcoTTA remains competitive on CIFAR-10-C but does not improve over the
source model on CIFAR-100-C, while ELaTTA does not improve upon the source
model in either setting. These results place \ac{name} at a different
operating point in the supervision--accuracy--efficiency trade-off: it uses
sparse reliable supervision to obtain substantially stronger adaptation
accuracy while retaining low online computational cost.

\subsubsection{Robustness to Abrupt Distribution Shifts}
\label{subsec:abrupt_shifts}
The main image-classification experiments consider gradually evolving
distribution shifts, which are naturally aligned with the temporal model
learned by \ac{name}. We therefore evaluate whether this learned dynamical
prior remains useful when the deployment distribution changes abruptly,
despite offline meta-training being performed exclusively on gradual
mixed-corruption trajectories.

We evaluate CIFAR-10-C at severity~$5$ under five abrupt corruption shifts:
pixelation, contrast, zoom blur, fog, and JPEG compression. Following each
shift, the new corruption is maintained for $40$ online update steps, with
$15$ labeled adaptation samples available at each step. Performance is
measured immediately after the transition and after $10$, $20$, and $40$
updates. We evaluate both affine and nonlinear lifting maps using the same
latent dimension, $m=500$.

\begin{table}[t]
\centering
\caption{Recovery from abrupt distribution shifts on CIFAR-10-C. Mean
classification error (\%) across five corruption shifts.}
\label{tab:abrupt_shifts}
\small
\begin{tabular}{lcccc}
\toprule
Lifting map
& After shift & 10 updates & 20 updates & 40 updates \\
\midrule
Affine
& 45.96 & 27.68 & 21.74 & 19.44 \\
Nonlinear
& 42.44 & 23.30 & 19.05 & \textbf{17.66} \\
\bottomrule
\end{tabular}
\end{table}

As shown in Table~\ref{tab:abrupt_shifts}, both variants exhibit substantial
recovery following the abrupt transition, despite the mismatch between the
deployment dynamics and those used for offline meta-training. The nonlinear
lifting map consistently attains lower error throughout the recovery horizon,
while both parameterizations reduce the initial post-shift error by
approximately $58\%$ after $40$ updates.

These results indicate that the learned temporal dynamics act as a predictive
prior rather than imposing a rigid smoothness constraint on the adaptation
trajectory. When an abrupt shift invalidates the prediction, subsequent
innovation-based \ac{ekf} corrections can progressively move the latent state
toward the new regime. Thus, while gradual variation remains the setting most
naturally matched to \ac{name}, the learned latent dynamics do not prevent
effective recovery from abrupt distribution changes.

\section{Proof-of-Concept: Adaptive Deep Software-Defined Radio}
\label{app:demo} 

To complement the simulation-based results in Section~\ref{sec:experiments}, we provide a proof-of-concept hardware demonstration validating the practical deployability of \ac{name} in a real-time communication system. Specifically, we implement a self-adaptive neural WiFi receiver using \ac{name}-style online adaptation within an over-the-air IEEE 802.11a communication link realized on software-defined radio hardware. This demonstration is intended to assess whether the computational efficiency of the proposed latent Bayesian adaptation framework translates into practical millisecond-order online adaptation under realistic hardware and latency constraints. As shown below, the resulting prototype confirms that \ac{name} enables real-time adaptive neural reception on commodity RF platforms using CPU-only inference and adaptation.

\subsection{Hardware Setup and Adaptive Receiver Architecture}
\label{app:demo_setup}
The proof-of-concept system implements an over-the-air IEEE 802.11a OFDM WiFi link using PlutoSDR+ devices as RF front-ends. The deployed communication setup follows a standard packetized WiFi transmission structure with 64 OFDM subcarriers, of which 52 are active (48 data and 4 pilot subcarriers). We consider a block-fading regime in which the wireless channel is approximately constant within each packet but varies continuously across packets, inducing the non-stationary adaptation setting targeted by \ac{name}.

\begin{figure}
    \centering
    \includegraphics[width=\linewidth]{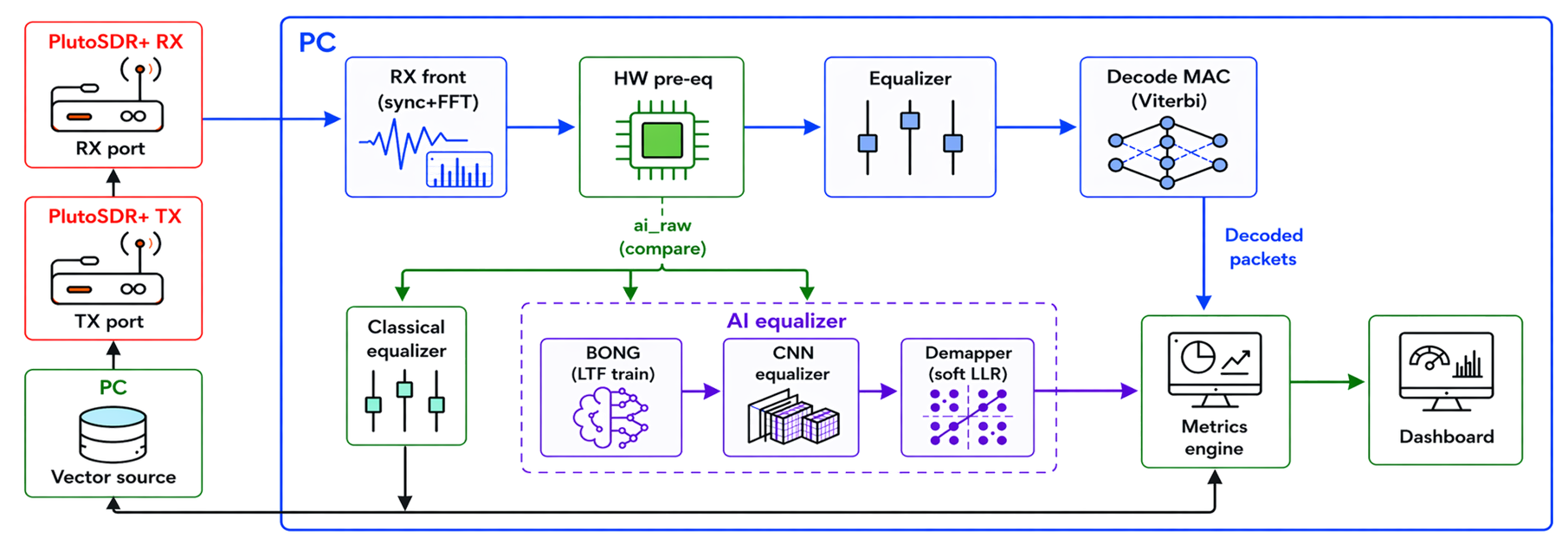}
    \caption{Hardware proof-of-concept setup for the self-adaptive neural WiFi receiver. The adaptive receiver is deployed inline with a conventional classical receiver within an over-the-air IEEE 802.11a communication link implemented using PlutoSDR+ RF front-ends, enabling simultaneous evaluation under identical channel realizations.}
    \label{fig:hardware_setup}
\end{figure}

A block diagram of the experimental platform is shown in Fig.~\ref{fig:hardware_setup}. The adaptive receiver consists of a lightweight learnable 1D-CNN equalizer containing approximately 1.8K trainable parameters, followed by an analytical modulation-aware demapper that produces soft bit log-likelihood ratios for downstream decoding. Online adaptation is performed using the known WiFi preamble (long-training-field symbols) available at the beginning of each packet, enabling the receiver to refine its parameters prior to processing the data-bearing payload symbols.

To enable controlled comparison, the adaptive neural receiver is executed inline alongside a conventional  classical receiver based on LMMSE equalization, with both receivers processing the same received packets simultaneously. This allows direct measurement of the corresponding bit-error rates under identical channel realizations. Importantly, the online adaptation pipeline is implemented entirely in NumPy and executed on CPU without GPU acceleration.

\subsection{Hardware Performance and Runtime Characterization}
\label{app:demo_results}

The resulting hardware performance is illustrated in Fig.~\ref{fig:hardware_results}, which compares the measured bit-error rate of the adaptive neural receiver against that of the conventional classical receiver across varying \ac{snr} and \ac{cfo} conditions. The results show that augmenting the receiver with online neural adaptation substantially enlarges the reliable operating regime, particularly under challenging low-SNR and high-CFO conditions where the classical receiver begins to degrade significantly. In several operating regions, the adaptive neural receiver achieves markedly improved detection performance, demonstrating that the learned adaptive equalization mechanism provides tangible robustness benefits in real over-the-air deployments.

\begin{figure}
    \centering
    \includegraphics[width=\linewidth]{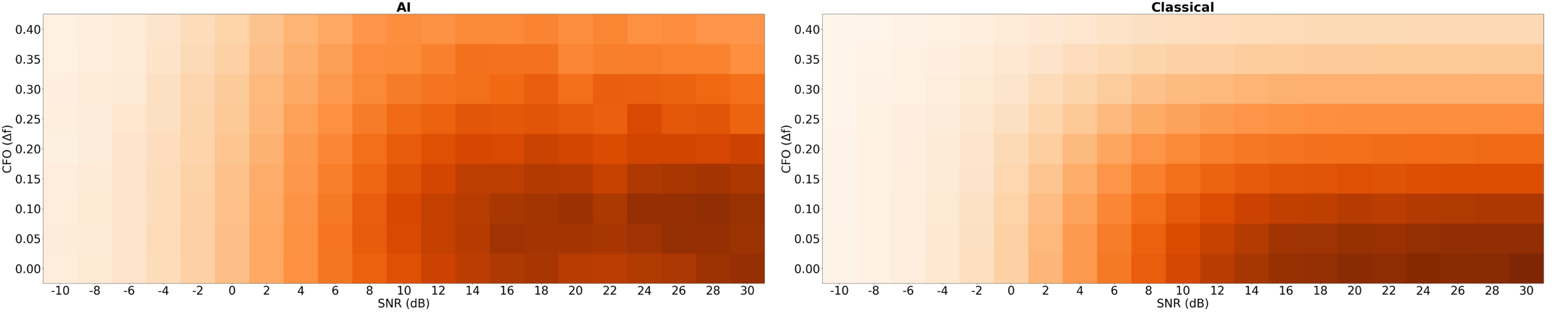}
    \caption{Average \ac{ber} heatmaps versus \ac{snr} and \ac{cfo} level for neural receiver adapted with \ac{name} (left) compared to classical channel estimation + LMMSE equalization receiver (right)}
    \label{fig:hardware_results}
\end{figure}

To assess deployment feasibility, Table~\ref{tab:hardware_runtime} reports the measured per-packet runtime of the adaptive receiver pipeline for multiple neural equalizer architectures of increasing complexity. Specifically, we report both inference latency and online adaptation latency for the lightweight 1.8K-parameter CNN used in the proof-of-concept, as well as for two deeper CNN variants. All runtimes are measured on CPU without GPU acceleration.

\begin{table}[t]
\centering
\caption{Measured per-packet runtime of the adaptive neural receiver for different CNN equalizer architectures. All runtimes are reported in milliseconds and measured using CPU-only execution.}
\label{tab:hardware_runtime}
\small
\begin{tabular}{lcc}
\toprule
Architecture & Inference Runtime [ms] & Adaptation Runtime [ms] \\
\midrule 
1.8K-Parameter CNN        & 0.084 & 10.573 \\
6.6k-Parameter CNN                  & 0.113 & 40.414 \\
8.2k-Parameter CNN                & 0.221 & 84.393 \\
\bottomrule
\end{tabular}
\end{table}

The measured runtimes confirm that \ac{name}-style online adaptation remains feasible within millisecond-order latency budgets even when deployed on modest CPU hardware. In particular, the lightweight neural receiver used in the proof-of-concept performs full online adaptation in tens of milliseconds without hardware acceleration, while larger models remain deployable at increased but still practical latency. These findings validate that the computational structure underlying \ac{name} enables real-world deployment under realistic hardware constraints, and suggest that further latency reductions are attainable through dedicated optimized implementations or edge-accelerated inference.

\section{Limitations}
\label{app:limitations}
While \ac{name} provides an efficient framework for Bayesian online adaptation via latent-space filtering, several limitations should be acknowledged:

\begin{itemize}
    \item 
    Although \ac{name} substantially reduces the dimensionality of the online adaptation problem, scaling to extremely large models or broad parameter blocks remains challenging. Compressing very high-dimensional parameter spaces into a single latent state introduces memory and optimization bottlenecks, and requires balancing latent compactness against sufficient expressiveness for accurate parameter reconstruction and adaptation.

    \item
    The effectiveness of \ac{name} depends not only on learning a suitable latent representation, but also on accurately modeling the temporal evolution of the latent state. Since the prediction step propagates the latent estimate according to an assumed or learned transition model, performance may degrade when deployment-time dynamics differ substantially from those observed during offline training. While the correction step can compensate using new observations, inaccurate predictions may reduce adaptation quality and stability.

    \item 
    The proposed state-space formulation is naturally best suited for gradually varying environments, where the previous latent state provides an informative prior for the next update. Under abrupt distribution shifts or regime changes, the prediction model may become temporarily misleading, potentially requiring complementary mechanisms such as change-point detection, covariance inflation, or latent-state reset strategies.

    \item 
    In its current form, \ac{name} assumes access to labeled samples during online adaptation. While this is natural in several streaming settings (e.g., pilot-aided communications), it may limit applicability in fully unsupervised deployment scenarios. Extending the framework to unlabeled adaptation via pseudo-labeling or self-supervised objectives is possible in principle, but may introduce sensitivity to label noise and error accumulation.

    \item \textbf{Approximate Filtering Assumptions.}
    The online adaptation rule relies on first-order linearization of the nonlinear observation model and Gaussian approximations inherent to the \ac{ekf}. As with other Kalman-style methods, significant deviations from these assumptions may reduce the fidelity of the Bayesian update approximation.
\end{itemize}

These limitations suggest that \ac{name} is most suitable for applications involving structured and moderately smooth non-stationarity with available online supervision, while improving robustness to abrupt shifts, unsupervised deployment, and larger-scale models remains an important direction for future work.



\end{document}